\long\def\acks#1{\vskip 0.3in\noindent{\large\bf Acknowledgments}\vskip 0.2in
\noindent #1}
\newtheorem{definition}{Definition}
\newtheorem{assumption}{Assumption}
\newtheorem{proposition}{Proposition}
\newtheorem{theorem}{Theorem}
\newtheorem{lemma}{Lemma}
\newtheorem{corollary}{Corollary}
\newtheoremstyle{myrem}%
{2pt}% Space above
{2pt}% Space below
{}% Body font 
{}% Indent amount
{\bfseries}% ⟨Theorem head font⟩
{.}% ⟨Punctuation after theorem head ⟩
{.5em}% Space after theorem head 
{}%
\theoremstyle{myrem}
\newcommand{\N}{\mathbb{N}}
\newcommand{\R}{\mathbb{R}}
\newcommand{\E}{\mathbb{E}}
\newcommand{\cK}{\mathcal K}
\newcommand{\cO}{\mathcal O}
\newcommand{\cL}{\mathcal{L}}
\newcommand{\Proba}{\mathbb{P}}
\newcommand{\ind}[1]{\mathbf 1_{#1}}
\newcommand{\setint}[1]{\llbracket #1 \rrbracket}
\DeclareMathOperator*{\argmin}{argmin}
\DeclareMathOperator*{\argmax}{argmax}
\DeclareMathOperator{\Tr}{tr}
\DeclareMathOperator{\Var}{Var}
\DeclareMathOperator{\prox}{Prox}
\DeclareMathOperator{\sparse}{sparse}
\newcommand{\softhresh}{\mathcal{T}}
\newcommand{\diag}{\mathrm{diag}}
\newcommand{\sign}{\mathrm{sign}}
\newcommand{\binomial}{\mathrm{Bin}}
\newcommand{\op}{\mathrm{op}}
\newcommand{\cX}{\mathcal X}
\newcommand{\cY}{\mathcal Y}
\newcommand{\wh}{\widehat}
\title{Robust Methods for High-Dimensional Linear Learning}
\author{Ibrahim Merad%
\thanks{LPSM, UMR 8001, Universit\'e Paris Diderot, Paris, France}\\
\and
St\'ephane Ga\"iffas%
\thanks{LPSM, UMR 8001, Universit\'e Paris Diderot, Paris, France and DMA, École normale supérieure}
}
\begin{document}

\maketitle

\begin{abstract}
    We propose statistically robust and computationally efficient linear learning methods in the high-dimensional batch setting, where the number of features $d$ may exceed the sample size $n$.
    We employ, in a generic learning setting, two algorithms depending on whether the considered loss function is gradient-Lipschitz or not.
    Then, we instantiate our framework on several applications including vanilla sparse, group-sparse and low-rank matrix recovery. 
    This leads, for each application, to efficient and robust learning algorithms, that reach near-optimal estimation rates under heavy-tailed distributions and the presence of outliers.
    For vanilla $s$-sparsity, we are able to reach the $s\log (d)/n$ rate under heavy-tails and $\eta$-corruption, at a computational cost comparable to that of non-robust analogs.
    We provide an efficient implementation of our algorithms in an open-source \texttt{Python} library called \texttt{linlearn}, by means of which we carry out numerical experiments which confirm our theoretical findings together with a comparison to other recent approaches proposed in the literature.

    \medskip
    \noindent
    \emph{Keywords.} robust methods, heavy-tailed data, outliers, sparse recovery, mirror descent, generalization error
\end{abstract}

% \tableofcontents

\section{Introduction}

Learning from heavy tailed or corrupted data is a long pursued challenge in statistics receiving considerable attention in literature~\cite{huber1964robust, hampel1971, huber2004robust, HeavyTails, diakonikolas2019robust, audibert2011robust} and gaining an additional degree of complexity in the high-dimensional setting~\cite{lugosi2019regularization,lecue2020robust, dalalyan2019outlier, balakrishnan2017computationally, liu2019high, liu2020high, fan2021shrinkage}. 
Sparsity inducing penalization techniques~\cite{donoho2000high, buhlmann2011statistics} are the go-to approach for high-dimensional data and have found many applications in modern statistics~\cite{tibshirani1996regression, buhlmann2011statistics, donoho2000high, hastie2015statistical}. 
A clear favourite is the Least Absolute Shrinkage and Selection Operator (LASSO)~\cite{tibshirani1996regression}. Theoretical studies have shown that under the so-called Restricted Eigenvalue (RE) condition, the latter achieves a nearly-optimal estimation rate~\cite{bickel2009simultaneous, buhlmann2011statistics, negahban2012unified}. Further, a rich literature extensively studies the oracle performances of LASSO in various contexts and conditions~\cite{bunea2007sparsity, lounici2008sup, zhang2008sparsity, zhang2009some, zhao2006model, zou2006adaptive, van2008high, lecue2018regularization, bellec2018slope}. 
Other penalization techniques induce different sparsity patterns or lead to different statistical guarantees, such as, to cite but a few, SLOPE~\cite{bogdan2015slope,su2016slope} which is adaptive to the unknown sparsity and leads to the optimal estimation rate, OSCAR~\cite{bondell2008simultaneous} which induces feature grouping or group-$\ell_1$ penalization~\cite{yuan2006model, huang2010variable} which induces block-sparsity.  
Other approaches include for instance Iterative Hard Thresholding (IHT)~\cite{blumensath2009iterative, blumensath2010normalized, jain2014iterative, shen2017tight, jain2016structured} whose properties are studied under the Restricted Isometry Property (RIP).
Another close problem is low-rank matrix recovery, involving the nuclear norm as a low-rank inducing convex penalization~\cite{koltchinskii2011nuclear, candes2009exact, candes2011tight, rohde2011estimation, negahban2011estimation, negahban2012restricted}.

The high-dimensional statistical inference methods cited above are, however, not robust: theoretical guarantees are derived under light-tails (generally sub-Gaussian) and the i.i.d assumption. Unfortunately, these assumptions fail to hold in general, for instance, it is known that financial and biological data often displays heavy-tailed behaviour~\cite{fan2021shrinkage} and outliers or corruption are not uncommon when handling massive amounts of data which are tedious to thoroughly clean. Moreover, the majority of the previous references focus on the oracle performance of estimators as opposed to providing guarantees for explicit algorithms to compute them. 
A natural question is therefore : \emph{can one build alternatives to such high-dimensional estimators that are robust to heavy tails and outliers, that are computationally efficient and achieve rates similar to their non-robust counterparts ?}
Recent advances about robust mean estimation~\cite{catoni2012challenging, lugosi2021robust, diakonikolas2020outlier, Depersin2019RobustSE, lei2020fast} gave a strong impulse in the field of robust learning~\cite{lecue2020robust1, HeavyTails, pmlr-v97-holland19a, diakonikolas2019robust, cherapanamjeri2020optimal, bakshi2021robust}, including the high-dimensional setting~\cite{liu2019high,liu2020high, balakrishnan2017computationally} which led to significant progress towards a positive answer to this question.

However, to the best of our knowledge, the solutions proposed until now are all suboptimal in one way or another. The shortcomings either lie in the obtained statistical rate: which is sometimes significantly far from optimal, or in the robustness: most works consider heavy tailed and corrupted data separately and only very limited amounts of corruption, or in computational complexity: some corruption-filtering algorithms are too heavy and do not scale to real world applications.

In this paper, we propose explicit algorithms to solve multiple sparse estimation problems with high performances in all previous aspects. In particular, our algorithm for vanilla sparse estimation enjoys a nearly optimal statistical rate (up to a logarithmic factor), is simultaneously robust to heavy tails and strong corruption (when a fraction of the data is corrupted) and has a comparable computational complexity to a non robust method.

\subsection{Main contributions.} 

This paper combines non-Euclidean optimization algorithms and robust mean estimators of the gradient into explicit algorithms in order to achieve the following main contributions.

\begin{itemize}
    \item We propose a framework for robust high-dimensional linear learning in the batch setting using two linearly converging stage-wise algorithms for high-dimensional optimization based on Mirror Descent and Dual Averaging. These may be applied for smooth and non-smooth objectives respectively so that most common loss functions are covered. The previous algorithms may be plugged with an appropriate gradient estimator to obtain explicit robust algorithms for solving a variety of problems.
    
    \item The central application of our framework is an algorithm for ``vanilla'' $s$-sparse estimation reaching the nearly optimal $s\log(d)/n$ statistical rate in the batch setting by combining stage-wise Mirror Descent (Section~\ref{sec:md}) with a simple trimmed mean estimator of the gradient. This algorithm is simultaneously robust to heavy-tailed distributions and $\eta$-corruption of the data\footnote{We say that data is $\eta$-corrupted for some number $0<\eta <1/2$ if an $\eta$ fraction of the samples is replaced by arbitrary (and potentially adversarial) outliers after data generation.}. This improves over previous literature which considered the two issues separately or required a very restricted value of the corruption rate $\eta$.

    \item In addition to vanilla sparsity, we instantiate our procedures for group sparse estimation and low-rank matrix recovery, in which different metrics on the parameter space are induced and used to measure the statistical error on the gradient (Section~\ref{sec:applis}). For heavy-tailed data and $\eta$-corruption, the gradient estimator we propose for vanilla sparsity enjoys an optimal statistical rate with respect to the induced metric while the one proposed for group sparsity is nearly optimal up to a logarithmic factor. Moreover, for heavy tailed data and a limited number of outliers\footnote{For low-rank matrix recovery, our estimator is based on Median-Of-Means so that the number of tolerated outliers is up to $K/2$ where $K$ is the number of blocks used for estimation (see Section~\ref{sec:appliLowRank})}, our proposed gradient estimator for low-rank matrix recovery is nearly optimal. Thus, our solutions to each of these problems are the most robust yet in the literature.
    
    \item Our algorithms offer a good compromise between robustness and computational efficiency with the only source of overhead coming from the robust gradient estimation component. In particular, for vanilla sparse estimation, this overhead is minimal so that the asymptotic complexity of our procedure is \emph{equivalent} to that of non-robust counterparts. This is in contrast with previous works requiring costly sub-procedures to filter out corruption.
        
    \item We validate our results through numerical experiments using synthetic data for regression and real data sets for classification (Section~\ref{sec:exp}). Our experiments confirm our mathematical results and compare our algorithms to concurrent baselines from literature.
    
    \item All algorithms introduced in this paper as well as the main baselines from literature we use for comparisons are implemented and easily accessible in a few lines of code through our \texttt{Python} library called \texttt{linlearn}, open-sourced under the BSD-3 License on \texttt{GitHub} and available here\footnote{\url{https://github.com/linlearn/linlearn}}.
\end{itemize}

\subsection{Related Works}

The general problem of robust linear learning was addressed by~\cite{gaiffas2022robust} where the performance of coordinate gradient descent using various estimators was studied and experimentally evaluated. Several other works~\cite{HeavyTails, pensia2020robust,lecue2020robust1,pmlr-v97-holland19a} deal with this problem, however, they do not consider the high-dimensional setting.

The early work of~\cite{agarwal2012stochastic} focuses on vanilla sparse recovery in the stochastic optimization setting and uses a multistage annealed LASSO algorithm where the penalty shrinks progressively. The method reaches the nearly optimal $s\log(d)/n$ rate, however, it is not robust since the data is assumed i.i.d sub-Gaussian. The subsequent work of~\cite{sedghi2014multi} extends this framework to other problems such as additive sparse and low-rank matrix decomposition by changing the optimization algorithm but the sub-Gaussian assumption remains necessary.

More recently,~\cite{Juditsky2020SparseRB} proposed a stochastic optimization mirror descent algorithm which computes multiple solutions on disjoint subsets of the data and aggregates them with a Median-Of-Means type procedure. 
The final solution achieves the rate $s\log(d)/n$ under $s$-vanilla sparsity with sub-Gaussian deviation and an application to low-rank matrix recovery is also developed. 
This aggregation method can handle some but not all heavy-tailed data. 
For instance, if the data follows a Pareto($\alpha$) distribution then the analysis yields a statistical error with a factor of order $d^{1/\alpha}$ which is not acceptable in a high-dimensional setting. 
Moreover, the presence of outliers is not considered so that the given bounds do not measure the impact of corruption. Nevertheless, the combination of the restarted mirror descent optimization procedure proposed in~\cite[Algorithm 1]{Juditsky2020SparseRB} with the proper robust gradient estimators yields a fast and highly robust learning algorithm in the batch setting which we present in Section~\ref{sec:md}. We also exploit the same core ideas in Section~\ref{sec:da} to extend our framework to a wider range of objective functions.

Other works consider high-dimensional linear learning methods that are robust to corrupted data. 
An outlier robust method for mean and covariance estimation in the sparse $\eta$-contaminated\footnote{$\eta$-contamination refers to the case where the data is sampled from a mixed distribution $(1-\eta)P + \eta Q$ where $P$ is the true data distribution and $Q$ is an arbitrary one.} high-dimensional setting is proposed by~\cite{balakrishnan2017computationally} along with theoretical guarantees. 
By extension, these also apply to several problems of interest such as sparse linear estimation or sparse GLMs. The idea is to use an SDP relaxation of sparse PCA~\cite{d2004direct} in order to adapt the filtering approach from~\cite{diakonikolas2019robust}, which relies on the covariance matrix to detect outliers, to the high-dimensional setting. However, the need to solve SDP problems makes the algorithm computationally costly. In addition, the true data distribution is assumed Gaussian and the considered $\eta$-contamination framework is weaker than $\eta$-corruption which allows for adversarial outliers.

The previous ideas were picked up again in the later work of~\cite{liu2020high} who proposes a robust variant of IHT for sparse regression on $\eta$-corrupted data. Unfortunately, these results suffer from several shortcomings since data needs to be Gaussian with a known or sparse covariance matrix. 
Moreover, the gradient estimation subroutine, which is reminiscent of~\cite{balakrishnan2017computationally}, is computationally heavy since it requires solving SDP problems as well and a number of samples scaling as $s^2$ instead of $s$ is required. This seems to come from the fact that sparse gradients need to be estimated, in which case the $s^2$ dependence is unavoidable based on an oracle lower bound for such an estimation~\cite{diakonikolas2017statistical}.

Recently,~\cite{dalalyan2019outlier} derived oracle bounds for a robust estimator in the linear model with Gaussian design and a number of adversarially contaminated labels. 
Although optimal rates in terms of the corruption are achieved, this setting excludes corruption of the covariates and does not apply for heavy-tailed data distributions. \cite{minsker2022robust} similarly consider the linear model and derive oracle bounds for a robustified SLOPE objective which is adaptive to the sparsity level. Remarkably, they achieve the optimal $s\log(ed/s)/n$ dependence in the heavy-tailed corrupted setting. However, corruption is restricted to the labels and the dependence of the result thereon significantly degrades if the covariates or the noise are not sub-Gaussian. In contrast, the very recent work of~\cite{sasai2022robust} considers sparse estimation with heavy-tailed and $\eta$-corrupted data and derives a nearly optimal estimation bound using an algorithm which filters the data before running an $\ell_1$-penalized robust Huber regression which corresponds to a similar approach to~\cite{pensia2020robust} where the non sparse case was treated. Although the $s\log(d)/n$ rate is achieved with optimal robustness, this claim only applies for regression under the linear model with some restrictive assumptions such as zero mean covariates. In addition, little attention is granted to the practical aspect and no experiments are carried out. A later extension~\cite{sasai2022outlier} improves the statistical rate to $s\log(d/s)/n$ for sub-Gaussian covariates and, if the data covariance is known as well, better dependence on the corruption rate is obtained. Robust high-dimensional linear regression algorithms were recently surveyed by~\cite{filzmoser2021robust} with particular attention payed to methods based on dimension reduction, shrinkage and combinations thereof.

Finally,~\cite{liu2019high} proposes an IHT algorithm using robust coordinatewise gradient estimators. 
These results cover the heavy-tailed and $\eta$-corrupted settings separately thanks to Median-Of-Means~\cite{alon1999space, JERRUM1986169, nemirovskij1983problem} and Trimmed mean~\cite{pmlr-v28-chen13h, yin2018byzantine} estimators respectively. 
However, the corruption rate $\eta$ is restricted to be of order at most $O(1/(\sqrt{s}\log(nd)))$ and the question of elaborating an algorithm which is simultaneously robust to both corruption and heavy tails is left open. 

We summarize the settings and results of the previously mentioned works, along with ours on vanilla sparse estimation, in Table~\ref{table:comparison} which focuses on robust papers with explicit algorithms.

\begin{table}[!ht]%[H]
\centering
\begin{tabular}{l|ccccl}
\makecell{Method} & \makecell{Statistical \\rate} & \makecell{Iteration\\ Complexity} & \makecell{Data dist.\\and corruption} & \makecell{Loss} \\
\hline
\makecell{AMMD\\ This paper\\Section~\ref{sec:md}} & \makecell{$O\Big(\sqrt{s}\sqrt{\eta + \frac{\log(d/\delta)}{n}}\Big)$\\with $\Proba \geq 1-\delta$} & \makecell{$O(nd)$} & \makecell{$L_4$ covariates,\\$L_2$ labels,\\$\eta$-corruption.} & \makecell{Lip. smooth,\\ QM.} \\[5pt]
\makecell{AMDA\\ This paper\\Section~\ref{sec:da}} & \makecell{$O\Big(\sqrt{s}\sqrt{\eta + \frac{\log(d/\delta)}{n}}\Big)$\\with $\Proba \geq 1-\delta$} & \makecell{$O(nd)$} & \makecell{$L_4$ covariates,\\$L_2$ labels,\\$\eta$-corruption.} & \makecell{Lipschitz, \\PLM.} \\[5pt]
\makecell{\hyperlink{cite.balakrishnan2017computationally}{(Balakrishnan}\\\hyperlink{cite.balakrishnan2017computationally}{et al., 2017)}} & \makecell{$O\Big(\|\theta^\star\|_2 \big( \eta \log(1/\eta) +$\\$ s\sqrt{\log(d/\delta)/n} \big)\Big)$\\with $\Proba \geq 1-\delta$} & \makecell{$\Omega(nd^2 \!+\! d^3)$} & \makecell{Gaussian,\\ $\eta$-contamination.} & \makecell{LSQ, GLMs,\\Logit.} \\[5pt]
\makecell{\hyperlink{cite.liu2020high}{(Liu et al.,}\\\hyperlink{cite.liu2020high}{2020)}} & \makecell{$O\big(\eta \vee s\sqrt{\log(d/\delta)/n}\big)$\\with $\Proba \geq 1-\delta$} & \makecell{$\Omega(nd^2 \!+\! d^3)$} & \makecell{Gaussian,\\$\eta$-corruption.} & \makecell{LSQ.} \\[5pt]
\makecell{\hyperlink{cite.liu2019high}{(Liu et al.,}\\\hyperlink{cite.liu2019high}{2019)} (MOM)} & \makecell{$O\Big(\sqrt{s\log(d)/n}\Big)$\\with $\Proba \geq 1-d^{-2}$} & \makecell{$O(nd)$} & \makecell{$L_4$ covariates,\\linear/logit model.} & \makecell{LSQ, Logit.} \\[5pt]
\makecell{\hyperlink{cite.liu2019high}{(Liu et al.,}\\\hyperlink{cite.liu2019high}{2019)} (TMean)} & \makecell{$O\Big(\eta \sqrt{s}\log(nd)$\\ $+ \sqrt{s\log(d)/n}\Big)$\\with $\Proba \geq 1-d^{-2}$} & \makecell{$O(nd \log(n))$} & \makecell{sub-Gaussian,\\ $\eta$-corruption.} & \makecell{LSQ, Logit.} \\[5pt]
\makecell{\hyperlink{cite.Juditsky2020SparseRB}{(Juditsky et}\\\hyperlink{cite.Juditsky2020SparseRB}{al., 2020)}} & \makecell{$O\big(\!\sqrt{\!s\log(d)\!\log(1/\delta)/n}\big)$\\with $\Proba \geq 1-\delta$} & \makecell{$O(d)$\\(stochastic\\optim.)} & \makecell{$L_2$ gradient.} & \makecell{Lip. smooth,\\ QM.} \\[5pt]
\makecell{\hyperlink{cite.sasai2022robust}{(Sasai, 2022)}} & \makecell{$O\big(\sqrt{\eta}\!+\!\sqrt{\!s\!\log(d/\delta)/n}\big)$\\with $\Proba \geq 1-\delta$} & \makecell{Preliminary\\$\Omega(nd^2 \!+\! d^3)$\\then $O(nd)$} & \makecell{zero-mean \& $L_8$\\covariates,\\indep. noise,\\$n\gtrsim s^2\log(d/\delta)$.} & \makecell{Penalized\\Huber.} \\[5pt]
% \hline
\end{tabular}
\caption{Summary of the main hypotheses and results of our proposed algorithms and related works in the literature on vanilla sparse estimation. The statistical rate column gives the derived error bound on $\|\wh \theta - \theta^\star\|_2$ between the estimated and true parameter and the associated confidence. In the ``Data distribution and corruption'' column, rows with no reference to corruption correspond to methods which do not consider it. In the ``Loss'' column, the following abbreviations are used : QM = quadratic minorization (Assumption~\ref{asm:quadgrowth}), PLM = pseudo-linear minorization (Assumption~\ref{asm:pseudolingrowth}), LSQ = least squares, GLM = generalized linear model, Logit = Logistic, Lip. smooth = Lipschitz smooth (gradient Lipschitz).}
\label{table:comparison}
\end{table}

\subsection{Agenda}

The remainder of this document is structured as follows : Section~\ref{sec:setting} lays out the setting including the definition of the objective and our assumptions on the data.
Sections~\ref{sec:md} and~\ref{sec:da} define optimization algorithms based on Mirror Descent and Dual Averaging addressing the cases of smooth and non-smooth losses respectively. 
Both Sections state convergence results for their respective algorithms. 
Section~\ref{sec:applis} considers instantiations of our general setting to vanilla sparse, group sparse and low-rank matrix estimation for a general loss. 
In each case, the norm $\|\cdot\|$ and dual norm $\|\cdot\|_{*}$ are instantiated and a robust and efficient gradient estimator is proposed so that, combined with the results of Sections~\ref{sec:md} and~\ref{sec:da}, we obtain solutions with nearly optimal statistical rates (up to logarithmic terms) in each case. 
Finally, Section~\ref{sec:exp} presents numerical experiments on synthetic and real data sets which demonstrate the performance of our proposed methods and compare them with baselines from recent literature.

\section{Setting, Notation and Assumptions}\label{sec:setting}

We consider supervised learning from a data set $(X_i, Y_i)_{i=1}^n $ from which the majority is distributed as a random variable $(X, Y) \in \cX \times \cY$ where the covariate space $\cX$ is a high-dimensional Euclidean space and the label space $\cY$ is $\R$ or a finite set. 
The remaining minority of the data are called outliers and may be completely arbitrary or even adversarial. 
Given a loss function $\ell : \wh \cY \times \cY \to \R$  where $\wh \cY$ is the prediction space, our goal is to optimize the unobserved objective
\begin{equation}\label{eq:objective}
    \cL (\theta) = \E [\ell(\langle \theta, X \rangle , Y)]
\end{equation}
over a convex set of parameters $\Theta$, 
where the expectation is taken w.r.t. the joint distribution of $(X, Y)$.
Given an optimum $\theta^\star = \argmin_{\theta \in \Theta} \cL(\theta)$ (assumed unique), we are also interested in controlling the estimation error $\|\theta - \theta^\star\|$ where $\|\cdot\|$ is a norm on $\Theta$, which will be defined according to each specific problem (see Section~\ref{sec:applis} below). 
Moreover, we assume that the optimal parameter is sparse according to an abstract sparsity measure $S : \Theta \to \N$.
\begin{assumption}\label{asm:sparse}
The optimal solution $\theta^\star$ is $s$-sparse for some integer $s$ smaller than the problem dimension i.e. $S(\theta^\star) \leq s$. Additionally, for any $s$-sparse vector $\theta \in \Theta$ we have the inequality $\|\theta\| \leq \sqrt{s}\|\theta\|_2$ and an upper bound $\bar{s} \geq s$ on the sparsity is known.
\end{assumption}
The precise notion of sparsity will be determined later in Section~\ref{sec:applis} through the sparsity measure $m$ depending on the application at hand. The simplest case corresponds to the conventional notion of vector sparsity where $\cX = \R^d$ for some large $d$, $\Theta \subset \R^d$ and the sparsity measure $S(\theta) = \sum_{j\in \setint{d}}\ind{\theta_j \neq 0}$ counts the number of non-zero coordinates. However, as we intend to also cover other forms of sparsity later, we do not fix this setting right away. Note that the required knowledge of $\bar{s} \geq s$ in Assumption~\ref{asm:sparse} is common in the thresholding based sparse learning literature~\cite{blumensath2009iterative, blumensath2010normalized, jain2014iterative, jain2016structured, Juditsky2020SparseRB}. Adaptive methods to unknown sparsity exist~\cite{bogdan2015slope, bellec2018slope, su2016slope} although designing a robust version thereof is beyond the scope of this work.

Since the objective~\eqref{eq:objective} is not observed due to the distribution of the data being unknown, statistical approximation will be necessary in order to recover an approximation of $\theta^\star$. Instead of estimating the objective itself (which is of limited use for optimization), we will rather compute estimates of the gradient
\begin{equation}
    g(\theta) := \nabla_{\theta} \cL(\theta)
\end{equation}
in order to run gradient based optimization procedures. 
Note that, since we consider the high-dimensional setting, a standard gradient descent approach is excluded since it would incur an error strongly depending on the problem dimension. In order to avoid this, one must use a non Euclidean optimization method as is customary for high-dimensional problems~\cite{Juditsky2020SparseRB, agarwal2012stochastic}.

As commonly stated in the robust statistics literature \cite{catoni2012challenging, lugosi2021robust, lugosi2019mean}, estimating an expectation using a conventional empirical mean only yields values with far from optimal deviation properties in the general case. Several estimators have been proposed which enjoy sub-Gaussian deviations from the true mean and robustness to corruption. Notable examples in the univariate case are the median-of-means (MOM) estimator~\cite{alon1999space, JERRUM1986169, nemirovskij1983problem}, Catoni's estimator~\cite{catoni2012challenging} and the trimmed mean~\cite{lugosi2021robust}. However, in the multivariate case (estimating the mean of a random vector), the optimal sub-Gaussian estimation rate cannot be obtained by a straightforward extension of the previous methods and a line of works~\cite{lugosi2019sub, Hopkins2018MeanEW, pmlr-v99-cherapanamjeri19b, Depersin2019RobustSE, lugosi2021robust, lei2020fast} has pursued elaborating efficient algorithms to achieve it. Most recently, \cite{diakonikolas2020outlier} managed to show that stability based estimators enjoy sub-Gaussian deviations while being robust to corruption of a fraction of the data. However, it is important to remember that all the works we just mentioned measure the estimation error using the Euclidean norm while many other choices are possible which may require the estimation algorithm to be adapted in order to achieve optimal deviations with respect to the chosen norm. This aspect was studied in~\cite{lugosi2019near} who gave a norm-dependent formula for the optimal deviation and an algorithm to achieve it, although the latter has exponential complexity and does not consider the presence of outliers.

This is an important aspect to keep in mind in our high-dimensional setting since we will be measuring the statistical error on the gradient using the dual norm $\|\cdot\|_{*}$ of $\|\cdot\|$ which will never be the Euclidean one.
\begin{equation}
    \|v\|_{*} = \sup_{\|x\| \leq 1} \langle v, x \rangle
\end{equation}

Of course, apart from the way it is measured, the quality of the estimations one can obtain also crucially depends on the assumptions made on the data. We formally state ours here. We denote $|A|$ as the cardinality of a finite set $A$ and use the notation $\setint k = \{ 1, \ldots, k\}$ for any integer $k \in \N \setminus \{ 0 \}$. 

\begin{assumption}
\label{asm:data}
The indices of the training samples $\setint n$ can be divided into two disjoint subsets $\setint n = \mathcal{I} \cup \mathcal{O}$ of \emph{outliers} $\mathcal O$ and \emph{inliers} $\mathcal I$ for which  
we assume the following\textup:~$(a)$ we have $|\mathcal I| > |\mathcal O|;$~$(b)$ the pairs $(X_i, Y_i)_{i \in \mathcal I}$ are i.i.d with distribution $P$ and the outliers $(X_i, Y_i)_{i \in \mathcal O}$ are arbitrary$;$~$(c)$ the distribution $P$ is such that :
\begin{equation}
    \label{eq:XY-moments}
    \E\big[ \|X\|_2^4 \big] < +\infty, \quad \E\big[\|Y X\|_2^2\big] < +\infty \quad
    \text{and} \quad \E\big[|Y|^2\big] < +\infty.
\end{equation}
Moreover\textup, the loss function $\ell$ admits constants $C_{\ell, 1}, C_{\ell, 2}, C'_{\ell, 1}, C'_{\ell, 2}> 0$ such that for all $z, y \in \widehat{\cY}\times \cY$ \textup:
\begin{equation*}
    |\ell(z, y)| \leq C_{\ell, 1} + C_{\ell, 2}|z-y|^2 \quad \text{and} \quad |\ell'(z, y)| \leq C_{\ell, 1}' + C_{\ell, 2}'|z-y|,
\end{equation*}
where $\ell'$ is the derivative of $\ell$ in its first argument.
\end{assumption}
The above hypotheses are sufficient so that the objective function and its gradient exist for any parameter $\theta$ and the gradient admits a second moment. The distribution $P$ is allowed to be heavy-tailed and the conditions on $\ell$ do not go far beyond limiting it to a quadratic behavior and are satisfied by common loss functions for regression and classification\footnote{Including the square loss, the absolute loss, Huber's loss, the logistic loss and the Hinge loss.}. Note that depending on the loss function used and the moment requirements of gradient estimation, the previous moments assumption can be weakened as in~\cite[Assumption 2]{gaiffas2022robust} for instance. However, we stick to this version in this work for simplicity. Depending on the gradient estimator, the number of outliers $|\cO|$ will be bounded in the subsequent statements either by a constant fraction $\eta n$ ($\eta$-corruption) of the sample for some $0 < \eta <1/2$, or by a constant number.

In the two following sections, we will assume that we have a gradient estimator $\widehat{g}$ at our disposal such that for all $\theta \in \Theta$ we have $\widehat g(\theta) = g(\theta) + \epsilon(\theta)$ where $\epsilon(\theta)$ represents the random estimation error of the gradient at $\theta$. Same as for the norm $\|\cdot\|$, this estimator will be precisely defined for each individual application in Section~\ref{sec:applis} in such a way that the error $\|\widehat{g}(\theta) - g(\theta)\|_* = \|\epsilon(\theta)\|_*$ is (nearly) optimally controlled.

In the sequel, we interchangeably use the terms \emph{statistical rate}, \emph{estimation rate}, \emph{deviation rate} or simply \emph{rate} to designate the statistical dependence of the bounds we obtain on the excess risk $\cL(\widehat{\theta}) - \cL(\theta^\star)$ and the parameter error $\|\widehat{\theta} - \theta^\star\|_2$ for a general estimator $\widehat{\theta}.$ This may lead to some confusion since the excess risk is only comparable to the square error $\|\widehat{\theta} - \theta^\star\|_2^2$ up to a constant factor. However, the reader should be able to distinguish the two situations based on context. Note that the previous terms should not be confused with \emph{optimization rate} and \emph{corruption rate}.

\section{The Smooth Case with Mirror Descent}\label{sec:md}

In this section we will assume the loss $\ell$ is smooth, formally :

\begin{assumption}
\label{asm:lipsmoothloss}
For any $y \in \mathcal Y,$ the loss $z \mapsto \ell(z, y)$ is convex, differentiable and $\gamma$-smooth meaning that 
\begin{equation}\label{eq:lipsmooth}
    |\ell'(z, y) - \ell'(z', y)| \leq \gamma |z - z'|
\end{equation} for some $\gamma > 0$ and all $z, z' \in \wh \cY$, where the derivative is taken w.r.t. the first argument.
\end{assumption}
The above assumption is stated in all generality for $z, z'$ belonging to the prediction space $\wh \cY$. For regression or binary classification tasks, we will have $\wh \cY = \R$ and $\ell '(\cdot, y) \in \R$ so that the absolute values are enough to interpret the required inequality. Nonetheless, it can also be extended for $K$-way multiclass classification where $\wh \cY = \R^K$ and $\ell '(\cdot, y) \in \R^K$, in which case the absolute values on both sides of the above inequality should be interpreted as Euclidean norms.

We also make the following quadratic growth assumption~\cite[Definition 4]{necoara2019linear}.

\begin{assumption}\label{asm:quadgrowth}
Let $\theta^\star \in \Theta$ be the optimum of the objective $\cL$ and $\|\cdot\|_2$ the usual Euclidean norm. There exists a constant $\kappa > 0$ such that for all $\theta \in \Theta:$
\begin{equation}\label{eq:quadminoration}
    \cL(\theta) - \cL(\theta^\star) \geq \kappa \|\theta - \theta^\star\|_2^2.
\end{equation}
\end{assumption}
Assumption~\ref{asm:quadgrowth} is similar to but weaker than strong convexity because it only requires the quadratic minorization to hold around the optimum $\theta^\star$ whereas a strongly convex function is minorized by a quadratic function at every point. In the linear regression setting with samples $(X,Y) \in \R^d\times \R$, it is easy to see that the above condition holds as soon as the data follows a distribution with non-singular covariance $\Sigma = \E XX^\top.$ A more general setting where this condition holds is also given in~\cite[Section 3.1]{Juditsky2020SparseRB}. In comparison, the commonly used restricted eigenvalue or compatibility conditions~\cite{bickel2009simultaneous, castillo2015bayesian} roughly require the empirical covariance $\widehat{\Sigma} = \frac{1}{n}\sum_{i\leq n} X_i X_i^\top$ to satisfy $\|\widehat{\Sigma} v\|_2 \geq \kappa \|v\|_2$ for all approximately $s$-sparse vectors $v\in \R^d.$ This was shown to hold for covariates following some well known distributions (e.g. Gaussian with non singular covariance) with a sufficient sample count $n$~\cite{raskutti2010restricted}. However, this is clearly more constraining than Assumption~\ref{asm:quadgrowth}. Some variants of the compatibility condition are formulated in terms of the population covariance $\Sigma$~\cite{van2009conditions, buhlmann2011statistics} but these serve as a basis to show oracle inequalities for LASSO rather than the study of an implementable algorithm.

As a consequence of Assumptions~\ref{asm:data} and~\ref{asm:lipsmoothloss}, the objective gradient is $L$-Lipschitz continuous for some constant $L > 0$, meaning that we have :
\begin{equation}
    \|g(\theta) - g(\theta')\|_* \leq L\|\theta - \theta'\| \quad \forall \theta, \theta' \in \Theta.
\end{equation}
This property is necessary to establish the convergence of the Mirror Descent algorithm~\cite{nemirovskij1983problem} proposed in this section.
Since we adopt a multistage mirror descent procedure as done in~\cite{Juditsky2020SparseRB}, our framework is also similar to theirs. 

\begin{definition}\label{def:dgf}
    A function $\omega : \Theta \to \R$ is a distance generating function if it is a real convex function over $\Theta$ which satisfies :
    \begin{enumerate}
        \item $\omega$ is continuously differentiable and strongly convex w.r.t. the norm $\|\cdot\|$ i.e.
    \begin{equation*}
        \langle \nabla \omega (\theta) - \nabla \omega (\theta'), \theta - \theta' \rangle \geq \|\theta - \theta'\|^2.
    \end{equation*}
    \item We have $\omega(\theta) \geq \omega(0) = 0$ for all $\theta \in\Theta$. 
    \item There exists a constant $\nu > 0$ called the quadratic growth constant such that we have \textup:
    \begin{equation}\label{eq:dgf_quad_growth}
        \omega(\theta) \leq \nu \|\theta\|^2 \quad \forall \theta \in \Theta.
    \end{equation}
    \end{enumerate} 
\end{definition}
We shall see, for individual applications, that one needs to choose $\omega$ in such a way that it is strongly convex and the constant $\nu$ has only a light dependence on the dimension. For a reference point $\theta_0 \in \Theta$, we define $\omega_{\theta_0}(\theta) = \omega(\theta - \theta_0)$ and the associated Bregman divergence :
\begin{equation*}
    V_{\theta_0}(\theta, \theta') = \omega_{\theta_0}(\theta) - \omega_{\theta_0}(\theta') - \langle \nabla \omega_{\theta_0}(\theta'), \theta - \theta' \rangle.
\end{equation*}
Given a step size $\beta > 0$ and a dual vector $u \in \Theta^*$, we define the following proximal mapping :
\begin{align*}
    \prox_{\beta}(u, \theta ; \theta_0, \Theta) :=& \argmin_{\theta' \in \Theta} \{ \langle \beta u, \theta' \rangle + V_{\theta_0}(\theta', \theta)\} \\
    =& \argmin_{\theta' \in \Theta} \{ \langle \beta u - \nabla \omega_{\theta_0} (\theta), \theta' \rangle + \omega_{\theta_0}(\theta')\}.
\end{align*}
The previous operator yields the next iterate of Mirror Descent for previous iterate $\theta$, gradient $u$ and step size $\beta$ with Bregman divergence defined according to the reference point $\theta_0$. Ideally, we would plug $g(\theta)$ as gradient $u$ but since the true gradient is not observed, we replace it with the estimator $\widehat{g}(\cdot)$. All in all, given an initial parameter $\theta_0$ we obtain the following iteration for Mirror Descent :
\begin{equation}\label{eq:iterMDuncorrected}
    \theta_{t+1} = \prox_{\beta}(\widehat{g}(\theta_t), \theta_t; \theta_0, \Theta),
\end{equation} 
with a step size $\beta$ to be defined later according to problem parameters. The previous proximal operator can be computed in closed form in each of the applications we consider in Section~\ref{sec:applis}, see Appendix~\ref{apd:closed-form-prox} for details. We state the convergence properties of the above iteration in the following proposition.
\begin{proposition}\label{prop:uncorrectedMD}
Grant Assumptions~\ref{asm:data} and \ref{asm:lipsmoothloss} so that the objective $\cL$ is $L$-Lipschitz-smooth for some $L>0$. Let mirror descent be run with constant step size $\beta \leq 1/L$ starting from $\theta_0 \in \Theta$ with $\Theta = B_{\|\cdot \|}(\theta_0, R)$ for some radius $R>0$. Let $\theta_1, \dots, \theta_T$ denote the resulting iterates and $\widehat{\theta}_T = \sum_{t=1}^T \theta_t/T,$ then the following inequality holds \textup:
\begin{align*}
    \cL(\widehat{\theta}_T) - \cL(\theta^\star) &\leq \frac{1}{T}\Big(\frac{1}{\beta }V_{\theta_0}(\theta^\star, \theta_0) + \sum_{t=0}^{T-1} \langle \epsilon_t, \theta^\star - \theta_{t+1} \rangle \Big) \\
    &\leq \frac{\nu R^2}{\beta T} + 2\bar{\epsilon}R
\end{align*}
where $\epsilon_t = \widehat{g}(\theta_t) - g(\theta_t)$ and $\bar{\epsilon} = \max_{t=0\dots T-1}\|\epsilon_t\|_{*}.$
\end{proposition}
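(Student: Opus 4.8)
The plan is to prove a single per-step descent inequality for $\cL(\theta_{t+1})-\cL(\theta^\star)$, sum it over $t=0,\dots,T-1$ so that the Bregman divergence terms telescope, and then invoke convexity of $\cL$ through Jensen's inequality to transfer the bound from the iterates $\theta_{t+1}$ to their average $\widehat\theta_T$. The second displayed inequality is then obtained by bounding the two remaining ingredients, $V_{\theta_0}(\theta^\star,\theta_0)$ and the noise inner products, using the distance generating function properties and the fact that $\Theta=B_{\|\cdot\|}(\theta_0,R)$.

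The per-step inequality rests on the ``three-point'' consequence of the proximal update. Since $\theta_{t+1}$ minimizes $\theta'\mapsto\langle\beta\widehat{g}(\theta_t),\theta'\rangle+V_{\theta_0}(\theta',\theta_t)$ over $\Theta$, its first-order optimality condition combined with the Bregman identity $\langle\nabla\omega_{\theta_0}(\theta_{t+1})-\nabla\omega_{\theta_0}(\theta_t),\theta^\star-\theta_{t+1}\rangle = V_{\theta_0}(\theta^\star,\theta_t)-V_{\theta_0}(\theta^\star,\theta_{t+1})-V_{\theta_0}(\theta_{t+1},\theta_t)$ yields
\[
  \langle\widehat{g}(\theta_t),\theta_{t+1}-\theta^\star\rangle \;\leq\; \tfrac{1}{\beta}\big(V_{\theta_0}(\theta^\star,\theta_t)-V_{\theta_0}(\theta^\star,\theta_{t+1})-V_{\theta_0}(\theta_{t+1},\theta_t)\big).
\]
Separately, the $L$-smoothness of $\cL$ gives the descent lemma $\cL(\theta_{t+1})\leq\cL(\theta_t)+\langle g(\theta_t),\theta_{t+1}-\theta_t\rangle+\tfrac{L}{2}\|\theta_{t+1}-\theta_t\|^2$ (valid for the dual-norm Lipschitz gradient via its integral form), while convexity gives $\cL(\theta_t)-\cL(\theta^\star)\leq\langle g(\theta_t),\theta_t-\theta^\star\rangle$. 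Adding these, regrouping into $\langle g(\theta_t),\theta_{t+1}-\theta^\star\rangle$, substituting $g(\theta_t)=\widehat{g}(\theta_t)-\epsilon_t$, and inserting the three-point bound produces
\[
  \cL(\theta_{t+1})-\cL(\theta^\star) \leq \tfrac{1}{\beta}\big(V_{\theta_0}(\theta^\star,\theta_t)-V_{\theta_0}(\theta^\star,\theta_{t+1})\big) + \big(\tfrac{L}{2}-\tfrac{1}{2\beta}\big)\|\theta_{t+1}-\theta_t\|^2 + \langle\epsilon_t,\theta^\star-\theta_{t+1}\rangle,
\]
where I used strong convexity of $\omega$, namely $V_{\theta_0}(\theta_{t+1},\theta_t)\geq\tfrac12\|\theta_{t+1}-\theta_t\|^2$. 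The hard part is precisely this term-matching: one needs the step-size restriction $\beta\leq 1/L$ so that $\tfrac{L}{2}-\tfrac{1}{2\beta}\leq 0$, which makes the second-order term vanish and is exactly what couples smoothness, strong convexity, and the choice of $\beta$. Summing over $t$ telescopes the Bregman differences to $V_{\theta_0}(\theta^\star,\theta_0)-V_{\theta_0}(\theta^\star,\theta_T)\leq V_{\theta_0}(\theta^\star,\theta_0)$, and Jensen's inequality $\cL(\widehat\theta_T)\leq\tfrac{1}{T}\sum_{t=1}^T\cL(\theta_t)$ delivers the first claimed bound.

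For the second inequality I would bound each surviving ingredient. Because property~$(2)$ makes $0$ a minimizer of $\omega$, we have $\nabla\omega(0)=0$, hence $\nabla\omega_{\theta_0}(\theta_0)=0$ and $V_{\theta_0}(\theta^\star,\theta_0)=\omega(\theta^\star-\theta_0)\leq\nu\|\theta^\star-\theta_0\|^2\leq\nu R^2$ by growth property~$(3)$ together with $\theta^\star\in\Theta=B_{\|\cdot\|}(\theta_0,R)$. For the noise terms, Hölder's inequality gives $\langle\epsilon_t,\theta^\star-\theta_{t+1}\rangle\leq\|\epsilon_t\|_{*}\,\|\theta^\star-\theta_{t+1}\|\leq 2R\bar\epsilon$, since both $\theta^\star$ and $\theta_{t+1}$ lie in the ball of radius $R$ centered at $\theta_0$ and $\|\epsilon_t\|_{*}\leq\bar\epsilon$; summing over the $T$ indices contributes $2R\bar\epsilon T$. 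Dividing through by $T$ then gives $\tfrac{\nu R^2}{\beta T}+2\bar\epsilon R$, completing the argument.
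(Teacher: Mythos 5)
Your proposal is correct and follows essentially the same route as the paper's proof in Appendix A.2.1: smoothness plus convexity to reduce to $\langle g(\theta_t),\theta_{t+1}-\theta^\star\rangle + \tfrac{L}{2}\|\theta_{t+1}-\theta_t\|^2$, the prox optimality condition combined with the three-point Bregman identity, cancellation of the quadratic term via $\beta\leq 1/L$ and strong convexity of $\omega$, then telescoping, Jensen, and the final bounds $V_{\theta_0}(\theta^\star,\theta_0)\leq\nu R^2$ and $\langle\epsilon_t,\theta^\star-\theta_{t+1}\rangle\leq 2R\bar\epsilon$. The only cosmetic difference is that you state the three-point inequality up front rather than substituting the optimality condition first, which changes nothing of substance.
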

Proposition~\ref{prop:uncorrectedMD} is proven in Appendix~\ref{proof:uncorrectedMD} based on~\cite[Proposition 2.1]{Juditsky2020SparseRB} and quantifies the progress of mirror descent on the objective value while measuring the impact of the gradient errors. 
The original version in~\cite{Juditsky2020SparseRB} considers a stochastic optimization problem in which a new sample arrives at each iteration providing an unbiased estimate of the gradient so that it is possible to obtain a bound with optimal quadratic dependence on the statistical error. Though the above result is suboptimal in this respect, we will show in the sequel that an optimal statistical rate can still be achieved using a multistage procedure.

Notice that the previous statement only provides guarantees for the average $\widehat{\theta}_T = \sum_{t=1}^T \theta_t/T$. While this is commonplace for online settings, we intuitively expect the last iterate $\theta_T$ to be the best estimate of $\theta^\star$ in our batch setting where all the data is available from the beginning.

In order to address this issue, we define a \emph{corrected} proximal operator given an upper bound $\bar{\epsilon}$ on the statistical error : 
\begin{equation}\label{eq:iterMDcorrected}
    \widehat{\prox}_{\beta}(u, \theta; \theta_0, \Theta) = \argmin_{\theta'\in\Theta} \{\langle \beta u, \theta' \rangle + \beta\bar{\epsilon}\|\theta' - \theta\| + V_{\theta_0}(\theta', \theta)\}.
\end{equation}
For this new operator, the following statement applies.
\begin{proposition}\label{prop:correctedMD}
In the setting of Proposition~\ref{prop:uncorrectedMD}, let mirror descent be similarly run with constant step size $\beta \leq 1/L$ starting from $\theta_0 \in \Theta$ with $\Theta = B_{\|\cdot \|}(\theta_0, R)$ for some radius $R>0$. Let $\theta_1, \dots, \theta_T$ denote the resulting iterates obtained through $\theta_{t+1} = \widehat{\prox}_{\beta}(\widehat{g}(\theta_t), \theta_t;\theta_0, \Theta)$ then the following inequality holds \textup:
\begin{align*}
    \cL(\theta_T) - \cL(\theta^\star) &\leq \frac{1}{T}\Big(\frac{1}{\beta }V_{\theta_0}(\theta^\star, \theta_0) + 2\sum_{t=0}^{T-1} \langle \epsilon_t, \theta^\star - \theta_{t+1} \rangle \Big) \\
    &\leq \frac{\nu R^2}{\beta T} + 4\bar{\epsilon}R,
\end{align*}
where $\bar{\epsilon} = \max_{t=0\dots T-1}\|\epsilon_t\|_{*}$.
\end{proposition}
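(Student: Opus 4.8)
The plan is to mirror the descent analysis behind Proposition~\ref{prop:uncorrectedMD}, but to exploit the extra penalty $\beta\bar\epsilon\|\theta'-\theta\|$ in the corrected operator~\eqref{eq:iterMDcorrected} in order to upgrade the averaged guarantee into a last-iterate one. Two ingredients drive the argument: a per-step inequality of mirror-descent type, and a monotone descent property $\cL(\theta_{t+1})\leq\cL(\theta_t)$ which the correction term is specifically designed to enforce. Once both are in hand, telescoping the Bregman terms and invoking monotonicity to replace the running average by $\theta_T$ yields the claim, the factor $2$ being the footprint left by the correction.

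First I would write down the first-order optimality condition for the corrected proximal step. Since the added term $\theta'\mapsto\beta\bar\epsilon\|\theta'-\theta_t\|$ is non-smooth, the condition now features a subgradient: there exists $\hat\xi\in\partial\|\cdot\|(\theta_{t+1}-\theta_t)$, satisfying $\langle\hat\xi,\theta_{t+1}-\theta_t\rangle=\|\theta_{t+1}-\theta_t\|$ and $\|\hat\xi\|_*\leq1$, such that for every $\theta'\in\Theta$,
\[
\langle\beta\hg(\theta_t)+\beta\bar\epsilon\hat\xi,\,\theta_{t+1}-\theta'\rangle \leq \langle\nabla\omega_{\theta_0}(\theta_{t+1})-\nabla\omega_{\theta_0}(\theta_t),\,\theta'-\theta_{t+1}\rangle .
\]
Applying the three-point identity
\[
\langle\nabla\omega_{\theta_0}(\theta_{t+1})-\nabla\omega_{\theta_0}(\theta_t),\,\theta'-\theta_{t+1}\rangle = V_{\theta_0}(\theta',\theta_t)-V_{\theta_0}(\theta',\theta_{t+1})-V_{\theta_0}(\theta_{t+1},\theta_t),
\]
specializing to $\theta'=\theta^\star$, splitting $\hg(\theta_t)=g(\theta_t)+\epsilon_t$, and lower-bounding $\langle g(\theta_t),\theta_{t+1}-\theta^\star\rangle$ by $\cL(\theta_{t+1})-\cL(\theta^\star)-\tfrac{L}{2}\|\theta_{t+1}-\theta_t\|^2$ (convexity of $\cL$ plus $L$-smoothness) yields the per-step bound. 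The strong-convexity estimate $V_{\theta_0}(\theta_{t+1},\theta_t)\geq\tfrac12\|\theta_{t+1}-\theta_t\|^2$ together with $\beta\leq 1/L$ absorbs all quadratic terms, while the triangle inequality $\|\theta^\star-\theta_t\|\leq\|\theta^\star-\theta_{t+1}\|+\|\theta_{t+1}-\theta_t\|$ cancels the penalty $-\beta\bar\epsilon\|\theta_{t+1}-\theta_t\|$ and leaves a residual $\beta\bar\epsilon\|\theta^\star-\theta_{t+1}\|$. Since $\langle\epsilon_t,\theta^\star-\theta_{t+1}\rangle\leq\bar\epsilon\|\theta^\star-\theta_{t+1}\|$, this residual is of the same magnitude as the statistical error term, which is precisely what produces the factor $2$ in front of the sum.

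The decisive step is the monotone descent property. Re-running the optimality inequality with the choice $\theta'=\theta_t$ collapses the Bregman terms to $-V_{\theta_0}(\theta_{t+1},\theta_t)\leq-\tfrac12\|\theta_{t+1}-\theta_t\|^2$, while the subgradient contributes $\langle\beta\bar\epsilon\hat\xi,\theta_{t+1}-\theta_t\rangle=\beta\bar\epsilon\|\theta_{t+1}-\theta_t\|$. After substituting $\hg=g+\epsilon_t$ and using duality $\langle\epsilon_t,\theta_{t+1}-\theta_t\rangle\geq-\bar\epsilon\|\theta_{t+1}-\theta_t\|$, the penalty exactly cancels the worst-case error and one obtains $\langle g(\theta_t),\theta_{t+1}-\theta_t\rangle\leq-\tfrac{1}{2\beta}\|\theta_{t+1}-\theta_t\|^2$. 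Feeding this into the smoothness inequality $\cL(\theta_{t+1})\leq\cL(\theta_t)+\langle g(\theta_t),\theta_{t+1}-\theta_t\rangle+\tfrac{L}{2}\|\theta_{t+1}-\theta_t\|^2$ and using $\beta\leq 1/L$ gives $\cL(\theta_{t+1})\leq\cL(\theta_t)$. This is where the correction earns its keep: without it the cross term $\langle\epsilon_t,\theta_{t+1}-\theta_t\rangle$ could not be controlled and the last iterate need not improve on the average.

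To conclude, I would sum the per-step inequality over $t=0,\dots,T-1$: the Bregman terms telescope to $V_{\theta_0}(\theta^\star,\theta_0)-V_{\theta_0}(\theta^\star,\theta_T)\leq V_{\theta_0}(\theta^\star,\theta_0)$, and monotonicity gives $T\,(\cL(\theta_T)-\cL(\theta^\star))\leq\sum_{t=0}^{T-1}(\cL(\theta_{t+1})-\cL(\theta^\star))$. Dividing by $\beta T$ produces the first displayed inequality, and the second follows from $V_{\theta_0}(\theta^\star,\theta_0)=\omega(\theta^\star-\theta_0)\leq\nu\|\theta^\star-\theta_0\|^2\leq\nu R^2$ together with $\|\theta^\star-\theta_{t+1}\|\leq 2R$ and $\|\epsilon_t\|_*\leq\bar\epsilon$, which bound each error contribution by $2\bar\epsilon R$. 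I expect the main obstacle to be the careful bookkeeping around the non-smooth penalty --- in particular establishing monotone descent, where the norm-subgradient identity and the exact cancellation of the estimation error against the correction must be handled precisely rather than through the usual smooth proximal calculus.
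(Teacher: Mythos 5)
Your proposal is correct and follows essentially the same route as the paper's proof: the subgradient optimality condition of the corrected prox combined with the three-point identity gives the per-step bound, taking the comparison point $\theta_t$ yields the monotone descent $\cL(\theta_{t+1})\leq\cL(\theta_t)$ (which is exactly the role the paper assigns to the correction term), and telescoping plus monotonicity transfers the averaged bound to the last iterate with the residual $\bar{\epsilon}\|\theta^\star-\theta_{t+1}\|$ accounting for the factor $2$. The only differences are cosmetic, e.g.\ you re-derive monotonicity directly from the optimality condition and the smoothness inequality rather than by substituting $\phi=\theta_t$ into the already-aggregated per-step bound, and you route the bound on the correction cross term through a triangle inequality where H\"older's inequality with $\|\hat\xi\|_*\leq 1$ suffices.
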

The proof of Proposition~\ref{prop:correctedMD} is given in Appendix~\ref{proof:correctedMD} and mainly differs from that of Proposition~\ref{prop:uncorrectedMD} in that the introduced correction  allows to show a monotonous decrease of the objective i.e. $\cL(\theta_{t+1}) \leq \cL(\theta_{t})$ letting us draw the conclusion on the last iterate. Nevertheless, we suspect that the correction is not really needed for this bound to hold on $\theta_T$ and consider it rather as an artifact of our proof.

Propositions~\ref{prop:uncorrectedMD} and~\ref{prop:correctedMD} only state a linear dependence of the final excess risk on the statistical error $\bar{\epsilon}$ which leads to a suboptimal statistical rate of $1/\sqrt{n}$. However, the optimal rate of $1/n$ can be achieved by leveraging the sparsity condition on $\theta^\star$ (Assumption~\ref{asm:sparse}) and the quadratic growth condition (Assumption~\ref{asm:quadgrowth}) upon running \emph{multiple stages} of Mirror Descent~\cite{juditsky2011first, Juditsky2020SparseRB}. The idea is that by factoring these two assumptions in, for $T$ big enough in Proposition~\ref{prop:correctedMD} and given $\Bar{s}\geq s,$ it can be shown that the closest $\bar{s}$-sparse element $\sparse_{\Bar{s}}(\theta_T)$ to the last iterate $\theta_T$ is such that $\Theta' = B_{\|\cdot \|}(\sparse_{\Bar{s}}(\theta_T), R')\ni \theta^\star$ with $R' < R.$ Therefore, $\sparse_{\Bar{s}}(\theta_T)$ can serve as the starting point of a new stage of mirror descent on the smaller domain $\Theta'.$ By repeating this trick multiple times, we obtain the following multistage mirror descent algorithm.

\paragraph{Algorithm : Approximate Multistage Mirror Descent (AMMD)}

\begin{itemize}
    \item \textit{Initialization}: Initial parameter $\theta^{(0)}$ and $R>0$ such that $\theta^{\star} \in \Theta := B_{\|\cdot\|}(\theta_0, R)$.
    
    Number of stages $K > 0$. Step size $\beta \leq 1/L$. Quadratic minorization constant $\kappa$.
    
    High probability upperbound $\bar{\epsilon}$ on the error $\|\widehat{g}(\theta) - g(\theta)\|_*$.
    
    Upperbound $\bar{s}$ on the sparsity $s$.
    
    \item Set $R_0 = R$.

    \item Loop over stages $k = 1\dots K$ :
    \begin{itemize}
        \item Set $\theta^{(k)}_0 = \theta^{(k-1)}$ and $\Theta_k = B_{\|\cdot\|}(\theta^{(k)}_0, R_{k-1}).$
        \item Run iteration \begin{equation*}
            \theta^{(k)}_{t+1} = \widehat{\prox}_{\beta}\big(\widehat{g}(\theta^{(k)}_t), \theta^{(k)}_t; \theta^{(k)}_0, \Theta_k \big),
        \end{equation*} for $T_k$ steps with $T_k = \Big\lceil \frac{\nu R_{k-1} }{\beta \bar{\epsilon}}\Big\rceil$.
        \item Set $\theta^{(k)} = \sparse_{\bar{s}} (\widetilde{\theta}^{(k)})$ where $\widetilde{\theta}^{(k)} = \theta^{(k)}_{T_k}$.
        \item Set $R_k = \frac{1}{2}(R_{k-1} + \frac{40\bar{s}\bar{\epsilon}}{\kappa})$.

    \end{itemize}
    \item \textit{Output}: The final stage estimate $\theta^{(K)}$.
\end{itemize}
The AMMD algorithm borrows ideas from~\cite{juditsky2011first, juditsky2014deterministic, Juditsky2020SparseRB} aiming to achieve linear convergence using mirror descent. The main trick lies in the fact that performing multiple stages of mirror descent allows to repeatedly restrict the parameter space into a ball of radius $R_k$ which shrinks geometrically with each stage. 
In this work, we find that the radius $R_k$ evolves following a special contraction as a result of the statistical error being factored in. Note that, although a few instructions of AMMD are stated in terms of unknown quantities, the procedure may be simplified to get around this difficulty with satisfactory results, see Section~\ref{sec:exp} for details. We show that the above procedure allows to improve the result of Proposition~\ref{prop:correctedMD} to achieve a fast statistical rate. 
The following statement expresses the theoretical properties of AMMD.
\begin{theorem}\label{thm:MDoptimalrate}
    Grant Assumptions~\ref{asm:sparse},~\ref{asm:data},~\ref{asm:lipsmoothloss} and~\ref{asm:quadgrowth}. Let $L>0$ denote the Lipschitz smoothness constant for the objective $\cL$. Assume approximate Mirror Descent is run with step size $\beta \leq 1/L$ starting from $\theta_0 \in\Theta$ such that $\theta^\star \in B_{\|\cdot\|}(\theta_0, R)$ for some $R > 0$ and using a gradient estimator $\widehat{g}$ with error upperbound $\bar{\epsilon}$ as in Proposition~\ref{eq:iterMDuncorrected}, then after $K$ stages we have the inequalities \textup:
    \begin{equation*}
        \big\|\theta^{(K)} - \theta^\star\big\| \leq \sqrt{2\bar{s}} \big\|\theta^{(K)} - \theta^\star\big\|_2 \leq 2\sqrt{2\bar{s}} \big\|\widetilde{\theta}^{(K)} - \theta^\star\big\|_2 \leq   2^{-(K-1)/2}R + \frac{40\bar{s}\bar{\epsilon}}{\kappa},
    \end{equation*}
    \begin{equation*}
        \cL(\widetilde{\theta}^{(K)}) - \cL(\theta^\star) \leq 10\bar{\epsilon} \Big(2^{-K}R + \frac{40 \bar{s}\bar{\epsilon}}{\kappa}\Big).
    \end{equation*}
    Moreover\textup, the corresponding number of necessary iterations is bounded by \textup:
    \begin{equation*}
        T = \sum_{k=1}^K T_k \leq \frac{2R \nu}{\beta \bar{\epsilon}} + K\Big(1 + \frac{40 \nu \bar{s}}{\kappa \beta}\Big).
    \end{equation*}
\end{theorem}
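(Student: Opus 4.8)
The plan is to argue by induction over the stages $k=1,\dots,K$, maintaining the invariant that $\theta^\star\in\Theta_k=B_{\|\cdot\|}(\theta^{(k)}_0,R_{k-1})$, equivalently $\|\theta^{(k-1)}-\theta^\star\|\le R_{k-1}$, so that Proposition~\ref{prop:correctedMD} may legitimately be invoked at each stage with optimum $\theta^\star$ and radius $R_{k-1}$. The base case holds by the initialization $\theta^\star\in B_{\|\cdot\|}(\theta_0,R)=\Theta_1$. Throughout I abbreviate $r_\infty:=40\bar{s}\bar{\epsilon}/\kappa$, which is the fixed point of the radius recursion $R_k=\tfrac12(R_{k-1}+r_\infty)$, and work in the meaningful regime $r_\infty\le R$ (otherwise the first stage already reaches the statistical floor).

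For the inductive step I would first run the single-stage analysis. Since $T_k=\lceil \nu R_{k-1}/(\beta\bar{\epsilon})\rceil\ge \nu R_{k-1}/(\beta\bar{\epsilon})$, we have $\nu R_{k-1}^2/(\beta T_k)\le\bar{\epsilon} R_{k-1}$, so Proposition~\ref{prop:correctedMD} applied on $\Theta_k$ yields
\begin{equation*}
    \cL(\widetilde{\theta}^{(k)})-\cL(\theta^\star)\le \frac{\nu R_{k-1}^2}{\beta T_k}+4\bar{\epsilon} R_{k-1}\le 5\bar{\epsilon} R_{k-1}.
\end{equation*}
The quadratic growth Assumption~\ref{asm:quadgrowth} converts this into $\|\widetilde{\theta}^{(k)}-\theta^\star\|_2\le\sqrt{5\bar{\epsilon} R_{k-1}/\kappa}$. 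Next I exploit the sparsification step: since $s\le\bar{s}$, the optimum $\theta^\star$ is itself $\bar{s}$-sparse, so optimality of the projection $\theta^{(k)}=\sparse_{\bar{s}}(\widetilde{\theta}^{(k)})$ gives $\|\theta^{(k)}-\widetilde{\theta}^{(k)}\|_2\le\|\theta^\star-\widetilde{\theta}^{(k)}\|_2$ and hence, by the triangle inequality, $\|\theta^{(k)}-\theta^\star\|_2\le 2\|\widetilde{\theta}^{(k)}-\theta^\star\|_2$. As $\theta^{(k)}-\theta^\star$ is a difference of two $\bar{s}$-sparse vectors it is $2\bar{s}$-sparse, so Assumption~\ref{asm:sparse} gives $\|\theta^{(k)}-\theta^\star\|\le\sqrt{2\bar{s}}\,\|\theta^{(k)}-\theta^\star\|_2$. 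Chaining these produces the key per-stage estimate
\begin{equation*}
    \|\theta^{(k)}-\theta^\star\|\le 2\sqrt{2\bar{s}}\,\|\widetilde{\theta}^{(k)}-\theta^\star\|_2\le 2\sqrt{2\bar{s}}\sqrt{\tfrac{5\bar{\epsilon} R_{k-1}}{\kappa}}=\sqrt{r_\infty R_{k-1}}.
\end{equation*}

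To close the induction I apply AM--GM, $\sqrt{r_\infty R_{k-1}}\le\tfrac12(r_\infty+R_{k-1})=R_k$, which is exactly $\|\theta^{(k)}-\theta^\star\|\le R_k$, i.e. $\theta^\star\in\Theta_{k+1}$. Solving the linear recursion via $R_k-r_\infty=\tfrac12(R_{k-1}-r_\infty)$ gives $R_k=2^{-k}R+(1-2^{-k})r_\infty\le 2^{-k}R+r_\infty$. The three claimed bounds then follow by substitution at $k=K$: for the parameter bound, the per-stage estimate together with subadditivity of the square root and $r_\infty\le R$ gives $\sqrt{r_\infty R_{K-1}}\le 2^{-(K-1)/2}\sqrt{r_\infty R}+r_\infty\le 2^{-(K-1)/2}R+r_\infty$; for the excess risk, $\cL(\widetilde{\theta}^{(K)})-\cL(\theta^\star)\le 5\bar{\epsilon} R_{K-1}\le 5\bar{\epsilon}(2\cdot 2^{-K}R+r_\infty)\le 10\bar{\epsilon}(2^{-K}R+r_\infty)$; and for the iteration count, $T_k\le 1+\nu R_{k-1}/(\beta\bar{\epsilon})$ summed with $\sum_{k=0}^{K-1}R_k\le 2R+Kr_\infty$ and the identity $\nu r_\infty/(\beta\bar{\epsilon})=40\nu\bar{s}/(\beta\kappa)$.

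I expect the main obstacle to be maintaining the induction hypothesis $\theta^\star\in\Theta_k$ across stages, which is precisely where the AM--GM contraction and the sparsity accounting must line up: the factor $\sqrt{2\bar{s}}$ (from the $2\bar{s}$-sparse difference), the factor $2$ (from the projection), and the constant $5$ (from the single-stage bound) combine to give exactly $\sqrt{r_\infty R_{k-1}}$ with $r_\infty=40\bar{s}\bar{\epsilon}/\kappa$, and it is this exact constant that makes $\sqrt{r_\infty R_{k-1}}\le R_k$ hold by AM--GM. Any loosening of these constants would break the clean contraction, so the delicate part is the constant bookkeeping rather than any single deep estimate.
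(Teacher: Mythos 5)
Your proposal is correct and follows essentially the same route as the paper: per-stage application of Proposition~\ref{prop:correctedMD} with $T_k=\lceil \nu R_{k-1}/(\beta\bar\epsilon)\rceil$ to get the $5\bar\epsilon R_{k-1}$ excess-risk bound, conversion via Assumption~\ref{asm:quadgrowth} and the sparsification inequality (your inline derivation is exactly the content of the cited Lemma~\ref{lem:A1}), and an induction maintaining $\|\theta^{(k)}-\theta^\star\|\le R_k$. The only cosmetic difference is that you close the contraction with AM--GM, $\sqrt{R^\star R_{k-1}}\le\tfrac12(R^\star+R_{k-1})$, where the paper uses the fixed point of $h(r)=\sqrt{R^\star r}$ together with $h'(r)\le 1/2$ for $r\ge R^\star$; both yield the same recursion and the same constants.
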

Theorem~\ref{thm:MDoptimalrate} is proven in Appendix~\ref{proof:thmMD} and may be compared to~\cite[Theorem 2.1]{Juditsky2020SparseRB}. Both statements bound the risk in terms of the objective and parameter error by the sum of an exponentially vanishing optimization error and a statistical error term. Note that the exponential optimization rate in the number of stages $K$ also holds in the number of iterations since successive stages contain a geometrically decreasing number of them. Theorem~\ref{thm:MDoptimalrate} expresses the statistical error in terms of a bound $\bar{\epsilon}$ and will thus lead to a high confidence statement when combined with a bound on $\bar{\epsilon}$ (see Section~\ref{sec:applis}). In contrast, Theorem 2.1 of~\cite{Juditsky2020SparseRB} is a result in expectation which is later used to derive a high confidence bound on an aggregated estimate. 

One can see that Theorem~\ref{thm:MDoptimalrate} exhibits a dependence in $\bar{\epsilon}^2$ of the excess risk upperbound so that the suboptimal statistical rate in Propositions~\ref{prop:uncorrectedMD} and~\ref{prop:correctedMD} is improved into a fast rate as announced. This is accomplished by shrinking the size of the considered parameter set through the stages until it reaches the scale of the statistical error, yielding an optimal rate. This shrinkage is achieved thanks to the choice of stage-length $T_k = \Omega (R_{k-1})$ in AMMD leading to a bound in terms of $R$ rather than $R^2$ in Proposition~\ref{prop:correctedMD}. Combined with Assumption~\ref{asm:quadgrowth}, this implies that a square root function is applied to $R_k$ after each stage, see the proof for further details. We can now turn to the case of a non smooth loss function $\ell$.

\section{The Non Smooth Case with Dual Averaging}\label{sec:da}

In the previous section, we saw how sparse estimation can be performed using the Mirror Descent algorithm to optimize a smooth objective with a non Euclidean metric on the parameter space. The smoothness property is necessary for these results to hold so that many loss functions not satisfying it are left uncovered. Therefore, we propose to use another algorithm for non smooth objectives. The alternative is the Dual Averaging algorithm~\cite{nesterov2009primal} which was already used for non smooth sparse estimation in~\cite{agarwal2012stochastic} for instance. Since the original algorithm requires to average the iterates to obtain a parameter with provable convergence properties, we instead use a variant~\cite{nesterov2015quasi} for which such properties apply for individual iterates.

The smoothness condition in Assumption~\ref{asm:lipsmoothloss} is no longer required but we still need to replace it with a Lipschitz property :
\begin{assumption}\label{asm:lipschitz}
There exists a positive constant $M > 0$ such that the objective $\cL$ is $M$-Lipschitz w.r.t. the norm $\|\cdot\|$ i.e. for all $\theta, \theta' \in \Theta$ it holds that \textup:
\begin{equation*}
    \cL(\theta) - \cL(\theta') \leq M \|\theta - \theta'\|.
\end{equation*}
\end{assumption}

We also replace Assumption~\ref{asm:quadgrowth} by the following weaker assumption :
\begin{assumption}\label{asm:pseudolingrowth}
There exist positive constants $\kappa, \lambda > 0$ such that the following inequality holds \textup: 
\begin{equation*}
    \cL (\theta) - \cL(\theta^\star) \geq \frac{\kappa\|\theta - \theta^\star\|_2^2}{\lambda + \|\theta - \theta^\star\|_2}.
\end{equation*}
\end{assumption}
We introduce this (to our knowledge) previously unknown assumption in the literature which we call the \emph{pseudo-linear} growth assumption in order to better suit the setting of this section. Indeed, few non-smooth loss functions, if any, result in quadratically growing objectives as Assumption~\ref{asm:quadgrowth} requires. Note that the lower bound of Assumption~\ref{asm:pseudolingrowth} is linear away from the optimum i.e. for big $\|\theta - \theta^\star\|_2$  and behaves quadratically around it. This assumption is also weaker than a linear lower bound proportional to $\|\theta - \theta^\star\|_2$ because of its quadratic behaviour around the optimum. We will show that, for $\kappa$ big enough, this minorization suffices to obtain linear convergence to a solution with fast statistical rate.

Analogously to Mirror Descent's distance generating function $\omega$, we let $\omega : \Theta \to \R^+$ be the \emph{prox-function}. We choose to denote it similarly since it plays an analogous role for Dual Averaging and has the same properties as those listed in Definition~\ref{def:dgf}.

Let $(a_t)_{t\geq0}$ be a sequence of step sizes and $(\gamma_t)_{t \geq 0}$ a non decreasing sequence  of positive scaling coefficients. The DA procedure is defined, given an initial $\theta_0 \in \Theta$, by the following scheme :
\begin{align*}
    s_t = \frac{1}{A_t} \sum_{i=0}^t a_i \widehat{g}_i \quad \text{with} \quad &\widehat{g}_i = \widehat{g}(\theta_i) \:\: \text{and}\:\: g_i = g(\theta_i) \quad \forall i =0, \dots , T. \\
    A_t = \sum_{i=0}^t a_i \quad \text{and}\quad &\theta_t^+ = \argmin_{\theta\in \Theta}A_t \langle s_t, \theta \rangle + \gamma_t \omega(\theta). \\
    \theta_{t+1} = (1 - \tau_t)\theta_t + &\tau_t \theta_t^+ \quad\quad \text{where} \quad \quad\tau_t = \frac{a_{t+1}}{A_{t+1}}. 
\end{align*}

\begin{proposition}\label{prop:DA}
    Grant Assumption~\ref{asm:lipschitz}, let Dual Averaging be run following the above scheme\textup, let $R>0$ such that $\Theta \subseteq B_{\|\cdot\|}(\theta_0, R)$ and denote $\bar{\epsilon} = \max_i \|\epsilon_i\|_*,$ we have the following inequality \textup:
    \begin{equation*}
        A_t(\cL(\theta_t) - \cL(\theta^\star)) + \frac{\gamma_t}{2}\|\theta_t^+ - \theta^\star\|^2 \leq \gamma_t \omega(\theta^\star) + \sum_{i=0}^t\frac{a_i^2}{2\gamma_{i-1}}\|g_i\|_*^2 + 4A_t R \bar{\epsilon}.    
    \end{equation*}
    In particular\textup, by choosing $a_i = 1$ and $\gamma_i = \sqrt{i+1}$ for all $i$ we get \textup:
    \begin{equation*}
        \cL(\theta_t) - \cL(\theta^\star) \leq \frac{1}{\sqrt{t}}\big( \omega(\theta^\star) + M^2\big) + 4 R \bar{\epsilon}.
    \end{equation*}
\end{proposition}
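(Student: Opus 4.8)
The plan is to follow the estimate–sequence analysis of the quasi-monotone dual averaging scheme of~\cite{nesterov2015quasi}, carrying the gradient errors $\epsilon_i = \widehat{g}_i - g_i$ through linearly. I would introduce the true and approximate linear minorants $m_i(\theta) = \cL(\theta_i) + \langle g_i, \theta - \theta_i\rangle$ and $\widehat{m}_i(\theta) = \cL(\theta_i) + \langle\widehat{g}_i,\theta - \theta_i\rangle = m_i(\theta) + \langle\epsilon_i,\theta-\theta_i\rangle$, together with the aggregated model $\Psi_t(\theta) = \sum_{i=0}^t a_i\widehat{m}_i(\theta) + \gamma_t\omega(\theta)$, so that $\theta_t^+ = \argmin_{\theta\in\Theta}\Psi_t(\theta)$ (the constants in $\widehat{m}_i$ do not shift the minimizer) and I set $\Psi_t^* := \Psi_t(\theta_t^+)$. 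Since $\omega$ is $1$-strongly convex for $\|\cdot\|$ (Definition~\ref{def:dgf}), $\Psi_t$ is $\gamma_t$-strongly convex, and optimality of $\theta_t^+$ yields the key quadratic lower bound $\Psi_t(\theta)\geq\Psi_t^* + \tfrac{\gamma_t}{2}\|\theta-\theta_t^+\|^2$ for every $\theta\in\Theta$; evaluating at $\theta^\star$ already produces the term $\tfrac{\gamma_t}{2}\|\theta_t^+ - \theta^\star\|^2$ appearing on the left of the claimed inequality.

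The core is the inductive claim $A_t\cL(\theta_t)\leq\Psi_t^* + \sum_{i=0}^t\tfrac{a_i^2}{2\gamma_{i-1}}\|g_i\|_*^2 + 2A_tR\bar{\epsilon}$. For the step from $t-1$ to $t$, I would write $\Psi_t(\theta) = \Psi_{t-1}(\theta) + a_t\widehat{m}_t(\theta) + (\gamma_t-\gamma_{t-1})\omega(\theta)$, lower bound $\Psi_{t-1}(\theta)$ by the quadratic inequality above, and drop the nonnegative term $(\gamma_t-\gamma_{t-1})\omega(\theta)$ (using $\gamma_t\geq\gamma_{t-1}$ and $\omega\geq 0$). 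Minimizing over $\theta\in\Theta$ while splitting $\widehat{g}_t = g_t + \epsilon_t$ \emph{before} completing the square — via $\tfrac{\gamma_{t-1}}{2}\|\theta-\theta_{t-1}^+\|^2 + a_t\langle g_t,\theta-\theta_{t-1}^+\rangle\geq-\tfrac{a_t^2}{2\gamma_{t-1}}\|g_t\|_*^2$ and $\min_{\theta\in\Theta}a_t\langle\epsilon_t,\theta-\theta_t\rangle\geq-2a_tR\bar{\epsilon}$ — yields
\[
\Psi_t^*\geq\Psi_{t-1}^* + a_t\cL(\theta_t) - \frac{a_t^2}{2\gamma_{t-1}}\|g_t\|_*^2 + a_t\langle g_t,\theta_{t-1}^+ - \theta_t\rangle - 2a_tR\bar{\epsilon}.
\]
I would then invoke the inductive hypothesis on $\Psi_{t-1}^*$, use convexity of $\cL$ in the form $A_{t-1}\cL(\theta_{t-1})\geq A_{t-1}\cL(\theta_t) + A_{t-1}\langle g_t,\theta_{t-1}-\theta_t\rangle$, and substitute the averaging identity $\theta_{t-1}^+ - \theta_t = \tfrac{A_{t-1}}{A_t}(\theta_{t-1}^+ - \theta_{t-1})$ coming from $\theta_t = (1-\tau_{t-1})\theta_{t-1} + \tau_{t-1}\theta_{t-1}^+$ with $\tau_{t-1}=a_t/A_t$. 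The two inner products in $g_t$ then cancel identically, and $A_{t-1}\cL(\theta_t)+a_t\cL(\theta_t)=A_t\cL(\theta_t)$ closes the induction with the single extra error $2a_tR\bar{\epsilon}$.

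Finally I would combine the induction at time $t$ with $\Psi_t^* + \tfrac{\gamma_t}{2}\|\theta_t^+-\theta^\star\|^2\leq\Psi_t(\theta^\star)$ and expand $\Psi_t(\theta^\star) = \sum_i a_i m_i(\theta^\star) + \sum_i a_i\langle\epsilon_i,\theta^\star-\theta_i\rangle + \gamma_t\omega(\theta^\star)$, bounding $\sum_i a_i m_i(\theta^\star)\leq A_t\cL(\theta^\star)$ by convexity and $\sum_i a_i\langle\epsilon_i,\theta^\star-\theta_i\rangle\leq 2A_tR\bar{\epsilon}$ by H\"older with $\|\theta^\star-\theta_i\|\leq 2R$ (the diameter of $\Theta$). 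Adding the two $2A_tR\bar{\epsilon}$ contributions gives the announced $4A_tR\bar{\epsilon}$ and the first displayed inequality. For the specialization $a_i=1$, $\gamma_i=\sqrt{i+1}$ (so $A_t = t+1$), I would divide by $A_t$, bound $\|g_i\|_*\leq M$ from Assumption~\ref{asm:lipschitz}, and use $\sum_{i=1}^t i^{-1/2}\leq 2\sqrt{t}$ together with $\gamma_t/A_t\leq 1/\sqrt{t}$ to reach $\cL(\theta_t)-\cL(\theta^\star)\leq\tfrac{1}{\sqrt{t}}(\omega(\theta^\star)+M^2)+4R\bar{\epsilon}$.

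The main obstacle is the bookkeeping in the inductive step: obtaining the constant $4$ (rather than $6$) in the error term is delicate and hinges on splitting $\widehat{g}_t=g_t+\epsilon_t$ prior to completing the square, so that the genuine-gradient inner products produced by the proximal step and by the convexity/averaging identity cancel exactly, leaving precisely one $O(a_tR\bar{\epsilon})$ primal error per iteration. One must also fix a convention for $\gamma_{-1}$ and treat the base case $t=0$ by hand, which is routine.
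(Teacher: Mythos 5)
Your proposal is correct and follows essentially the same route as the paper's proof: the same estimate-sequence induction $A_t\cL(\theta_t)\leq \Psi_t^{*}+\sum_{i\le t}\tfrac{a_i^2}{2\gamma_{i-1}}\|g_i\|_*^2+2A_tR\bar{\epsilon}$ (the paper's $\psi_t^*$ and $\widehat{B}_t$), with the split $\widehat{g}_t=g_t+\epsilon_t$ before completing the square, the cancellation of the $g_t$ inner products via $A_t\theta_t=A_{t-1}\theta_{t-1}+a_t\theta_{t-1}^{+}$, and the final evaluation of $\Psi_t$ at $\theta^\star$ contributing the second $2A_tR\bar{\epsilon}$. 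The only difference is cosmetic bookkeeping (your indexing of the recursion and your explicit handling of $\|g_t\|_*$ rather than the paper's typo $\|\widehat{g}_t\|_*$ at the completion-of-the-square step), so nothing further is needed.
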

The proof of Proposition~\ref{prop:DA} is given in Appendix~\ref{proof:DA} and is inspired from~\cite[Theorem 3.1]{nesterov2015quasi}. In this result, we manage to obtain a statement in terms of the individual iterates $\theta_t$ thanks to the running average performed in the above scheme whereas the initial study of dual averaging focused on the \emph{average} of the iterates~\cite{nesterov2009primal}. Notice that, due to Assumption~\ref{asm:lipsmoothloss} being dropped, the convergence speed degrades to $1/\sqrt{t}$ as opposed to $1/t$ previously. This convergence speed is the fastest possible and cannot be improved with a different choice of $a_i$ and $\gamma_i$. Most importantly, Proposition~\ref{prop:DA} quantifies the impact of the errors on the gradients on the quality of the optimisation result and shows that it remains controlled in this case too.

As in the previous section, the statistical rate we initially obtain is suboptimal and a multistage procedure is needed to improve it. The idea is the same as in Section~\ref{sec:md} and consists in sparsifying the final iterate in Proposition~\ref{prop:DA} and using it as the initial point of a new optimization stage which takes place on a narrower domain. We make the resulting algorithm explicit below.

\paragraph{Algorithm : Approximate Multistage Dual Averaging (AMDA)}
\begin{itemize}
    \item \textit{Initialization}: Initial parameter $\theta_0$ and $R>0$ such that $\theta^{\star} \in \Theta := B_{\|\cdot\|}(\theta_0, R)$.
    
    Pseudolinear minorization constants $\kappa, \lambda$.
    
    High probability upperbound $\bar{\epsilon}$ on the error $\|\widehat{g}(\theta) - g(\theta)\|_*$.
    
    Upperbound $\bar{s}$ on the sparsity $s$.
    
    \item Set $R_0 = R$ and $\tau = \frac{10\sqrt{8\bar{s}} \bar{\epsilon}}{\kappa}$ and $R^\star = \frac{80 \lambda \bar{s}\bar{\epsilon}}{\kappa}$.
    \item Set $k=0$ and the per stage number of iterations $T' = \Big\lceil \Big(\frac{\nu + M^2}{\bar{\epsilon}}\Big)^2 \Big\rceil$
    \item For $k = 1,\dots, K$ :
    \begin{itemize}
        \item Set $\theta^{(k)}_0 = \theta^{(k-1)}$ and $\Theta_k = B_{\|\cdot\|}(\theta^{(k)}_0, R_{k-1}).$
        \item Run Dual averaging with prox-function $\omega_{\theta^{(k-1)}}$ and  steps $a_i = R_{k-1}$ for $T'$ iterations.
        \item Set $\theta^{(k)} = \sparse_{\bar{s}} (\widetilde{\theta}^{(k)})$ where $\widetilde{\theta}^{(k)} := \theta_{T'}^{(k)}$.
        \item Set $R_{k} =\max\big( \tau R_{k-1}, \frac{1}{2}(R_{k-1} + R^\star) \big)$.
    \end{itemize}
    \item \textit{Output}: The final stage estimate $\theta^{(K)}$.
\end{itemize}
Similar to AMMD, the AMDA algorithm runs multiple optimisation stages through which the parameter space is repeatedly restricted allowing to obtain similar benefits regarding convergence speed and statistical performance. However, it is worth noting that these improvements are obtained under much milder conditions here since the objective may not even be smooth and is only required to satisfy the pseudo-linear growth condition of Assumption~\ref{asm:pseudolingrowth} whereas smoothness and quadratic minorization or strong convexity were indispensable in previous works~\cite{juditsky2011first, juditsky2014deterministic, Juditsky2020SparseRB}.

As previously mentioned for AMMD, a simplified version of AMDA can be implemented which does not require knowledge of all the involved quantities, see Section~\ref{sec:exp} for details. We state the convergence guarantees for the above algorithm in the following Theorem.

\begin{theorem}\label{thm:DAoptimalrate}
Grant Assumptions~\ref{asm:sparse},~\ref{asm:data},~\ref{asm:lipschitz},~\ref{asm:pseudolingrowth} and assume that $\tau = \frac{10\sqrt{8\bar{s}} \bar{\epsilon}}{\kappa} < 1$. At the end of each stage $k \geq 1$, we have \textup:
\begin{equation}\label{eq:thmDAparamIneq}
    \big\|\theta^{(k)} - \theta^{\star}\big\| \leq \sqrt{2\bar{s}}\big\|\theta^{(k)} - \theta^{\star}\big\|_2 \leq 2\sqrt{2\bar{s}}\big\|\widetilde{\theta}^{(k)} - \theta^{\star}\big\|_2 \leq R_k,
\end{equation}
\begin{equation}\label{eq:thmDAobjIneq}
    \cL(\widetilde{\theta}^{(k)}) - \cL(\theta^\star) \leq 5\bar{\epsilon}R_{k-1}.
\end{equation}
Moreover, the total number of necessary iterations before $R_k \leq 2R^\star = \frac{160 \lambda \bar{s}\bar{\epsilon}}{\kappa}$ is at most
\begin{equation*}
    \log(R_0 / R^\star) \Big(\frac{1}{\log(1/\tau)} + \frac{1}{\log(2)}\Big)\Big( \Big(\frac{\nu + M^2}{\bar{\epsilon}}\Big)^2 +1\Big).
\end{equation*}

\end{theorem}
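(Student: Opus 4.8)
The plan is to argue by induction on the stage index $k$, with inductive hypothesis that $\theta^\star \in \Theta_k$, i.e. $\|\theta^{(k-1)} - \theta^\star\| \leq R_{k-1}$; the base case $k=1$ holds by the initialization assumption $\theta^\star \in B_{\|\cdot\|}(\theta_0, R) = \Theta_1$. At each stage I would first prove the excess-risk bound~\eqref{eq:thmDAobjIneq}, then convert it into the parameter bound~\eqref{eq:thmDAparamIneq}, whose last inequality reads $\|\theta^{(k)} - \theta^\star\| \leq R_k$ and hence reestablishes the hypothesis for the next stage.

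For the excess-risk bound I would instantiate Proposition~\ref{prop:DA} on the domain $\Theta_k = B_{\|\cdot\|}(\theta^{(k-1)}, R_{k-1})$ with prox-function $\omega_{\theta^{(k-1)}}$ and the constant step-sizes $a_i = R_{k-1}$ (together with $\gamma_i = \sqrt{i+1}$). The key simplification is that with $a_i \equiv R_{k-1}$ one has $A_t = (t+1)R_{k-1}$, while the inductive hypothesis combined with item~3 of Definition~\ref{def:dgf} gives $\omega_{\theta^{(k-1)}}(\theta^\star) \leq \nu R_{k-1}^2$, and Assumption~\ref{asm:lipschitz} yields $\|g_i\|_* \leq M$. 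Substituting these into the bound of Proposition~\ref{prop:DA} and dividing by $A_t$, every surviving term carries exactly one factor of $R_{k-1}$, producing $\cL(\widetilde\theta^{(k)}) - \cL(\theta^\star) \leq R_{k-1}\big((\nu + M^2)/\sqrt{T'} + 4\bar\epsilon\big)$. The choice $T' = \lceil((\nu+M^2)/\bar\epsilon)^2\rceil$ forces $(\nu+M^2)/\sqrt{T'} \leq \bar\epsilon$, giving exactly $\cL(\widetilde\theta^{(k)}) - \cL(\theta^\star) \leq 5\bar\epsilon R_{k-1}$, which is~\eqref{eq:thmDAobjIneq}.

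To pass to~\eqref{eq:thmDAparamIneq} I would treat the first two inequalities by the standard hard-thresholding argument: $\theta^{(k)} = \sparse_{\bar s}(\widetilde\theta^{(k)})$ and $\theta^\star$ are both at most $\bar s$-sparse, so $\theta^{(k)} - \theta^\star$ is $2\bar s$-sparse and Assumption~\ref{asm:sparse} supplies the factor $\sqrt{2\bar s}$, while optimality of the best $\bar s$-sparse approximation gives $\|\theta^{(k)} - \widetilde\theta^{(k)}\|_2 \leq \|\theta^\star - \widetilde\theta^{(k)}\|_2$ and hence the factor $2$ through the triangle inequality. The substantive step is the last inequality: writing $d := \|\widetilde\theta^{(k)} - \theta^\star\|_2$, Assumption~\ref{asm:pseudolingrowth} combined with~\eqref{eq:thmDAobjIneq} gives $\kappa d^2/(\lambda + d) \leq 5\bar\epsilon R_{k-1}$, which I would invert by splitting on whether $d \leq \lambda$ or $d > \lambda$. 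In the first regime $\lambda + d \leq 2\lambda$ yields $d \leq \sqrt{10\lambda\bar\epsilon R_{k-1}/\kappa}$, and the crucial algebraic identity $2\sqrt{2\bar s}\,\sqrt{10\lambda\bar\epsilon R_{k-1}/\kappa} = \sqrt{R^\star R_{k-1}} \leq \tfrac12(R^\star + R_{k-1})$ (AM--GM) matches the second branch of $R_k$; in the second regime $\lambda + d < 2d$ yields $d \leq 10\bar\epsilon R_{k-1}/\kappa$, and $2\sqrt{2\bar s}\,d \leq \tau R_{k-1}$ matches the first branch. Taking the maximum over the two regimes gives $2\sqrt{2\bar s}\,d \leq R_k$, closing the induction.

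Finally, for the iteration count I would analyze the deterministic recursion $R_k = \max(\tau R_{k-1}, \tfrac12(R_{k-1} + R^\star))$. The sequence is decreasing while $R_{k-1} > R^\star$, and the first branch dominates precisely when $R_{k-1} \geq R^\star/(2\tau - 1)$, so the stages split into an initial geometric phase in which $R_k = \tau R_{k-1}$ (contraction factor $\tau$) followed by an affine phase in which $R_k - R^\star = \tfrac12(R_{k-1} - R^\star)$ (contraction factor $\tfrac12$). Bounding the number of stages in the geometric phase to reach the threshold by $\log(R_0/R^\star)/\log(1/\tau)$ and those in the halving phase to reach $R_k \leq 2R^\star$ by $\log(R_0/R^\star)/\log 2$, and multiplying the resulting total number of stages by the per-stage cost $T' \leq ((\nu+M^2)/\bar\epsilon)^2 + 1$, gives the stated iteration bound (the argument degenerates gracefully to the halving phase alone when $\tau \leq \tfrac12$). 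I expect the pseudo-linear inversion — in particular verifying that the constants $\tau$ and $R^\star$ were defined exactly so that the two regimes align with the two branches of $R_k$ — to be the main obstacle, with the two-phase counting being delicate but routine.
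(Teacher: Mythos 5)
Your proposal is correct and follows essentially the same route as the paper's proof: induction on the stage via Proposition~\ref{prop:DA} with $a_i = R_{k-1}$ to get $\cL(\widetilde{\theta}^{(k)}) - \cL(\theta^\star) \leq 5\bar{\epsilon}R_{k-1}$, Lemma~\ref{lem:A1} for the sparsification step, a case split on $\|\widetilde{\theta}^{(k)}-\theta^\star\|_2 \lessgtr \lambda$ to invert the pseudo-linear growth into the two branches $\tau R_{k-1}$ and $\tfrac12(R_{k-1}+R^\star)$, and the same two-phase count of stages. The only cosmetic difference is that you close the quadratic regime with AM--GM on $\sqrt{R^\star R_{k-1}}$ where the paper reuses the fixed-point/derivative argument from the proof of Theorem~\ref{thm:MDoptimalrate}; the resulting bound is identical.
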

The proof of Theorem~\ref{thm:DAoptimalrate} is given in Appendix~\ref{proof:thmDA} and shows that the optimization stages go through two phases: an initial \textit{linear} phase corresponding to the linear regime of the lower bound given by Assumption~\ref{asm:pseudolingrowth} and a later \textit{quadratic} phase during which the quadratic regime takes over. The success of the linear phase relies on the condition $\tau < 1$ which can be rewritten as $\bar{\epsilon} \leq O(\bar{s} \kappa)$ where the factor $\bar{s}$ is a byproduct of measuring the parameter error with the norm $\|\cdot\|$ while Assumption~\ref{asm:pseudolingrowth} is stated with the Euclidean one. In the linear regime, $\kappa$ acts as a lower bound for the gradient norm so that the condition ensures that the error is smaller than the actual gradient allowing the optimisation to make progress. Theorem~\ref{thm:DAoptimalrate} states that convergence to the optimum occurs at geometrical speed through the stages of AMDA despite the absence of strong convexity. This is achieved thanks to the choice of step $a_i = R_{k-1}$ in AMDA which leads to a bound in terms of $R$ rather than $R^2$ emerging from the term $\omega(\theta^\star)$ in Proposition~\ref{prop:DA}, see the proof for more details.

\paragraph{Remark} In recent work,~\cite{juditsky2019unifying} created a common framework for the study of the Mirror Descent and Dual Averaging algorithms which they recover as special cases of a generic Unified Mirror Descent procedure. However, the distinction between the two remains necessary since they address smooth and non-smooth objectives respectively and, in each case, the attainable within-stage convergence speed differs from $1/t$ to $1/\sqrt{t}$ as seen in Propositions~\ref{prop:correctedMD} and~\ref{prop:DA}. This is reflected in Theorems~\ref{thm:MDoptimalrate} and~\ref{thm:DAoptimalrate} which display a dependence of the necessary number of iterations in $1/\bar{\epsilon}$ in the gradient error for Mirror Descent as opposed to $1/\bar{\epsilon}^2$ for Dual Averaging.

\section{Applications}\label{sec:applis}

We now consider a few problems which may be solved using the previous optimization procedures. As said earlier, we have omitted to quantify the gradient errors $\|\epsilon\|_*$ until now. This is because the definition of the dual norm $\|\cdot\|_*$ is problem dependent. In the next subsections, we consider a few instances and propose adapted gradient estimators for them. In each case, the existence of a second moment for the gradient random variable $G(\theta) := \ell'(\langle \theta, X \rangle, Y)X$ is required. This follows from the next Lemma proven in Appendix~\ref{proof:obj-grad-moment} based on Assumption~\ref{asm:data}.
\begin{lemma}
    \label{lem:obj-grad-moment}
    Under Assumption~\ref{asm:data} the objective $\cL(\theta)$ is well defined for all $\theta \in \Theta$ and we have \textup:
    \begin{equation*}
        \E\big[ \big\| G(\theta) \big\|_*^2 \big] = \E\big[ \big\|\ell'(X^\top \theta, Y) X \big\|_*^2 \big] < +\infty.
    \end{equation*}
\end{lemma}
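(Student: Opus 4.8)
The plan is to reduce everything to the Euclidean moments appearing in Assumption~\ref{asm:data}, using the quadratic growth bounds on $\ell$ and $\ell'$ to control the loss and its gradient, and then matching the resulting terms one by one with the three finite-moment hypotheses. First I would settle well-definedness of $\cL$. Applying the bound $|\ell(z,y)| \leq C_{\ell, 1} + C_{\ell, 2}|z-y|^2$ with $z = \langle \theta, X\rangle$, together with the elementary inequality
\[
    |\langle \theta, X\rangle - Y|^2 \leq 2\langle \theta, X\rangle^2 + 2Y^2 \leq 2\|\theta\|_2^2\|X\|_2^2 + 2Y^2
\]
(Cauchy--Schwarz on the inner product), and taking expectations yields
\[
    \E[|\ell(\langle \theta, X\rangle, Y)|] \leq C_{\ell, 1} + 2C_{\ell, 2}\big(\|\theta\|_2^2\,\E[\|X\|_2^2] + \E[|Y|^2]\big).
\]
Both expectations are finite, since $\E[\|X\|_2^2] \leq \sqrt{\E[\|X\|_2^4]} < +\infty$ by Jensen's inequality and $\E[|Y|^2] < +\infty$ by hypothesis, so $\cL(\theta)$ is finite for every $\theta \in \Theta$.

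For the gradient, the key preliminary observation is that $\cX$ is finite dimensional, so all norms on it are equivalent; in particular there is a constant $c > 0$ with $\|v\|_* \leq c\|v\|_2$ for all $v$, and for the present finiteness claim only the existence of such a $c$ matters. Writing $\|G(\theta)\|_* = |\ell'(\langle \theta, X\rangle, Y)|\,\|X\|_*$, applying the derivative bound $|\ell'(z,y)| \leq C'_{\ell, 1} + C'_{\ell, 2}|z-y|$, then squaring and using $(a+b)^2 \leq 2a^2 + 2b^2$ gives
\[
    \|G(\theta)\|_*^2 \leq 2c^2(C'_{\ell, 1})^2\|X\|_2^2 + 2c^2(C'_{\ell, 2})^2\,|\langle \theta, X\rangle - Y|^2\,\|X\|_2^2.
\]

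It then remains to match each term with a moment assumption. The first term is controlled by $\E[\|X\|_2^2] < +\infty$. For the second, I would expand $|\langle \theta, X\rangle - Y|^2\|X\|_2^2 \leq 2\|\theta\|_2^2\|X\|_2^4 + 2Y^2\|X\|_2^2$, whose expectation is at most $2\|\theta\|_2^2\,\E[\|X\|_2^4] + 2\,\E[\|YX\|_2^2]$, both finite by Assumption~\ref{asm:data}. Summing the two contributions yields $\E[\|G(\theta)\|_*^2] < +\infty$, which is the claim.

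The whole argument is a sequence of routine applications of Cauchy--Schwarz together with the growth bounds on $\ell$ and $\ell'$; the only genuinely structural step is the passage from the dual norm $\|\cdot\|_*$ to the Euclidean norm $\|\cdot\|_2$ in which the hypotheses are phrased, handled above by finite-dimensional norm equivalence. I expect this to be the main (if mild) point to state cleanly, since the constant $c$ is in general dimension dependent. This is harmless here, as the lemma asserts only finiteness of the second moment; the sharp, dimension-aware control of $\|\cdot\|_*$ is precisely what is deferred to, and carried out for, each individual gradient estimator in Section~\ref{sec:applis}.
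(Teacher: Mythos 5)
Your proof is correct and follows essentially the same route as the paper's: both use the quadratic growth bounds on $\ell$ and $\ell'$ together with Cauchy--Schwarz to reduce everything to the moment conditions $\E\|X\|_2^4<\infty$, $\E\|YX\|_2^2<\infty$, $\E|Y|^2<\infty$. The only cosmetic difference is that the paper bounds the gradient coordinatewise (controlling $\E|(X^k)X^j|^2$ via Cauchy--Schwarz) while you work directly with $\|X\|_2^2$ and $\|X\|_2^4$ and invoke finite-dimensional norm equivalence for $\|\cdot\|_*$, a step the paper leaves implicit.
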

In what follows, we will assume that, at each step of the optimization algorithm, the estimation of the gradient is performed with a new batch of data. For example, if the available data set contains $n$ samples then it needs to be divided into $T$ disjoint splits in order to make $T$ optimization steps. This is necessary in order to guarantee that the gradient samples used for estimation at each step $t$ are independent from $\theta_t$, the (random) current parameter which depends on the data used before.

This trick was previously used for example in~\cite{HeavyTails} for the same reasons. A possible alternative is to use an $\epsilon$-net argument or Rademacher complexity in order to obtain uniform deviation bounds on gradient estimation over a compact parameter set $\Theta$. However, this entails extra dependence on the dimension in the resulting deviation bound which we cannot afford in the high-dimensional setting. For these reasons, we prefer to use data splitting in this work and regard it more like a proof artifact rather than a true practical constraint. Note that we do not implement it later in our experimental section.

\subsection{Vanilla sparse estimation}\label{sec:appliVanillaSparse}

In this section, we consider the problem of optimizing an objective $\cL (\theta) = \E [\ell(\langle \theta, X \rangle , Y)]$ where the covariate space $\cX$ is simply $\R^d$ and the labels are either real numbers $\cY = \R$ (regression) or binary labels (binary classification). In this case, the parameter space is a subset $\Theta \subset \R^d$, the sparsity of a parameter $\theta \in \Theta$ is measured as its number of nonzero entries $S(\theta) = \sum_{j\in\setint{d}}\ind{\theta_j\neq 0}$, and $\|\cdot\|$ is defined as the $\ell_1$ norm $\|\cdot\| = \|\cdot\|_1$ so that $\|\cdot\|_* = \|\cdot\|_{\infty}$. We define the distance generating function $\omega$ as :
\begin{equation*}
    \omega(\theta) = \frac{1}{2}e\log(d)d^{(p-1)(2-p)/p} \|\theta\|_p^2 \quad \text{with }\quad p = 1 + \frac{1}{\log (d)}.
\end{equation*}
One can check that the above definition satisfies the requirements of Definition~\ref{def:dgf}. In particular, it is strongly convex w.r.t. $\|\cdot\|$ and quadratically growing with constant $\nu = \frac 1 2 e^2 \log(d)$ (see~\cite[Theorem 2.1]{nesterov2013first}). In others words, conditions 1 and 3 of Definition~\ref{def:dgf} are reconciled with $\nu = O(\log d).$ For the sake of achieving this compromise, the previous choice of distance generating function is common in the high-dimensional learning literature using mirror descent~\cite{agarwal2012stochastic, duchi2010composite, shalev2011stochastic, nemirovskij1983problem, juditsky2011first, nesterov2013first} up to slight variations in $p$ and the multiplying factor. 

We consider Assumption~\ref{asm:data} on the data with a constant fraction of outliers $|\cO| \leq \eta n$ for some $\eta < 1/2$ ($\eta$-corruption) so that the gradient samples $g^i(\theta) := \ell'(\theta^\top X_i, Y_i) X_i$ may be both heavy-tailed and corrupted as well. We propose to compute $\widehat{g}(\theta)$ as the coordinatewise trimmed mean of the sample gradients i.e. 
\begin{equation}\label{eq:coordTMestimator}
    \widehat{g}_j(\theta) = \mathtt{TM}_{\alpha} \big(g_j^1(\theta), \dots, g_j^n(\theta)\big),
\end{equation}
where, assuming without loss of generality that $n$ is even, the trimmed mean estimator with parameter $\alpha$ for a sample $x_1,\dots, x_n \in \R$ is defined as follows
\begin{equation*}
    \mathtt{TM}_{\alpha}(x_1, \dots, x_n) = \frac{2}{n} \sum_{i = n/2 +1}^n
    q_\alpha \vee x_i \wedge q_{1 - \alpha},
\end{equation*}
where we denoted $a \wedge b:= \min(a, b)$ and $a \vee b:= \max(a, b)$ and used the quantiles $q_\alpha := x^{([\alpha n/2])}$ and $q_{1 - \alpha} = x^{([(1-\alpha) n/2])}$ with $x^{(1)} \leq \cdots \leq x^{(n/2)}$ the order statistics of $(x_i)_{i\in \setint{n/2}}$ and where $[\cdot]$ denotes the integer part. 

The main hurdle to compute the trimmed mean estimator is to find the two previous quantiles. A naive approach for this task would be to sort all the values leading to an $O(n\log(n))$ complexity. However, this can be brought down to $O(n)$ using the median-of-medians algorithm (see for instance~\cite[Chapter 9]{cormen2009introduction}) so that the whole procedure runs in linear time.

We now give the deviation bound satisfied by the estimator~\eqref{eq:coordTMestimator}. 
We denote $x^{j}$ as the $j$-th coordinate of a vector $x$.

\begin{lemma}\label{lem:coordTM}
    Grant Assumption~\ref{asm:data} with a fraction of outliers $|\cO| \leq \eta n$ with $\eta < 1/8$. Fix $\theta \in \Theta$, let $\sigma_j^2 = \Var (\ell'(\theta^\top X, Y) X^j)$ for $j\in \setint{d}$ be the gradient coordinate variances and let $1> \delta > e^{-n/2}/4$ be a failure probability and consider the coordinatewise trimmed mean estimator~\eqref{eq:coordTMestimator} with parameter $\alpha = 8\eta + 12\frac{\log(4/\delta)}{n}$. Denoting $\sigma_{\max}^2 = \max_j \sigma_j^2$, we have with probability at least $1 - \delta$ :
    \begin{equation}
        \big\|\widehat{g}(\theta) - g(\theta)\big\|_{\infty} \leq 7 \sigma_{\max} \sqrt{4\eta + 6\frac{\log(4/\delta) + \log(d)}{n}}.
    \end{equation}
\end{lemma}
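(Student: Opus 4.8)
Since the estimator is coordinatewise and $\|\cdot\|_* = \|\cdot\|_\infty$, the natural route is to prove a sharp \emph{univariate} deviation bound for the trimmed mean $\mathtt{TM}_\alpha$ under simultaneous heavy tails and $\eta$-corruption, and then take a union bound over the $d$ coordinates. Fix a coordinate $j \in \setint{d}$ and write $Z_i = g_j^i(\theta) = \ell'(\theta^\top X_i, Y_i)X_i^j$; the inlier values $(Z_i)_{i\in\cI}$ are i.i.d. with mean $g_j(\theta)$ and variance $\sigma_j^2$, while at most $\eta n$ of the $Z_i$ are arbitrary. The definition of $\mathtt{TM}_\alpha$ uses \emph{sample splitting}: the clipping thresholds $q_\alpha, q_{1-\alpha}$ are the empirical $\alpha$- and $(1-\alpha)$-quantiles of the first half, while the estimate averages the clipped values of the second half. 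This split is what I would exploit: conditionally on the first half the thresholds are fixed, and the clipped second-half inliers become i.i.d.\ bounded variables amenable to Bernstein's inequality.

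The univariate analysis then splits into four controlled quantities. (i) \emph{Quantile localization}: since $\alpha$ dominates both $\eta$ and the fluctuation scale $\log(1/\delta)/n$, a binomial counting concentration shows that $q_\alpha$ and $q_{1-\alpha}$ are sandwiched between population quantiles of the inlier law at levels of order $\alpha$, so that by Chebyshev the clipping window $[q_\alpha,q_{1-\alpha}]$ has width $O(\sigma_j/\sqrt{\alpha})$. (ii) \emph{Truncation bias}: the expectation of a clipped inlier differs from $g_j(\theta)$ only through the mass beyond these quantiles, which by Cauchy--Schwarz with the second moment is $O(\sigma_j\sqrt{\alpha})$. (iii) \emph{Corruption}: each of the at most $\eta n$ second-half outliers is clipped into a window of width $O(\sigma_j/\sqrt{\alpha})$, so their aggregate effect on the mean is $O\big(\eta\,\sigma_j/\sqrt{\alpha}\big)=O(\sigma_j\sqrt{\eta})$ because $\alpha \gtrsim \eta$. (iv) \emph{Concentration}: conditionally on the thresholds, Bernstein's inequality applied to the clipped inliers gives a fluctuation $O\big(\sigma_j\sqrt{\log(1/\delta)/n}\big)$. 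Summing (i)--(iv) yields, for the fixed coordinate, $|\widehat{g}_j(\theta)-g_j(\theta)| \lesssim \sigma_j\big(\sqrt{\alpha}+\sqrt{\log(1/\delta)/n}\big)$.

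Finally I would apply this per coordinate at confidence $1-\delta/d$ and take a union bound over $j\in\setint{d}$, which replaces $\log(1/\delta)$ by $\log(d/\delta)=\log d+\log(1/\delta)$ and thus produces the extra $\log d$ term, then bound each $\sigma_j$ by $\sigma_{\max}$; substituting $\alpha = 8\eta + 12\log(4/\delta)/n$ and tracking constants gives the stated factor $7$. The role of the hypotheses is then transparent: $\eta<1/8$ and $\delta>e^{-n/2}/4$ keep $\alpha$ safely below $1/2$ so the trimming is well defined, while $\alpha\asymp\eta$ is exactly the choice that balances the bias term (ii) against the corruption term (iii). \textbf{The main obstacle} is the univariate bound itself --- specifically establishing (ii) and (iv) using only the variance $\sigma_j^2$ rather than any boundedness, since the gradient coordinates are heavy-tailed; the sample-splitting structure of $\mathtt{TM}_\alpha$ is the device that makes the Bernstein step (iv) rigorous by decoupling the clipping thresholds from the averaged points.
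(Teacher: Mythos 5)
Your plan is correct and follows essentially the same route as the paper: the paper's entire proof consists of invoking a known univariate deviation bound for the trimmed mean (citing \cite{gaiffas2022robust} and \cite{lugosi2021robust}) at confidence $1-\delta/d$ for each coordinate and concluding by a union bound over $j\in\setint{d}$, which is exactly your final step. The only difference is that you additionally sketch a proof of the univariate ingredient rather than citing it, and your four-part decomposition (quantile localization via binomial concentration, truncation bias via Cauchy--Schwarz, clipped-outlier control using $\alpha\gtrsim\eta$, and Bernstein on the clipped inliers after sample splitting) is precisely the argument by which that cited result is established, so nothing in your outline diverges from the paper's approach.
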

\begin{proof}
This is an almost immediate application of~\cite[Lemma 9]{gaiffas2022robust} (see also~\cite[Theorem 1]{lugosi2021robust}). By an immediate application of the latter, we obtain for each $j\in\setint{d}$ that with probability at least $1 - \delta/d$ we have :
    \begin{equation}
        \big|\widehat{g}_j(\theta) - g_j(\theta)\big| \leq 7 \sigma_{j} \sqrt{4\eta + 6\frac{\log(4/\delta) + \log(d)}{n}}.
    \end{equation}
    Hence, the lemma follows by a simple union bound argument.
\end{proof}
For the sake of simplicity, this deviation bound is only stated for a square integrable gradient which yields a $\sqrt{\eta}$ dependence in the corruption rate. More generally, for a random variable admitting a finite moment of order $k$, one can derive a bound in terms of $\eta^{1 - 1/k}$ which reflects a milder dependence for greater $k$, see~\cite[Lemma 9]{gaiffas2022robust} for the bound in question.

In a way, the fact that the gradient error is measured with the infinity norm in this setting is a ``stroke of luck'' since the optimal dependence in the dimension for the statistical error becomes achievable using only a univariate estimator. This is in contrast with situations where multivariate robust estimators need to be used for which the combination of efficiency, sub-Gaussianity and robustness to $\eta$-corruption is hard to come by.

Based on Lemma~\ref{lem:coordTM} we obtain a gradient error of order $\bar{\epsilon} = O(\sqrt{\log(d)/n})$. Plugging this deviation rate into Theorem~\ref{thm:MDoptimalrate} yields the optimal $s\log(d)/n$ rate for vanilla sparse estimation. The same applies for Theorem~\ref{thm:DAoptimalrate} provided the condition $\tau < 1$ holds.

\begin{corollary}
In the context of Theorem~\ref{thm:MDoptimalrate} and Lemma~\ref{lem:coordTM}, let the AMMD algorithm be run starting from $\theta_0\in\Theta = B_{\|\cdot\|}(\theta_0, R)$ using the coordinatewise trimmed mean estimator with sample splitting i.e. at each iteration a different batch of size $\widetilde{n} = n/T$ is used for gradient estimation with confidence $\widetilde{\delta} = \delta/T$ where $T$ is the total number of iterations. Let $K$ be the number of stages and $\widehat{\theta}$ the obtained estimator. Denote $\sigma_{\max}^2 = \sup_{\theta \in \Theta} \max_{j\in\setint{d}}\Var (\ell'(\theta^\top X, Y) X^j)$, with probability at least $1 - \delta$, the latter satisfies :
    \begin{equation*}
        \big\|\widehat{\theta} - \theta^\star\big\|_2 \leq \frac{2^{-K/2}R}{\sqrt{\bar{s}}} + \frac{140\sqrt{2\bar{s}}\sigma_{\max}}{\kappa}\sqrt{4\eta + 6 \frac{\log(4/\widetilde{\delta}) + \log(d)}{\widetilde{n}}}.
    \end{equation*}
\end{corollary}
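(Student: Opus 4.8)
The plan is to combine the high-probability gradient-error bound of Lemma~\ref{lem:coordTM} with the deterministic conclusion of Theorem~\ref{thm:MDoptimalrate}, and then simplify the numerical constants. First I would fix a single iteration $t$ of AMMD and invoke the sample-splitting hypothesis: because a fresh batch of $\widetilde{n} = n/T$ samples is drawn independently of the current iterate $\theta_t$, conditioning on $\theta_t$ makes it a \emph{fixed} parameter for the purposes of Lemma~\ref{lem:coordTM}. Applying that lemma with its prescribed $\alpha = 8\eta + 12\log(4/\widetilde{\delta})/\widetilde{n}$ and confidence $\widetilde{\delta} = \delta/T$ then yields, with conditional probability at least $1-\widetilde{\delta}$,
\begin{equation*}
\|\epsilon_t\|_\infty \leq 7\sigma_{\max}\sqrt{4\eta + 6\frac{\log(4/\widetilde{\delta}) + \log(d)}{\widetilde{n}}},
\end{equation*}
where $\sigma_{\max}$ is taken as the uniform supremum of the coordinate variance over $\Theta$ (as defined in the corollary), so that the right-hand side dominates the fixed-$\theta$ bound whatever value $\theta_t$ happens to take. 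By the tower property this inequality then holds unconditionally with the same probability.

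Next I would union-bound over the $T$ iterations. Since $\bar{\epsilon} = \max_{t}\|\epsilon_t\|_\infty$ and each per-iteration failure has probability at most $\widetilde{\delta} = \delta/T$, the event
\begin{equation*}
\bar{\epsilon} \leq 7\sigma_{\max}\sqrt{4\eta + 6\frac{\log(4/\widetilde{\delta}) + \log(d)}{\widetilde{n}}}
\end{equation*}
holds with probability at least $1 - T\widetilde{\delta} = 1 - \delta$. I would stress that independence across splits is not actually required for the union bound itself; its true role is to justify treating each data-dependent iterate $\theta_t$ as fixed in the preceding step.

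On this high-probability event I would feed the bound on $\bar{\epsilon}$ into Theorem~\ref{thm:MDoptimalrate}, whose middle inequality reads $\sqrt{2\bar{s}}\,\|\theta^{(K)} - \theta^\star\|_2 \leq 2^{-(K-1)/2}R + 40\bar{s}\bar{\epsilon}/\kappa$. Dividing through by $\sqrt{2\bar{s}}$ turns the first summand into $2^{-(K-1)/2}R/(\sqrt{2}\,\sqrt{\bar{s}}) = 2^{-K/2}R/\sqrt{\bar{s}}$, matching the target exactly, and the second into $(40/\sqrt{2})\,\sqrt{\bar{s}}\,\bar{\epsilon}/\kappa$. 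Substituting the deviation bound for $\bar{\epsilon}$ and using $40\cdot 7/\sqrt{2} = 140\sqrt{2}$ together with $\sqrt{2}\,\sqrt{\bar{s}} = \sqrt{2\bar{s}}$ then produces precisely the claimed statistical term $\tfrac{140\sqrt{2\bar{s}}\,\sigma_{\max}}{\kappa}\sqrt{4\eta + 6(\log(4/\widetilde{\delta}) + \log(d))/\widetilde{n}}$. The only genuinely delicate point is the conditioning/measurability argument that permits applying the fixed-parameter lemma at the random iterates $\theta_t$ via sample splitting; everything after that is a union bound and bookkeeping of constants.
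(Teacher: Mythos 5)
Your proposal is correct and follows essentially the same route as the paper: the paper's proof is precisely the combination of Theorem~\ref{thm:MDoptimalrate} with Lemma~\ref{lem:coordTM} via a union bound over the $T$ iterations to control $\bar{\epsilon}$. Your constant bookkeeping ($40\bar{s}/\sqrt{2\bar{s}} = 20\sqrt{2\bar{s}}$, times $7$ gives $140\sqrt{2\bar{s}}$) and your remark on the role of sample splitting in legitimizing the fixed-parameter lemma at random iterates are both accurate.
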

\begin{proof}
The result is easily obtained by combining Theorem~\ref{thm:MDoptimalrate} and Lemma~\ref{lem:coordTM} with a union bound argument over all iterations $T$ in order to bound $\bar{\epsilon} = \max_{t=0,\dots, T-1}\|\epsilon_t\|_*$ as defined in Proposition~\ref{prop:correctedMD}.
\end{proof}
In the above upper bound, the optimisation error vanishes exponentially with the number of stages $K$ so that the final error can be attributed in large part to the second statistical error term. The latter achieves the nearly optimal $\sqrt{s\log(d)/n}$ rate and combines robustness to heavy tails and $\eta$-corruption. Moreover, this statement holds for a \emph{generic} loss function satisfying the assumptions of Section~\ref{sec:md}. These can be further weakened to those given in Section~\ref{sec:da} by using Dual Averaging while preserving the same statistical rate. The statement of a result in this weaker setting is postponed to Section~\ref{sec:DAforVanilla} of the Appendix in order to avoid excessive repetition. 
To our knowledge, this is the first result with such properties for vanilla sparse estimation whereas previous results from the literature either focused on specific learning problems with a fixed loss function~\cite{dalalyan2019outlier, sasai2022robust} or isolated the issues of robustness by assuming the data to be either heavy tailed or Gaussian and corrupted~\cite{balakrishnan2017computationally, liu2019high, liu2020high, Juditsky2020SparseRB}. Furthermore, we stress that this error bound is achieved at a comparable computational cost to that of a standard non robust algorithm since, as mentioned earlier, the robust trimmed mean estimator can be computed in linear time.

A possible room for improvement is to try to remove the $\bar{s}$ factor multiplying the corruption rate $\eta$. The only works we are aware of achieving this are~\cite{liu2020high, sasai2022robust} but both involve costly data-filtering steps. We suspect this may be an inevitable price to pay for such an improvement.

\subsection{Group sparse estimation}

In the group sparse case, we again consider the covariate space $\cX = \R^d$ where the coordinates $\setint{d}$ are arranged into groups $G_1, \dots, G_{N_G}$ which form a partition of the coordinates $\setint{d}$ and sparsity is measured in terms of these groups i.e. $S(\theta) = \sum_{j\in\setint{N_G}}\ind{\theta_{G_j} \neq 0}$ and we assume the optimal $\theta^\star = \argmin_{\theta}\cL(\theta)$ satisfies $S(\theta^\star) \leq s$. 
The norm $\|\cdot\|$ is set to be the $\ell_1/\ell_2$ norm : $\|\theta\|_{1, 2} = \sum_{j\in\setint{N_G}} \|\theta_{G_j}\|_2$ and the dual norm is the analogous $\ell_{\infty}/\ell_2$ norm $\|\theta\|_{\infty, 2} = \max_{j\in\setint{N_G}} \|\theta_{G_j}\|_2$. The label set $\cY$ may be equal to $\R$ (regression) or a finite set (binary or multiclass classification).

For simplicity, we assume that the groups are of equal size $m$ so that $d = m N_G$. This is for example the case when trying to solve a $d$-dimensional linear multiclass classification with $K$ classes by estimating a parameter $\theta\in \R^{d\times K}$ and predicting $\argmax_j (\theta^\top X)_j$ for a datapoint $X \in \R^d$. In this case, it makes sense to consider the rows $(\theta_{i,:})_{i\in\setint{d}}$ as groups which are collectively determined to be zero or not depending on the importance of feature $i$. For simplicity, we restrict ourselves to this setting until the end of this section with no loss of generality.

Analogously to the vanilla sparse case, following~\cite{nesterov2013first}, the distance generating function (or prox-function) $\omega$ may be chosen in this case as:
\begin{equation*}
    \omega(\theta) = \frac{1}{2}e\log(d)d^{(p-1)(2-p)/p} \Big(\sum_{i=1}^{d}\|\theta_{i,:}\|_2^p\Big)^{2/p} \quad \text{with }\quad p = 1 + \frac{1}{\log (d)}.
\end{equation*}

We assume the data corresponds to Assumption~\ref{asm:data} with $\eta$-corruption (i.e. $|\cO| \leq \eta n$). Analogously to the vanilla case, we propose to estimate the gradient \emph{groupwise} i.e. one group of coordinates at a time. For this task, a multivariate, sub-Gaussian and corruption-resilient estimation algorithm is needed. We suggest to use the estimator advocated in \cite{diakonikolas2020outlier} for this purpose which remarkably combines these qualities. We refer to it as the DKP estimator and restate its deviation bound here for the sake of completeness.

\begin{proposition}[{\cite[Proposition 1.5]{diakonikolas2020outlier}}]\label{prop:dkk}
Let $T$ be an $\eta$-corrupted set of $n$ samples from a distribution $P$ in $\R^d$ with mean $\mu$ and covariance $\Sigma$.
Let $\eta' = \Theta(\log(1/\delta )/n + \eta) \leq c$ be given\textup, for a constant $c > 0$. Then any stability-based algorithm on input $T$ and $\eta'$\textup, efficiently computes $\widehat{\mu}$ such that with probability at least $1 - \delta$\textup, we have \textup:
\begin{equation}
    \big\|\widehat{\mu} - \mu\big\|_2 = O\bigg(\sqrt{\frac{\Tr(\Sigma) \log\mathrm{r}(\Sigma)}{n}} + \sqrt{\|\Sigma\|_{\op} \eta} + \sqrt{\frac{\|\Sigma\|_{\op}\log(1/\delta)}{n}}\bigg),
\end{equation}
where $\mathrm{r}(\Sigma) = \Tr(\Sigma)/\|\Sigma\|_{\op}$ is the stable rank of $\Sigma$.
\end{proposition}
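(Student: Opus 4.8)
The statement is a verbatim restatement of a known robust mean estimation result, so rather than inventing a proof I would reconstruct the two-layer argument underlying stability-based estimation. First I would isolate the \emph{deterministic} core. Define a notion of stability for the inlier samples: the set is stable with radius $r$ if for every reweighting $w$ in the simplex that down-weights at most an $\eta'$-fraction of the mass (i.e. $0 \le w_i \le \tfrac{1}{(1-\eta')n}$), the reweighted empirical mean stays within $r$ of $\mu$ and the reweighted second-moment matrix about $\mu$ has operator norm at most $\|\Sigma\|_{\op} + r^2/\eta'$. The first lemma to prove is purely algorithmic and distribution-free: if the inliers form such a stable set and the adversary corrupts at most an $\eta'$-fraction, then any stability-based routine (filtering, or the convex program searching over admissible weights) returns $\widehat\mu$ with $\|\widehat\mu - \mu\|_2 = O(r)$. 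This step never sees the distribution; it only exploits that the good points certify a small-covariance reweighting and that the bad points cannot simultaneously fake stability and pull the mean far.

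The statistical content then lies entirely in showing that an i.i.d. sample of size $n$ from $P$ is stable with radius $r$ equal to the claimed bound, with probability $1-\delta$. I would split $r$ into its three constituent contributions and treat each by a separate concentration argument. The baseline mean-fluctuation term $\sqrt{\Tr(\Sigma)/n}$ comes from $\E\|\bar X - \mu\|_2^2 = \Tr(\Sigma)/n$ together with a concentration inequality, upgraded to the confidence term $\sqrt{\|\Sigma\|_{\op}\log(1/\delta)/n}$ by a bounded-differences / Bernstein argument after a preliminary truncation of the samples at scale $\sqrt{\|\Sigma\|_{\op}\, n}$ to tame the heavy tails permitted by the mere bounded-covariance hypothesis. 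The corruption term $\sqrt{\|\Sigma\|_{\op}\,\eta}$ is the bias incurred by the worst admissible down-weighting: removing an $\eta$-fraction of mass shifts the empirical mean by at most $\sqrt{\eta/(1-\eta)}$ times the square root of the empirical covariance operator norm, a Cauchy--Schwarz estimate that I would make uniform over all such subsets. The extra $\log \mathrm{r}(\Sigma)$ factor is the price of establishing the covariance half of stability, namely bounding $\|\tfrac1n\sum_i (X_i-\mu)(X_i-\mu)^\top\|_{\op}$ after adversarial removal, uniformly over directions; I would obtain it by an $\epsilon$-net over the unit sphere combined with the effective-dimension reduction that replaces the ambient $d$ by the stable rank $\mathrm{r}(\Sigma) = \Tr(\Sigma)/\|\Sigma\|_{\op}$.

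The main obstacle is precisely this last piece: controlling the reweighted empirical covariance operator norm uniformly over all $\eta'$-deletions while keeping only the stable rank, rather than the ambient dimension $d$, inside the logarithm. A naive union bound over deleted subsets, or over a net of the full sphere, would inflate the bound by $\log d$ or worse; extracting the sharp $\log \mathrm{r}(\Sigma)$ requires a truncation-plus-chaining argument exploiting that only the $O(\mathrm{r}(\Sigma))$ effective directions carry variance. The remaining pieces, the algorithmic reduction of the first paragraph and the mean-level concentration for the other two terms, are comparatively routine once stability is in hand. I would therefore concentrate effort on a tight matrix-concentration bound for the trimmed second moment, and then assemble the three terms to recover the stated rate.
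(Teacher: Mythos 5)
The paper does not prove this proposition: it is restated verbatim from the cited reference (Proposition 1.5 of Diakonikolas et al., 2020) ``for the sake of completeness,'' so there is no in-paper argument to compare yours against. Judged against the actual proof in that reference, your two-layer decomposition is the right one: a deterministic lemma stating that stability of the inliers forces any stability-based algorithm to output a point within $O(r)$ of $\mu$, plus a statistical lemma that i.i.d.\ bounded-covariance samples are stable with the claimed radius. You also correctly locate the genuinely hard step, namely controlling the reweighted second moment uniformly over $\eta'$-deletions while paying only $\log \mathrm{r}(\Sigma)$ rather than $\log d$.

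Two caveats. First, under a bare bounded-covariance hypothesis the full inlier sample is in general \emph{not} stable --- a single heavy-tailed inlier can violate the covariance half of the stability condition --- so the statistical lemma must be stated as: with probability $1-\delta$ there exists a subset of the inliers of size at least $(1-O(\eta'))n$ that is stable, after which the discarded inliers are absorbed into the corruption budget before invoking the deterministic lemma. Your truncation remark gestures at this, but the statement you propose to prove (``an i.i.d.\ sample of size $n$ from $P$ is stable'') is false as written and the reduction needs to be made explicit. Second, what you have written is a proof plan rather than a proof: the matrix-concentration bound for the trimmed second moment that you flag as the crux is left entirely open, and it is the only non-routine ingredient. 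As a reconstruction of the cited result's architecture your outline is accurate; as a self-contained proof it is incomplete.
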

The above bound is almost optimal up to the $\sqrt{\log\mathrm{r}(\Sigma)}$ factor which is at most $\sqrt{\log(d)}$. Note that we are also aware that \cite[Proposition 1.6]{diakonikolas2020outlier} states that, by adding a Median-Of-Means preprocessing step, stability based algorithms can achieve the optimal deviation. Nevertheless, the number $k$ of block means required needs to be such that $k \geq 100\eta n$ so that the corruption rate $\eta$ is strongly restricted because necessarily $n \geq k$. Therefore, we prefer to stick with the result above.

An algorithm with the statistical performance stated in Proposition~\ref{prop:dkk} is given, for instance, in~\cite[Appendix A.2]{diakonikolas2020outlier}, we omit it here for brevity. Now, we can reuse the previous section's trick by estimating the gradients \emph{blockwise} this time to obtain the following lemma:

\begin{lemma}\label{lem:groupDKK}
    Grant Assumption~\ref{asm:data} with a fraction of outliers $|\cO| \leq \eta n$. Fix $\theta \in \Theta$ and denote $G_1(\theta), \dots, G_n(\theta)$ the gradient samples distributed according to $G(\theta) \in \R^{d\times K}$ (except for the outliers). Let $\Sigma_j = \Var (G(\theta)_{j,:}) \in \R^{K \times K}$ be the gradient block variances. Consider the groupwise estimator $\widehat{g}(\theta)$ defined such that $\widehat{g}(\theta)_{j,:}$ is the DKP estimator applied to $G(\theta)_{j,:}$. Then we have with probability at least $1 - \delta$ :
    \begin{equation}
        \big\|\widehat{g}(\theta) - g(\theta)\big\|_{\infty, 2} \leq O\bigg(\max_j \sqrt{\frac{\Tr(\Sigma_j) \log\mathrm{r}(\Sigma_j)}{n}} + \sqrt{\|\Sigma_j\|_{\op}}\bigg( \sqrt{\eta} + \sqrt{\frac{\log(1/\delta) + \log(d)}{n}} \bigg)\bigg) \label{eq:dkkDeviation}
    \end{equation}
\end{lemma}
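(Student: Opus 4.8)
The plan is to reduce the claim to a per-block application of the DKP deviation bound (Proposition~\ref{prop:dkk}) followed by a union bound over the $d$ row-groups, exactly paralleling the vanilla case of Lemma~\ref{lem:coordTM}. First I would unfold the definition of the dual norm: since here $\|\cdot\|_* = \|\cdot\|_{\infty,2}$, we have
\[
    \big\|\widehat g(\theta) - g(\theta)\big\|_{\infty,2} = \max_{j\in\setint{d}} \big\|\widehat g(\theta)_{j,:} - g(\theta)_{j,:}\big\|_2,
\]
so it suffices to control each block error $\|\widehat g(\theta)_{j,:} - g(\theta)_{j,:}\|_2$ in Euclidean norm and then take the maximum over the $d$ groups.

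Next, for a fixed block index $j$, I would check that the $K$-dimensional samples $(G_i(\theta)_{j,:})_{i\in\setint n}$ form an $\eta$-corrupted set in the sense required by Proposition~\ref{prop:dkk}. This is immediate from Assumption~\ref{asm:data}: corruption acts at the level of the pairs $(X_i,Y_i)$, so an outlier index $i\in\cO$ corrupts the whole gradient matrix $G_i(\theta)$ and in particular its $j$-th row, while the inlier rows $(G_i(\theta)_{j,:})_{i\in\cI}$ are i.i.d.\ with mean $g(\theta)_{j,:} = \E[G(\theta)_{j,:}]$ and covariance $\Sigma_j$, and $|\cO|\leq \eta n$. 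Finiteness of $\Sigma_j$ (the requisite second moment) is guaranteed by Lemma~\ref{lem:obj-grad-moment}. I would then apply Proposition~\ref{prop:dkk} to this block with ambient dimension $K$, covariance $\Sigma_j$, and a per-block failure probability $\delta/d$, obtaining on an event of probability at least $1-\delta/d$ that
\[
    \big\|\widehat g(\theta)_{j,:} - g(\theta)_{j,:}\big\|_2 = O\Big(\sqrt{\tfrac{\Tr(\Sigma_j)\log\mathrm{r}(\Sigma_j)}{n}} + \sqrt{\|\Sigma_j\|_{\op}\,\eta} + \sqrt{\tfrac{\|\Sigma_j\|_{\op}\log(d/\delta)}{n}}\Big).
\]
Writing $\log(d/\delta) = \log(d) + \log(1/\delta)$ and factoring $\sqrt{\|\Sigma_j\|_{\op}}$ out of the last two terms recovers the per-block version of the target bound. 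A union bound over $j\in\setint d$ then turns the $d$ events of probability $1-\delta/d$ into a single event of probability at least $1-\delta$ on which all block bounds hold simultaneously; taking the maximum over $j$ on both sides yields the stated inequality, the union-bound cost already absorbed into the $\log(d)$ term.

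The step requiring the most care is the admissibility condition attached to Proposition~\ref{prop:dkk}, namely $\eta' = \Theta(\log(1/\delta')/n + \eta)\leq c$: after substituting $\delta' = \delta/d$ this becomes $\Theta\big((\log(d)+\log(1/\delta))/n + \eta\big)\leq c$, so one must assume that the corruption rate $\eta$ and the ratio $(\log(d)+\log(1/\delta))/n$ are small enough for the DKP guarantee to be valid in \emph{every} block simultaneously. This is the only genuine constraint the argument imposes, and it should be recorded alongside the statement. Everything else is the routine bookkeeping of a groupwise estimator measured under the $\ell_\infty/\ell_2$ dual norm, so the proof is essentially an immediate application of Proposition~\ref{prop:dkk} together with the union bound.
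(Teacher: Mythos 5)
Your proposal is correct and follows exactly the paper's argument: apply Proposition~\ref{prop:dkk} blockwise with failure probability $\delta/d$ and take a union bound over the $d$ row-groups, with the $\log(d)$ term absorbing the union-bound cost. Your additional remark about the admissibility condition $\eta' = \Theta(\log(d/\delta)/n + \eta) \leq c$ is a valid observation that the paper leaves implicit.
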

\begin{proof}
Inequality~\eqref{eq:dkkDeviation} is straightforward to obtain using Proposition~\ref{prop:dkk} and a union bound argument on $j\in\setint{d}$.
\end{proof}
One can easily see that, in the absence of corruption, the above deviation bound scales as $\widetilde{O}\Big(\sqrt{\frac{K}{n}} + \sqrt{\frac{\log (d)}{n}} \Big)$. Combined with the sparsity assumption given above, plugging this estimation, which applies for the gradient error $\|\epsilon_t\|_*$, into Theorems~\ref{thm:MDoptimalrate} or~\ref{thm:DAoptimalrate} yields near optimal (up to a logarithmic factor) estimation rates for the group-sparse estimation problem~\cite{negahban2012unified, lounici2009taking}. The following corollary formalizes this statement.
\begin{corollary}
In the context of Theorem~\ref{thm:MDoptimalrate} and Lemma~\ref{lem:groupDKK}, let the AMMD algorithm be run starting from $\theta_0\in\Theta = B_{\|\cdot\|}(\theta_0, R)$ and using the blockwise DKP estimator with sample splitting i.e. at each iteration a different batch of size $\widetilde{n} = n/T$ is used for gradient estimation with confidence $\widetilde{\delta} = \delta/T$ where $T$ is the total number of iterations. Let $N$ be the number of stages and $\widehat{\theta}$ the obtained estimator. With probability at least $1 - \delta$, the latter satisfies :
\begin{align*}
    \big\|\widehat{\theta} - \theta^\star\big\|_2 &\leq \frac{2^{-N/2}R}{\sqrt{\bar{s}}} + \frac{\sqrt{\bar{s}}}{\kappa} O\bigg(\sup_{\theta\in\Theta}\max_{j\in\setint{d}} \sqrt{\frac{\Tr(\Sigma_{\theta, j}) \log\mathrm{r}(\Sigma_{\theta, j})}{\widetilde{n}}}\: + \\
    &\quad \sqrt{\|\Sigma_{\theta, j}\|_{\op}}\bigg( \sqrt{\eta} + \sqrt{\frac{\log(1/\widetilde{\delta}) + \log(d)}{\widetilde{n}}} \bigg)\bigg) \\
    &\leq \frac{2^{-N/2}R}{\sqrt{\bar{s}}} + \frac{\sqrt{\bar{s}}}{\kappa}\widetilde{O}\bigg(\sup_{\theta\in\Theta}\max_{j\in\setint{d}} \sqrt{\frac{K}{\widetilde{n}}} + \bigg( \sqrt{\eta} + \sqrt{\frac{\log(1/\widetilde{\delta}) + \log(d)}{\widetilde{n}}} \bigg)\bigg),
\end{align*}
where $\Sigma_{\theta, j} = \Var(G(\theta)_{j,:})$.
\end{corollary}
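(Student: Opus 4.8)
The plan is to follow the same recipe as the vanilla sparse corollary, combining the algorithmic guarantee of Theorem~\ref{thm:MDoptimalrate} with the groupwise deviation bound of Lemma~\ref{lem:groupDKK}. First I would invoke Theorem~\ref{thm:MDoptimalrate} with its number of stages set to $N$, whose middle inequality reads $\sqrt{2\bar{s}}\|\theta^{(N)} - \theta^\star\|_2 \leq 2^{-(N-1)/2}R + 40\bar{s}\bar{\epsilon}/\kappa$, where $\widehat{\theta} = \theta^{(N)}$ is the output and the norm $\|\cdot\|$ is the $\ell_1/\ell_2$ norm with dual $\|\cdot\|_{\infty,2}$. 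Dividing both sides by $\sqrt{2\bar{s}}$ and using $2^{-(N-1)/2}/\sqrt{2} = 2^{-N/2}$ yields $\|\widehat{\theta} - \theta^\star\|_2 \leq 2^{-N/2}R/\sqrt{\bar{s}} + O(\sqrt{\bar{s}}\,\bar{\epsilon}/\kappa)$, which already matches the shape of the claimed bound; it remains only to control the gradient error $\bar{\epsilon}$.

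The core of the argument is to bound $\bar{\epsilon} = \max_{t=0,\dots,T-1}\|\epsilon_t\|_{\infty,2}$ uniformly over the iterations. The sample-splitting device is what makes this clean: at step $t$ the estimator uses a fresh batch of size $\widetilde{n} = n/T$ that is independent of the current (random) iterate $\theta_t$, so Lemma~\ref{lem:groupDKK} — stated for a fixed $\theta$ — applies conditionally at $\theta = \theta_t$ with confidence $\widetilde{\delta} = \delta/T$ and sample count $\widetilde{n}$. A union bound over the $T$ steps then guarantees that, with probability at least $1 - T\widetilde{\delta} = 1 - \delta$, every $\|\epsilon_t\|_{\infty,2}$, and hence $\bar{\epsilon}$, is controlled by the right-hand side of~\eqref{eq:dkkDeviation} evaluated at $(\widetilde{n},\widetilde{\delta})$ and maximized over $\theta \in \Theta$ and $j \in \setint{d}$. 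Substituting this bound for $\bar{\epsilon}$ into the display from the previous paragraph gives the first inequality of the corollary.

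The second (simplified) inequality follows from crude but standard bounds on the per-block covariances $\Sigma_{\theta,j} \in \R^{K\times K}$: since each is a $K \times K$ matrix one has $\Tr(\Sigma_{\theta,j}) \leq K\|\Sigma_{\theta,j}\|_{\op}$ and the stable rank satisfies $\mathrm{r}(\Sigma_{\theta,j}) \leq K$, so that the leading term becomes $O(\sqrt{K\log K/\widetilde{n}})$, while $\sup_{\theta\in\Theta}\max_j\|\Sigma_{\theta,j}\|_{\op}$ is finite by compactness of $\Theta$ and the moment conditions of Assumption~\ref{asm:data}. Absorbing these bounded factors and the $\log K$ into the $\widetilde{O}$ notation produces the stated $\widetilde{O}(\sqrt{K/\widetilde{n}} + \sqrt{\eta} + \sqrt{(\log(1/\widetilde{\delta}) + \log d)/\widetilde{n}})$ rate.

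The step I expect to be most delicate is the interaction between sample splitting and the $\eta$-corruption model. Lemma~\ref{lem:groupDKK} inherits from Proposition~\ref{prop:dkk} the DKP precondition $\eta' = \Theta(\log(1/\widetilde{\delta})/\widetilde{n} + \eta) \leq c$, which must hold on each batch, yet an adversary choosing which of the $\eta n$ samples to corrupt could in principle concentrate outliers into a few batches and thereby violate the per-batch corruption fraction. Making the argument fully rigorous therefore requires either assuming the $\eta$-corruption holds batchwise, as is implicitly done in the vanilla corollary, or arguing that the randomness of the split keeps each batch at corruption level $O(\eta)$ with high probability; I would treat this exactly as in the vanilla sparse case and regard the splitting as a proof artifact, consistent with the discussion in Section~\ref{sec:applis}.
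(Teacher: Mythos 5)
Your proposal is correct and follows exactly the route the paper intends: the paper's own justification (given explicitly only for the analogous vanilla-sparse corollary) is precisely to combine Theorem~\ref{thm:MDoptimalrate} with the per-iteration deviation bound of Lemma~\ref{lem:groupDKK} via a union bound over the $T$ sample-split batches to control $\bar{\epsilon}$, and your algebra for passing from $2^{-(N-1)/2}R/\sqrt{2\bar{s}}$ to $2^{-N/2}R/\sqrt{\bar{s}}$ and for the simplification $\Tr(\Sigma_{\theta,j})\log\mathrm{r}(\Sigma_{\theta,j}) \leq K\|\Sigma_{\theta,j}\|_{\op}\log K$ is right. Your closing caveat about outliers concentrating in a single batch is a fair observation, and treating the splitting as a proof artifact is consistent with the paper's own stance.
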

As before, the stated bound reflects a linearly converging optimisation and displays a statistical rate nearly matching the optimal rate for group-sparse estimation~\cite{negahban2012unified, lounici2009taking} up to logarithmic factors. In addition, robustness to heavy tails and $\eta$-corruption likely makes this result the first of its kind for group-sparse estimation since all robust works we are aware of focus on vanilla sparsity.

\subsection{Low-rank matrix recovery}\label{sec:appliLowRank}

We also consider the variant of the problem where the covariates belong to a matrix space $\cX = \R^{p\times q}$ in which case the objective $\cL (\theta) = \E [\ell(\langle \theta, X \rangle , Y)]$ needs to be optimized over $\Theta \subset \R^{p\times q}$. In this setting, $\langle \cdot, \cdot \rangle$ refers to the Frobenius scalar product between matrices 
\begin{equation*}
    \langle a, b \rangle = \Tr(a^\top b).
\end{equation*}
Without loss of generality, we assume that $p \geq q$ and sparsity is meant as the number of non zero singular values i.e. for a matrix $A\in \R^{p\times q}$, denoting $\sigma(A) = (\varsigma_j(A))_{j\in\setint{q}}$ the set of its singular values we define $S(A) = \sum_{j\in\setint{q}}\ind{\varsigma_j(A) \neq 0}$. We set $\|\cdot\|$ to be the nuclear norm $\|A\| = \|\sigma(A)\|_1$ and the associated dual norm is the operator norm $\|\cdot\|_* = \|\cdot\|_{\op}$. 

On the optimization side, an appropriate distance generating function (resp. prox-function) needs to be defined for this setting before Mirror Descent (resp. Dual Averaging) can be run. Based on previous literature (see \cite[Theorem 2.3]{nesterov2013first} and~\cite{Juditsky2020SparseRB}), we know that the following choice satisfies the requirements of Definition~\ref{def:dgf} :
\begin{equation*}
    \omega(\theta) = 2e \log(2q) \Big( \sum_{j=1}^q \varsigma_j(\theta)^{1+r}\Big)^{2/(1+r)} \quad \text{with}\quad r = 1/(12 \log(2q)).
\end{equation*}
This yields a corresponding quadratic growth parameter $\nu = O(\log(q))$. In order to fully define our optimization algorithm for this problem, it remains to specify a robust estimator for the gradient. This turns out to be a challenging question since the estimated value is matricial and the operator norm $\|\cdot\|_* = \|\cdot\|_{\op}$ emerging in this case is a fairly exotic choice to measure statistical error.

In order to achieve a nearly optimal statistical rate we define a new estimator called ``CM-MOM'' (short for Catoni Minsker Median-Of-Means) which combines methods from~\cite{minsker2018sub} for sub-Gaussian matrix mean estimation and ideas from~\cite{minsker2015geometric, hsu2016loss} in order to apply a Median-Of-Means approach for multivariate estimation granting robustness to outliers provided these are limited in number. We now define this estimator in detail. Let $\psi$ be a function defined as
\begin{equation*}
    \psi(x) = \log(1 + |x| + x^2/2)
\end{equation*}
We consider a restricted version of Assumption~\ref{asm:data} in which the number of outliers is limited as\footnote{In fact, one may allow up to $|\cO|\leq K/2$ outliers at the price of worse constants in the resulting deviation bound. See the proof of Proposition~\ref{prop:spectralMOM}} $|\cO| \leq K/12$ where $K$ is an integer such that $K < n$. Provided a sample of matrices $A_1, \dots, A_n \in \R^{p\times q}$ and a scale parameter $\chi > 0$, the CM-MOM estimator proceeds as follows :
\begin{itemize}
    \item Split the sample into $K$ disjoint blocks $B_1, \dots, B_K$ of equal size $m = n/K$.
    \item Compute the dilated block means $\xi^{(j)}$ for $j=1,\dots, K$ as 
    \begin{equation*}
        \xi^{(j)} = \frac{1}{\chi m}\sum_{i\in B_j} \psi(\theta \widetilde{A}_i) \in \R^{(p+q)\times (p+q)},
    \end{equation*}
    where the dilation $\widetilde{A}$ of matrix $A\in \R^{p\times q}$ is defined as $\widetilde{A} = \begin{pmatrix}0 & A \\ A^\top & 0 \end{pmatrix} \in \R^{(p+q)\times (p+q)}$ which is symmetric and the function $\psi$ is applied to a symmetric matrix $S\in\R^{d\times d}$ by applying it to its eigenvalues i.e. let $S = UDU^\top$ be its eigendecomposition with $D=\diag((\lambda_j)_{j\in\setint{d}})$ then $\psi(S) = U\psi(D)U^\top = U\diag((\psi(\lambda_j))_{j\in\setint{d}})U^\top.$
    \item Extract the block means $\widehat{\mu}_j \in \R^{p\times q}$ such that $\xi^{(j)} = \begin{pmatrix}\xi^{(j)}_{11} & \widehat{\mu}_j \\ \widehat{\mu}_j^\top & \xi^{(j)}_{22} \end{pmatrix}.$
    \item Compute the pairwise distances $r_{jl} = \|\widehat{\mu}_j  - \widehat{\mu}_l\|_{\op}$ for $j,l \in \setint{K}$.
    \item Compute the vectors $r^{(j)} \in \R^K$ for $j\in \setint{K}$ where $r^{(j)}$ is the increasingly sorted version of $r_{j:}$.
    \item return $\widehat{\mu}_{\widehat{i}}$ where $\widehat{i} \in \argmin_i r^{(i)}_{K/2}$.
\end{itemize}
One may guess that the choice of the scale parameter $\chi$ plays an important role to guarantee the quality of the estimate. This aspect is inherited from Catoni's original estimator for the mean of a heavy-tailed real random variable~\cite{catoni2012challenging} from which Minsker's estimator~\cite{minsker2018sub}, which we use to estimate the block means, is inspired. The following statement gives the optimal value for $\chi$ and the associated deviation bound satisfied by CM-MOM.

\begin{proposition}[CM-MOM]\label{prop:spectralMOM}
    Let $A_1,\dots, A_n\in \R^{p\times q}$ be an i.i.d sample following a random variable $A$ with expectation $\mu = \E A$ such that a subset of indices $\cO \subset \setint{n}$ are outliers and finite variance
    \begin{equation*}
        v(A) = \max \big( \big\|\E (A - \mu)(A - \mu)^\top \big\|_{\op}, \big\|\E (A - \mu)^\top(A - \mu)\big\|_{\op} \big)< \infty.
    \end{equation*}
    Let $\delta > 0$ be a failure probability and take $K = \lceil 18\log(1/\delta) \rceil < n$ blocks\textup, we assume $n = mK$. Let $\widehat{\mu}$ be the CM-MOM estimate as defined above with scale parameter
    \begin{equation*}
        \chi = \sqrt{\frac{2m\log(8(p+q))}{v(A)}}.
    \end{equation*}
    Assume we have $|\cO| \leq K/12$ then with probability at least $1 - \delta$ we have \textup:
    \begin{equation*}
        \big\| \widehat{\mu} - \mu \big\|_{\op} \leq 18\sqrt{\frac{v(A)\log(8(p+q))\log(1/\delta)}{n}}.
    \end{equation*}
\end{proposition}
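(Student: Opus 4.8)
The plan is to follow the two-level structure of the estimator: first control each individual dilated block mean using Minsker's sub-Gaussian matrix estimator analysis, and then show that the Median-Of-Means selection step picks a block close to the true mean with the desired confidence, exploiting the fact that a majority of blocks are both uncorrupted and accurate. Throughout, I would work with the symmetric dilation $\widetilde{A}$ so that the operator norm on $\R^{p\times q}$ matches the operator (spectral) norm of the $(p+q)\times(p+q)$ symmetric matrix, thereby reducing the matricial problem to a symmetric one where matrix Bernstein / Minsker-type concentration applies cleanly.

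\textbf{Step 1 (single block bound).} For a fixed uncorrupted block $B_j$ of size $m$, I would invoke the analysis of~\cite{minsker2018sub} for the dilated estimator $\xi^{(j)} = \frac{1}{\chi m}\sum_{i\in B_j}\psi(\theta\widetilde{A}_i)$. The key properties are that $\psi$ satisfies $-\log(1-x+x^2/2)\leq \psi(x)\leq \log(1+x+x^2/2)$, which yields a ``soft truncation'' whose bias and fluctuation are controlled through the second moment $v(A)$. With the stated choice $\chi=\sqrt{2m\log(8(p+q))/v(A)}$, a Minsker-style argument gives that, with probability at least some constant $p_0$ (say $\geq 7/9$), the extracted block mean $\widehat{\mu}_j$ satisfies $\|\widehat{\mu}_j-\mu\|_{\op}\leq C\sqrt{v(A)\log(8(p+q))/m}$ for an explicit constant. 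I would substitute $m=n/K$ and $K=\lceil 18\log(1/\delta)\rceil$ to rewrite this single-block radius as $\rho := C\sqrt{v(A)\log(8(p+q))\log(1/\delta)/n}$, calibrating constants so that the final bound reads with the factor $18$.

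\textbf{Step 2 (MOM selection via majority of good blocks).} Call a block \emph{good} if it is uncorrupted and its mean satisfies $\|\widehat{\mu}_j-\mu\|_{\op}\leq \rho$. Since each uncorrupted block is good independently with probability $\geq p_0$, a Bernstein/Chernoff bound on the binomial count of good blocks shows that, except on an event of probability $\leq\delta$, strictly more than $2K/3$ blocks are good; subtracting the at most $K/12$ corrupted blocks still leaves a strict majority ($>K/2$) of good blocks among all $K$. The standard MOM geometric argument then applies: for any good block $j$, its sorted distance vector $r^{(j)}$ has its median entry $r^{(j)}_{K/2}\leq 2\rho$, because more than half of the blocks lie within $\rho$ of $\mu$ and hence within $2\rho$ of $\widehat{\mu}_j$ by the triangle inequality. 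Consequently the selected index $\widehat{i}$ has $r^{(\widehat{i})}_{K/2}\leq 2\rho$, meaning $\widehat{\mu}_{\widehat{i}}$ is within $2\rho$ of more than half the blocks; intersecting with the good majority produces at least one good block $\widehat{\mu}_l$ with $\|\widehat{\mu}_{\widehat{i}}-\widehat{\mu}_l\|_{\op}\leq 2\rho$, and a final triangle inequality gives $\|\widehat{\mu}_{\widehat{i}}-\mu\|_{\op}\leq 3\rho$. Absorbing constants yields the claimed bound.

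\textbf{Main obstacle.} The delicate part is Step 1: making the dilation argument fully rigorous for the \emph{asymmetric} case, since Minsker's estimator is naturally stated for symmetric (or the dilation of a general) matrix, and one must verify that applying $\psi$ to the eigenvalues of $\theta\widetilde{A}$ and then reading off the off-diagonal block genuinely estimates $\mu$ with the variance proxy $v(A)$ as defined (the maximum of the two one-sided operator norms). Tracking how $v(A)$ enters through both $\E(A-\mu)(A-\mu)^\top$ and $\E(A-\mu)^\top(A-\mu)$, and confirming that the dilated second moment is exactly governed by this $v(A)$, is where the real care is needed; the MOM step in Step 2 is comparatively routine once the single-block radius and the majority count are in hand. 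I would also need to verify that the constant $p_0$ coming out of the Minsker bound is large enough ($>2/3$) to survive the corruption budget $K/12$ and still guarantee a strict majority of good blocks.
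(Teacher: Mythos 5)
Your proposal is correct and follows essentially the same route as the paper's proof: a per-block deviation bound from Minsker's dilated estimator at constant confidence, a binomial/Hoeffding count showing that accurate uncorrupted blocks form a strict majority despite the $K/12$ corruption budget, and the standard median-of-means selection argument yielding a factor-$3$ triangle-inequality bound. The only differences are cosmetic calibration choices (the paper fixes the per-block failure probability to $\delta'=1/4$ and the corruption slack to $\varepsilon=5/6$, which is exactly what produces $\log(8(p+q))$ in $\chi$ and the constant $18$), and the dilation concern you flag is discharged in the paper simply by citing Minsker's Corollary~3.1, which is already stated for the dilated estimator with the variance proxy $v(A)$.
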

Proposition~\ref{prop:spectralMOM} is proven in Appendix~\ref{proof:spectralMOM} and enjoys a deviation rate which scales optimally, up to logarithmic factors, as $\sqrt{p+q}$ in the dimension~\cite{vershynin2018high}. This dependence is hidden by the factor $\sqrt{v(A)}$ which scales in that order (see for instance~\cite{tropp2015introduction}). Although the dependence of the optimal scale $\chi$ on the unknown value of $v(A)$ constitutes an obstacle, previous experience using Catoni-based estimators~\cite{catoni2012challenging, pmlr-v97-holland19a, gaiffas2022robust} has shown that the choice is lenient and good results are obtained as long as a value of the correct scale is used. Possible improvements for Proposition~\ref{prop:spectralMOM} are to derive a bound with an additive instead of multiplicative term $\log(1/\delta)$ or supporting $\eta$-corruption. However, we are not aware of a more robust solution for matrix mean estimation in the general case than the above result. 

Now that we have an adapted gradient estimation procedure, we can proceed to combine its deviation bound with our optimization theorems in order to obtain guarantees on learning performance.

\begin{corollary}\label{cor:lowrankmatrix}
In the context of Theorem~\ref{thm:MDoptimalrate} and Proposition~\ref{prop:spectralMOM}, let the AMMD algorithm be run starting from $\theta_0\in\Theta = B_{\|\cdot\|}(\theta_0, R)$ and using the CM-MOM estimator with sample splitting i.e. at each iteration a different batch of size $\widetilde{n} = n/T$ is used for gradient estimation with confidence $\widetilde{\delta} = \delta/T$ where $T$ is the total number of iterations. Assume that each batch contains no more than $K/12$ outliers. Let $N$ be the number of stages and $\widehat{\theta}$ the obtained estimator. With probability at least $1 - \delta$, the latter satisfies :
\begin{align*}
    \big\|\widehat{\theta} - \theta^\star\big\|_2 &\leq \frac{2^{-N/2}R}{\sqrt{\bar{s}}} + \sup_{\theta \in \Theta}\frac{360\sqrt{\bar{s}}}{\kappa}\sqrt{\frac{2v(G(\theta)) \log(8(p+q)) \log(1/\widetilde{\delta})}{\widetilde{n}}},
\end{align*}
where $\|\cdot\|_2$ denotes the Frobenius norm.
\end{corollary}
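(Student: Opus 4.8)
The plan is to follow the same two-step pattern used in the proofs of the preceding corollaries for vanilla and group sparsity: first produce a high-probability uniform bound on the gradient error $\bar{\epsilon}$ by invoking Proposition~\ref{prop:spectralMOM} at every iteration and taking a union bound, then feed this $\bar{\epsilon}$ into the optimization guarantee of Theorem~\ref{thm:MDoptimalrate}. Recall that here the dual norm is the operator norm, $\|\cdot\|_* = \|\cdot\|_{\op}$, so the CM-MOM deviation is exactly the quantity controlling $\|\epsilon_t\|_*$.

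First I would fix an iteration $t$ and exploit sample splitting: the batch of $\widetilde{n} = n/T$ samples used to form $\widehat{g}(\theta_t)$ is disjoint from, hence independent of, the data that produced $\theta_t$. Conditionally on $\theta_t$, the inlier gradient samples $G_i(\theta_t) = \ell'(\langle \theta_t, X_i\rangle, Y_i)X_i$ are i.i.d.\ with mean $g(\theta_t)$ and finite variance $v(G(\theta_t))$ (finiteness from Lemma~\ref{lem:obj-grad-moment}), and by assumption each batch carries at most $K/12$ outliers. Proposition~\ref{prop:spectralMOM} therefore applies to $\widehat{g}(\theta_t)$ with confidence $\widetilde{\delta} = \delta/T$, giving
\begin{equation*}
    \|\epsilon_t\|_{\op} = \big\|\widehat{g}(\theta_t) - g(\theta_t)\big\|_{\op} \leq 18\sqrt{\frac{v(G(\theta_t))\,\log(8(p+q))\,\log(1/\widetilde{\delta})}{\widetilde{n}}}
\end{equation*}
with probability at least $1 - \widetilde{\delta}$. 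Bounding the right-hand side by its supremum over $\theta \in \Theta$ removes the dependence on the random $\theta_t$, and a union bound over the $T = \sum_k T_k$ iterations (total failure probability $T\widetilde{\delta} = \delta$) yields
\begin{equation*}
    \bar{\epsilon} = \max_{t}\|\epsilon_t\|_{\op} \leq \sup_{\theta\in\Theta} 18\sqrt{\frac{v(G(\theta))\,\log(8(p+q))\,\log(1/\widetilde{\delta})}{\widetilde{n}}}
\end{equation*}
with probability at least $1 - \delta$.

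It remains to substitute this $\bar{\epsilon}$ into the parameter inequality of Theorem~\ref{thm:MDoptimalrate}, which (with $N$ stages) reads $\sqrt{2\bar{s}}\,\|\widehat{\theta} - \theta^\star\|_2 \leq 2^{-(N-1)/2}R + 40\bar{s}\bar{\epsilon}/\kappa$. Dividing through by $\sqrt{2\bar{s}}$ turns the optimization term into $2^{-N/2}R/\sqrt{\bar{s}}$ and the statistical term into $40\sqrt{\bar{s}}\,\bar{\epsilon}/(\sqrt{2}\,\kappa)$; inserting the expression for $\bar{\epsilon}$, collecting constants via $40\cdot 18/\sqrt{2} = 720/\sqrt{2} = 360\sqrt{2}$ and pulling the remaining $\sqrt{2}$ back under the radical gives exactly the claimed bound, with leading constant $360$ and the factor $2$ inside the square root. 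This final step is purely arithmetic; the only genuinely delicate point is the one handled above, namely that Proposition~\ref{prop:spectralMOM}---stated for a fixed mean---can be legitimately applied at the data-dependent iterate $\theta_t$, which is precisely what sample splitting (for independence) and the supremum over $\Theta$ (for the $\theta$-dependent variance) secure.
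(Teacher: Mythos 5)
Your proposal is correct and follows exactly the route the paper intends: apply Proposition~\ref{prop:spectralMOM} at each iterate (legitimized by sample splitting), take a union bound over the $T$ iterations to control $\bar{\epsilon}$ with probability $1-\delta$, and substitute into the parameter bound of Theorem~\ref{thm:MDoptimalrate}; your constant-tracking ($40\cdot 18/\sqrt{2} = 360\sqrt{2}$, with the residual $\sqrt{2}$ absorbed into the radical) correctly reproduces the stated factor of $360$ and the $2$ under the square root.
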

Corollary~\ref{cor:lowrankmatrix} matches the optimal performance bounds given in classical literature for low-rank matrix recovery~\cite{koltchinskii2011nuclear, rohde2011estimation, candes2011tight, negahban2011estimation} up to logarithmic factors. The previous statement is most similar to~\cite[Proposition 3.3]{Juditsky2020SparseRB} except that it applies for more general learning tasks and under much lighter data assumption.

\section{Implementation and Numerical Experiments}\label{sec:exp}

In this section, we demonstrate the performance of the proposed algorithms on synthetic and real data. Before we proceed, we prefer to point out that our implementation does not \emph{exactly} correspond to the previously given pseudo-codes. Indeed, as the reader may have noticed previously, certain instructions of AMMD and AMDA require the knowledge of quantities which are not available in practice and, even in a controlled setting, the estimation of some quantities (such as the maximum gradient error $\bar{\epsilon}$) may be overly conservative which generally impedes the proper convergence of the optimization. We list the main divergences of our implementation from the theoretically studied procedures of AMMD and AMDA given before :
\begin{enumerate}
    \item For AMMD, we only use the conventional $\prox$ operator rather than the corrected $\widehat{\prox}$ operator defined in Section~\ref{sec:md}.
    \item For both AMMD and AMDA, the whole data set is used at each step to compute a gradient estimate and no data-splitting is performed.
    \item The radii $R_k$ are taken constant equal to a fixed $R > 0$.
    \item The stage lengths ($T_k$ for AMMD and $T'$ for AMDA) are fixed as constants.
    \item The number of stages is determined through a maximum number of iterations but the algorithm stops after the last whole stage.
    \item The within-stage step-sizes $a_i$ in AMDA are fixed to a small constant (smaller than $R$) for more stability.
\end{enumerate}
The constant stage-lengths are fixed using the following heuristic: run the MD/DA iteration while tracking the evolution of the empirical objective on a validation subset of the data\footnote{To remain consistent with a robust approach, the objective is estimated using a trimmed mean here as well.} and set the stage-length as the number of steps before a plateau is reached. Reaching a plateau indicates that the current reference point $\theta^{(k)}$ has become too constraining for the optimization and more progress can be made after updating it.

The simplifications brought by points 1.~and 2.~are related to likely proof artifacts. Indeed as pointed out in Section~\ref{sec:md} the use of the corrected $\widehat{\prox}$ operator rather than simply $\prox$ is chiefly meant to ensure objective monotonicity while the data splitting ensures the gradient deviation bounds are usable in the proofs at each iteration.

Points 3.~and 4.~are due to the fact that the values of the stage-lengths $T_k$/$T'$ and the radii $R_k$ used in AMMD and AMDA are based on conservative estimates from the theoretical analysis making them unfit for practical implementation. Moreover, the said estimates use constants which cannot be identified in an arbitrary setting. For example, for least squares regression, the quadratic minorization constant $\kappa$ depends on the data distribution which is unknown in general. 

Finally, point 5.~is commonplace for batch learning where one may simply iterate until convergence and point 6.~follows the wisdom that smaller step-sizes ensure more stability.

Despite these differences, the numerical experiments we present below demonstrate that our implementations perform on par with the associated theoretical results.

Clearly, optimisation using Mirror Descent should be preferred over Dual Averaging due to its faster convergence speed. This is conditioned by the smoothness of the objective $\cL$ which holds, for example, when the loss function $\ell$ is smooth. 
If $\ell$ is not smooth but the data distribution 
contains no atoms, one may still use Mirror Descent since it is reasonable to expect the objective $\cL$ to be smoothed by the expectation~\eqref{eq:objective}. 
Note however that such an objective is likely not to satisfy Assumption~\ref{asm:quadgrowth} making Theorem~\ref{thm:MDoptimalrate} inapplicable. Still, in this case, if the weaker Assumption~\ref{asm:pseudolingrowth} holds, the expected performance is as stated in Theorem~\ref{thm:DAoptimalrate} with an improved number of required iterations of order $1/\bar{\epsilon}$ instead of $1/\bar{\epsilon}^2$ due to faster within-stage optimisation.

\subsection{Synthetic sparse linear regression}

We first test our algorithms on the classic problem of linear regression. We generate $n$ covariates $X_i \in \R^d$ following a non-isotropic distribution with covariance matrix $\Sigma$ and labels  $Y_i = X_i^\top \theta^\star + \xi_i$ for a fixed $s$-sparse $\theta^\star \in \R^d$ and simulated noise entries $\xi_i$. The covariance matrix $\Sigma$ is diagonal with entries drawn uniformly at random in $[1, 10]$.

We use the least-squares loss $\ell(z, y) = \frac{1}{2}(z-y)^2$ in this experiment and the problem parameters are $n = 500 , d=5000, s=40$ and a sparsity upper bound $\bar{s}=50$ is given to the algorithms instead of the real value. The noise variables $\xi_i$ always follow a Pareto distribution with parameter $\alpha = 2.05$. Apart from that we consider three settings :

\begin{enumerate}[label=(\alph*)]
    \item The gaussian setting : the covariates follow a gaussian distribution.
    \item The heavytailed setting : the covariates are generated from a multivariate Student distribution with $\nu=4.1$ degrees of freedom.
    \item The corrupted setting : the covariates follow the same Student distribution and $5\%$ of the data ($(X_i, Y_i)$ pairs) are corrupted.
\end{enumerate}

We run various algorithms:
\begin{itemize}
    \item AMMD using the trimmed mean estimator (\texttt{AMMD}).
    \item AMDA using the trimmed mean estimator (\texttt{AMDA}).
    \item The iterative thresholding procedure defined in~\cite{liu2019high} using the MOM estimator (\texttt{LLC\_MOM}).
    \item The iterative thresholding procedure defined in~\cite{liu2019high} using the trimmed-mean estimator (\texttt{LLC\_TM}).
    \item Lasso with CGD solver and the trimmed mean estimator as implemented in~\cite{gaiffas2022robust} (\texttt{Lasso\_CGD\_TM}).
    \item Lasso with CGD solver as implemented in Scikit Learn~\cite{scikit-learn} (\texttt{Lasso\_CGD}).
\end{itemize}
Another possible baseline is the algorithm proposed in~\cite{liu2020high}. Nevertheless, we do not include it here because it relies on the outlier removal algorithm inspired from~\cite{balakrishnan2017computationally}. The latter requires to run an SDP subroutine making it excessively slow as soon as the dimension is greater than a few hundreds.

Note that the ``trimmed mean'' estimator used in~\cite{liu2019high} is different from ours since they simply exclude the entries below and above a pair of empirical data quantiles. On the other hand, the estimator we define in Section~\ref{sec:appliVanillaSparse} simply replaces the extreme values by the exceeded threshold before computing an average. This is also called a ``Winsorized mean'' and enjoys better statistical properties.

The algorithms using Lasso~\cite{tibshirani1996regression} optimize an $\ell_1$ regularized objective. The regularization is weighted by a factor $2\sigma \sqrt{\frac{2\log(d)}{n}}$ where $\sigma^2 = \Var(\xi)$ is the noise variance. The previous regularization weight is known to ensure optimal statistical performance, see for instance~\cite{bickel2009simultaneous}.
\begin{figure}[htbp]
     \centering
     \includegraphics[width=\textwidth]{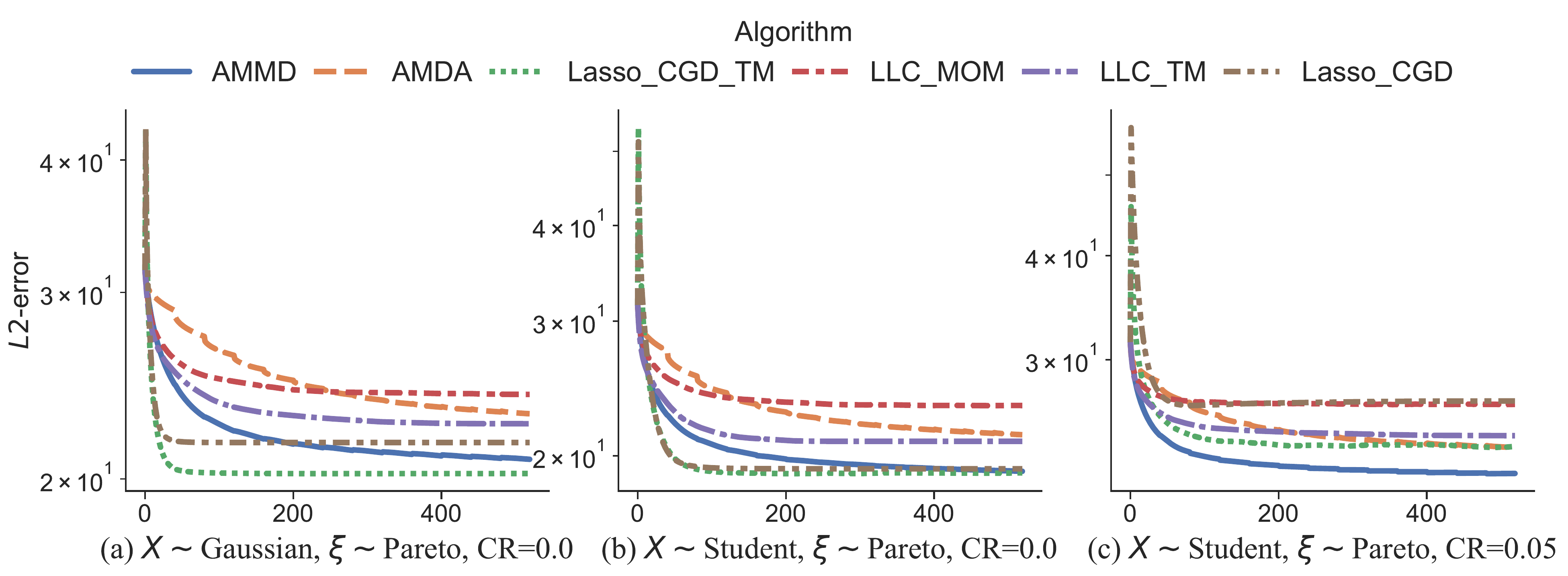}
        \caption{L2 error $\|\theta_t - \theta^\star\|_2$ ($y$-axis) against iterations ($x$-axis) for all the considered algorithms in the simulation settings.}
        \label{fig:lin_reg}
\end{figure}
The experiment is repeated 30 times and the results are averaged. We do not display any confidence intervals for better readability. Figure~\ref{fig:lin_reg} displays the results. We observe that Lasso based methods quickly reach good optima in general and that the version using the robust trimmed mean estimator is sometimes superior in the presence of heavy tails and corruption in particular. \texttt{AMMD} reaches nearly equivalent optima, albeit significantly slower than Lasso methods as seen for settings (a) and (b). However, it is somehow more robust to corruption as seen on setting (c). Unfortunately, the \texttt{AMDA} algorithm struggles to closely approximate the original parameter. We mainly attribute this to slow convergence in settings (a) and (b). Nonetheless, \texttt{AMDA} is among the most robust algorithms to corruption as seen on setting (c). The remaining iterative thresholding based methods \texttt{LLC\_MOM} and \texttt{LLC\_TM} seem to generally stop at suboptimal optima. The Median-Of-Means variant \texttt{LLC\_MOM} is barely more robust than \texttt{Lasso\_CGD} in the corrupted setting (c). The \texttt{LLC\_TM} variant is better but still inferior to \texttt{AMMD}. This reflects the superiority of the (Winsorized) trimmed mean used by \texttt{AMMD} and \texttt{Lasso\_CGD\_TM} to the conventional trimmed mean in \texttt{LLC\_TM}.

\subsection{Sparse classification on real data}

We also carry out experiments on real high dimensional binary classification data sets. These are referred to as \texttt{gina} and \texttt{bioresponse} and were both downloaded from \texttt{openml.org}. We run \texttt{AMMD}, \texttt{AMDA}, \texttt{LLC\_MOM} and \texttt{LLC\_TM} with similar sparsity upperbounds and various levels of corruption and track the objective value, defined using the Logistic loss $\ell(z, y) = \log(1+e^{-zy})$ (with $y\in \mathcal{Y}=\{\pm 1\}$), for each of them. The results are displayed on Figure~\ref{fig:classif_iter} (average over 10 runs). In the non corrupted case, we see that all algorithms reach approximately equivalent optima whereas they display different levels of resilience when corruption is present. In particular, \texttt{LLC\_MOM} is unsurprisingly the most vulnerable since it is based on Median-Of-Means which is not robust to $\eta$-corruption. The rest of the algorithms cope better thanks to the use of trimmed mean estimators, although \texttt{LLC\_TM} seems to be a little less robust which is probably due to the previously mentioned difference in its gradient estimator. Finally, Figure~\ref{fig:classif_iter} also shows that \texttt{AMMD} and \texttt{AMDA} (respectively using Mirror Descent and Dual Averaging) tend to reach generally better final optima despite converging a bit slower than the other algorithms. They also prove to be more stable, even when high step sizes are used.

\begin{figure}[!ht]
    \centering
    \includegraphics[width=\textwidth]{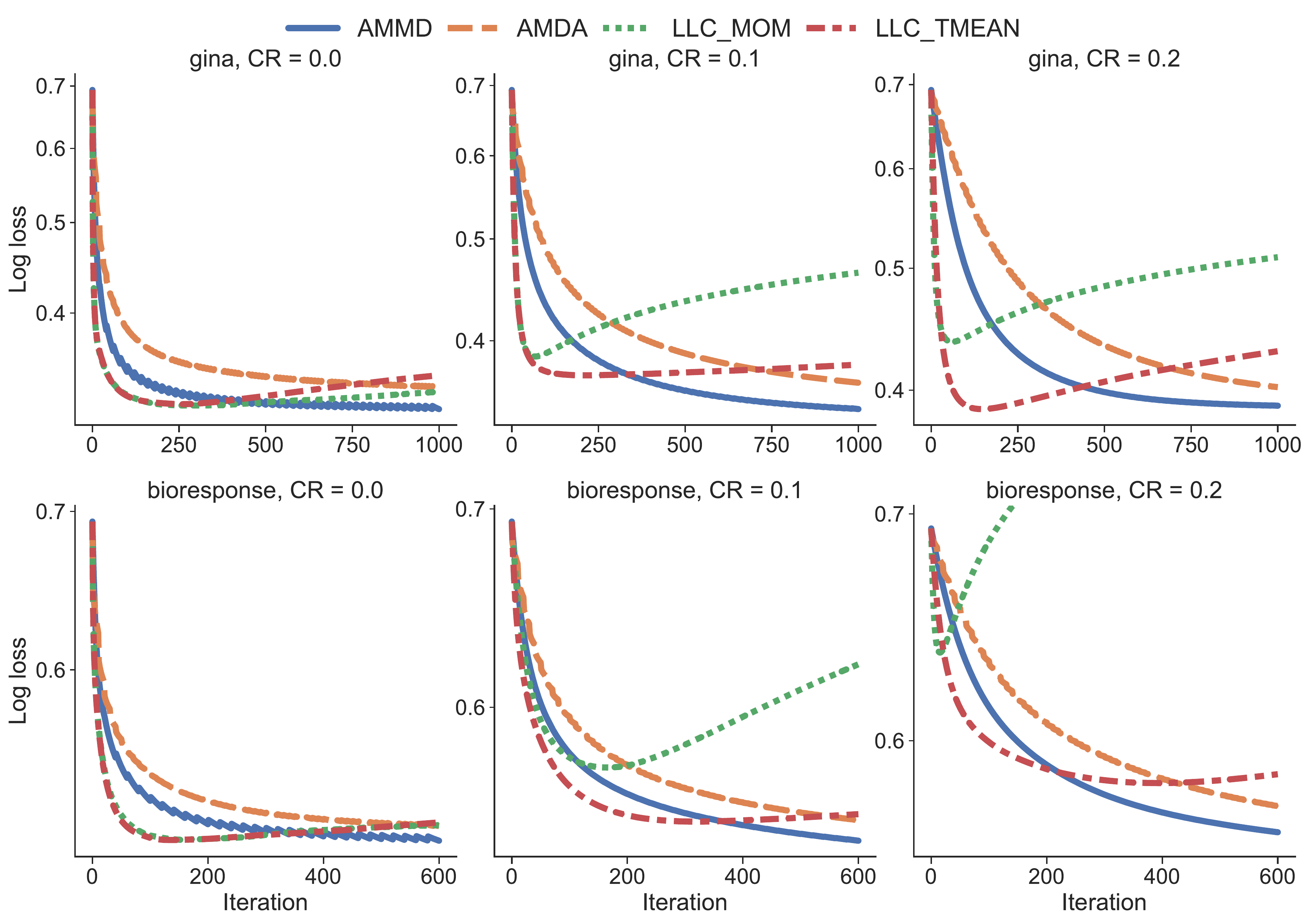}
    \caption{Log loss ($y$-axis) along training iterations ($x$-axis) on two data sets (rows) for $0\%$ corruption (first column), $10\%$ corruption (middle column) and $20\%$ corruption (last column).}
    \label{fig:classif_iter}
\end{figure}

\section{Conclusion}

In this work, we address the problem of robust supervised linear learning in the high-dimensional setting. In order to cover both smooth and non-smooth loss functions, we propose two optimisation algorithms which enjoy linear convergence speeds with only a mild dependence on the dimension. We combine these algorithms with various robust mean estimators, each of them tailored for a specific variant of the sparse estimation problem. We show that the said estimators are robust to heavy-tailed and corrupted data and allow to reach the optimal statistical rates for their respective instances of sparse estimation problems. Furthermore, their computation is efficient which favorably reflects on the computational cost of the overall procedure.
We also confirm our theoretical results through numerical experiments where we evaluate our algorithms in terms of speed, robustness and performance of the final estimates. Finally, we compare our performances with the most relevant concurrent works and discuss the main differences. Perspectives for future work include considering other types of sparsity, divising an algorithm capable of reaching the optimal $s\log(d/s)/n$ rate for vanilla sparsity or considering problems beyond recovery of a single parameter such as, for example, additive sparse and low-rank matrix decomposition.

\acks{This research is supported by the Agence Nationale de la Recherche as part of the ``Investissements d'avenir'' program (reference ANR-19-P3IA-0001; PRAIRIE 3IA Institute).
}

\newpage
\appendix
\section{Proofs}

\subsection{Proof of Lemma~\ref{lem:obj-grad-moment}}\label{proof:obj-grad-moment}

Let $\theta \in \Theta$, using Assumption~\ref{asm:data} we have:
\begin{equation*}
    |\ell(\theta^{\top} X, Y)| \leq C_{\ell, 1} + C_{\ell, 2}|\theta^{\top} X- Y|^{2} \leq C_{\ell, 1} + 2 C_{\ell, 2}(|\theta^{\top} X|^2 + |Y|^2).
\end{equation*}
Taking the expectation and using Assumption~\ref{asm:data} again shows that the objective $\cL(\theta)$ is well defined. Next, for all $j\in\setint{d}$, simple algebra gives:
\begin{align*}
    \big| \ell' (\theta^{\top} X, Y) X_j \big|^2 
    &\leq \big| \big(C_{\ell,1}' + C_{\ell,2}'|\theta^\top X - Y|\big)X^j \big|^2  \\
    &\leq 2   \big(\big|C_{\ell,1}' X^j\big|^ 2  + (C_{\ell,2}' (|(\theta^\top X)  X^j| + |Y  X^j|))^2\big)  \\
    &\leq 2   \Big(\big|C_{\ell,1}' X^j\big|^ 2  + \Big(C_{\ell,2}' \Big(\sum_{k=1}^d|\theta_k|  |(X^k)  X^j| + |Y  X^j|\Big)\Big)^2\Big)  \\
    &\leq 2   \Big(\big|C_{\ell,1}' X^j\big|^ 2 + 2   (C_{\ell,2}')^ 2 \Big(d   \sum_{k=1}^d|\theta_k|^{2} |(X^k)  X^j|^ 2 + |Y  X^j|^2\Big)\Big).
\end{align*}
Recall that we assume $\E|X^j\big|^ 2 < \infty$ and $\E |Y  X^j|^2< \infty$, moreover, using a Cauchy Schwarz inequality, we find:
\begin{align*}
    \E\big|(X^k)  X^j\big|^2\leq \sqrt{ \E \big|X^k\big|^{4} \E \big|X^j\big|^{4}},
\end{align*}
which is also assumed finite. This concludes the proof of Lemma~\ref{lem:obj-grad-moment}.

\subsection{Proofs for Section~\ref{sec:md}}

\subsubsection{Proof of Proposition~\ref{prop:uncorrectedMD}}\label{proof:uncorrectedMD}

We use the abbreviations $\widehat{g}_t = \widehat{g}(\theta_t) $ and $g_t = g(\theta_t)$. Let $\phi \in \Theta$ be any parameter, we first write the optimality condition of the proximal operator defining each step $\theta_{t+1} = \prox_{\beta}(\widehat{g}_t, \theta_t; \theta_0, \Theta)$. Using the convexity and smoothness properties of the objective $\cL$, we find that :
\begin{align}
    \cL(\theta_{t+1}) - \cL(\phi) &= \cL(\theta_{t+1}) - \cL(\theta_{t}) + \cL(\theta_{t}) - \cL(\phi) \nonumber \\
    &\leq \langle g(\theta_t), \theta_{t+1} - \theta_t \rangle + \frac{L}{2}\|\theta_{t+1} - \theta_t\|^2 + \langle g(\theta_t), \theta_{t} - \phi \rangle \nonumber \\
    &= \langle g_t, \theta_{t+1} - \phi \rangle + \frac{L}{2}\|\theta_{t+1} - \theta_t\|^2. \label{eq:decrease}
\end{align}
We have $\widehat{g}_t = g_t + \epsilon_t$ and the optimality condition says that for all $\phi\in \Theta$ we have the inequality : 
\begin{equation*}
    \langle \beta \widehat{g}_t , \phi - \theta_{t+1} \rangle + \langle \nabla \omega_{\theta_0} (\theta_{t+1}) - \nabla \omega_{\theta_0} (\theta_{t}), \phi- \theta_{t+1} \rangle \geq 0.
\end{equation*}
Plugging this into \eqref{eq:decrease} we get :
\begin{align*}
    \cL(\theta_{t+1}) - \cL(\phi) &\leq \frac{1}{\beta}\langle \nabla \omega_{\theta_0} (\theta_{t+1}) - \nabla \omega_{\theta_0} (\theta_{t}), \phi- \theta_{t+1} \rangle + \langle \epsilon_t , \phi - \theta_{t+1} \rangle + \frac{L}{2}\|\theta_{t+1} - \theta_t\|^2 \\
    &= \frac{1}{\beta}(V_{\theta_0}(\phi, \theta_t) - V_{\theta_0}(\theta_{t+1}, \theta_t) - V_{\theta_0}(\phi, \theta_{t+1})) + \langle \epsilon_t , \phi - \theta_{t+1} \rangle + \frac{L}{2}\|\theta_{t+1} - \theta_t\|^2 \\
    &\leq \frac{1}{\beta}(V_{\theta_0}(\phi, \theta_t) - V_{\theta_0}(\phi, \theta_{t+1})) + \langle \epsilon_t , \phi - \theta_{t+1} \rangle,
\end{align*}
where the last step is due to the choice $\beta \leq 1/L$ and the strong convexity of $V$ and the second step follows from the remarkable identity :
\begin{equation*}
    \langle \nabla_{\theta} V_{\theta_0}(\theta, \theta'), z - \theta \rangle = V_{\theta_0}(z, \theta') - V_{\theta_0}(\theta, \theta') - V_{\theta_0}(z, \theta) \quad \text{for all $z,\theta, \theta', \theta_0\in \R^d$}.
\end{equation*}
It suffices to multiply the previous inequality by $\beta$, sum it for $t=0\dots, T-1$ and use the convexity of $\cL$ to find that $\widehat{\theta}_T = \sum_{t=1}^T \theta_t /T$ satisfies :
\begin{equation*}
    \cL(\widehat{\theta}_T) - \cL(\phi) \leq \frac{V_{\theta_0}(\phi, \theta_0) - V_{\theta_0}(\phi, \theta_T)}{\beta T} + \frac{1}{T}\sum_{t=0}^{T-1}\langle \epsilon_t, \phi - \theta_{t+1} \rangle.
\end{equation*}
Then, it only remains to choose $\phi = \theta^\star$ to finish the proof.

\subsubsection{Proof of Proposition~\ref{prop:correctedMD}}\label{proof:correctedMD}

We proceed similarly to the previous Proposition. As previously we have :
\begin{equation*}
    \cL(\theta_{t+1}) - \cL(\phi) \leq \langle g_t, \theta_{t+1} - \phi \rangle + \frac{L}{2}\|\theta_{t+1} - \theta_t\|^2,
\end{equation*}
where $\widehat{g}_t = g_t + \epsilon_t$. Let $\phi \in \Theta$, the optimality condition of $\theta_{t+1} = \widehat{\prox}_{\beta}(\widehat{g}_t, \theta_t; \theta_0, \Theta)$ reads :
\begin{equation*}
    \langle \beta (\widehat{g}_t +\bar{\epsilon}\partial_{\|\cdot\|}(\theta_{t+1} - \theta_t)), \phi - \theta_{t+1} \rangle + \langle \nabla \omega_{\theta_0} (\theta_{t+1}) - \nabla \omega_{\theta_0} (\theta_{t}), \phi- \theta_{t+1} \rangle \geq 0,
\end{equation*}
where $\partial_{\|\cdot\|}(\theta)$ is any subgradient of $\|\cdot\|$ at $\theta$. Plugging this into \eqref{eq:decrease} we get :
\begin{align}
    \cL(\theta_{t+1}) - \cL(\phi) &\leq \frac{1}{\beta}\langle \nabla \omega_{\theta_0} (\theta_{t+1}) - \nabla \omega_{\theta_0} (\theta_{t}), \phi- \theta_{t+1} \rangle + \langle \epsilon_t + \bar{\epsilon}\partial_{\|\cdot\|}(\theta_{t+1} - \theta_t) , \phi - \theta_{t+1} \rangle \nonumber\\
    &\quad + \frac{L}{2}\|\theta_{t+1} - \theta_t\|^2 \nonumber\\
    &= \frac{1}{\beta}(V_{\theta_0}(\phi, \theta_t) - V_{\theta_0}(\theta_{t+1}, \theta_t) - V_{\theta_0}(\phi, \theta_{t+1}))\nonumber\\
    &\quad + \langle \epsilon_t + \bar{\epsilon}\partial_{\|\cdot\|}(\theta_{t+1} - \theta_t) , \phi - \theta_{t+1} \rangle + \frac{L}{2}\|\theta_{t+1} - \theta_t\|^2 \nonumber\\
    &\leq \frac{1}{\beta}(V_{\theta_0}(\phi, \theta_t) - V_{\theta_0}(\phi, \theta_{t+1})) + \langle \epsilon_t + \bar{\epsilon}\partial_{\|\cdot\|}(\theta_{t+1} - \theta_t) , \phi - \theta_{t+1} \rangle, \label{eq:decrease2}
\end{align}
where the last step is due to the choice $\beta \leq 1/L$ and the strong convexity of $V$ and the second step follows from the remarkable identity :
\begin{equation*}
    \langle \nabla_{\theta} V_{\theta_0}(\theta, \theta'), z - \theta \rangle = V_{\theta_0}(z, \theta') - V_{\theta_0}(\theta, \theta') - V_{\theta_0}(z, \theta) \quad \text{for all $z,\theta, \theta', \theta_0\in \R^d$}.
\end{equation*}
Notice that since $\|\epsilon_t\|_{*} \leq \bar{\epsilon}$ for all $t$ and using the identity $\langle \partial_{\|\cdot\|}(\theta), \theta \rangle = \|\theta\|$ which holds for any norm $\|\cdot\|$ we find :
\begin{align*}
    \langle \epsilon_t + \bar{\epsilon}\partial_{\|\cdot\|}(\theta_{t+1} - \theta_t) , \theta_t - \theta_{t+1} \rangle = \langle \epsilon_t, \theta_t - \theta_{t+1} \rangle - \bar{\epsilon}\|\theta_{t+1} - \theta_t\| \leq 0 ,
\end{align*}
so that by taking $\phi = \theta_t$ in \eqref{eq:decrease2} we find that :
\begin{equation*}
    \cL(\theta_{t+1}) \leq \cL(\theta_t),
\end{equation*}
i.e. $\cL(\theta_t)$ is monotonously decreasing with $t$. Using this observation, it suffices to average \eqref{eq:decrease2} over $t=0\dots, T-1$ to find that :
\begin{equation*}
    \cL(\theta_T) - \cL(\phi) \leq \frac{V_{\theta_0}(\phi, \theta_0) - V_{\theta_0}(\phi, \theta_T)}{\beta T} + \frac{1}{T}\sum_{t=0}^{T-1}\langle \epsilon_t + \bar{\epsilon}\partial_{\|\cdot\|}(\theta_{t+1} - \theta_t), \phi - \theta_{t+1} \rangle.
\end{equation*}
From here, the final bound is straight forward to derive by replacing $\phi = \theta^\star$ and using the fact that $\|\partial_{\|\cdot\|}(\theta)\|_* \leq 1$.

\subsubsection{Proof of Theorem~\ref{thm:MDoptimalrate}}\label{proof:thmMD}

The following Lemma is needed for this proof and that of Theorem~\ref{thm:DAoptimalrate}.
\begin{lemma}[{\cite[Lemma A.1]{Juditsky2020SparseRB}}]\label{lem:A1}
    Let $\theta^\star \in \Theta$ be $s$-sparse\textup, $\theta \in \Theta$\textup, and let $\theta_s = \sparse(\theta) \in \argmin \{\|\mu - \theta\| : \mu \in \Theta \:\: s\text{-sparse}\}$. We have \textup:
    \begin{equation*}
        \|\theta_s - \theta^\star\| \leq \sqrt{2s}\|\theta_s - \theta^\star\|_2 \leq 2\sqrt{2s} \|\theta - \theta^\star\|_2.
    \end{equation*}
\end{lemma}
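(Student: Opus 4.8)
The plan is to prove the two inequalities separately. The left-hand inequality $\|\theta_s - \theta^\star\| \le \sqrt{2s}\|\theta_s - \theta^\star\|_2$ is a pure sparsity-versus-norm-equivalence statement, while the right-hand inequality $\|\theta_s - \theta^\star\|_2 \le 2\|\theta - \theta^\star\|_2$ is a stability estimate for the sparse projection in the \emph{Euclidean} metric, obtained via a triangle inequality once the projection is shown to be Euclidean-optimal.

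For the first inequality, I would observe that $\theta_s - \theta^\star$ is a difference of two $s$-sparse elements, and that the sparsity measure $S$ is subadditive in each of our applications (supports unite, nonzero groups unite, ranks add), so $S(\theta_s - \theta^\star) \le S(\theta_s) + S(\theta^\star) \le 2s$. The comparison $\|v\| \le \sqrt{2s}\|v\|_2$ for a $2s$-sparse $v$ is then the $2s$-analogue of the Cauchy--Schwarz-type bound recorded for $s$-sparse vectors in Assumption~\ref{asm:sparse}; applying it to $v = \theta_s - \theta^\star$ gives the claim. I would note that this comparison is a property of the norms themselves rather than of membership in $\Theta$, so it applies to the difference $\theta_s - \theta^\star$ even though the latter need not lie in $\Theta$.

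For the right-hand inequality, the key point is that $\sparse$, although defined as a minimizer of $\|\mu - \theta\|$ over $s$-sparse $\mu$ in the problem norm $\|\cdot\|$, is realized in every application by \emph{hard thresholding}: one keeps the $s$ largest coordinates (vanilla), the $s$ groups with largest $\|\theta_{G_j}\|_2$ (group sparse), or the top $s$ singular values (low rank, by Eckart--Young/Mirsky). In each case this same operator also minimizes the residual in the Euclidean (Frobenius) metric, because the optimal support / subspace selection is identical for $\|\cdot\|$ and $\|\cdot\|_2$. Consequently $\|\theta - \theta_s\|_2 \le \|\theta - \mu\|_2$ for every $s$-sparse $\mu$, and taking $\mu = \theta^\star$ (which is $s$-sparse) gives $\|\theta - \theta_s\|_2 \le \|\theta - \theta^\star\|_2$. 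A triangle inequality then yields
\[
\|\theta_s - \theta^\star\|_2 \le \|\theta_s - \theta\|_2 + \|\theta - \theta^\star\|_2 \le 2\|\theta - \theta^\star\|_2,
\]
and combining with the first inequality completes the proof.

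The main obstacle is precisely the Euclidean optimality invoked above: a priori $\theta_s$ is only optimal for $\|\cdot\|$, and a generic $\|\cdot\|$-projection need not be within a constant factor of the $\ell_2$-projection. What rescues the argument is that for all three structured-sparsity norms the projection is the \emph{same} thresholding operator regardless of whether the residual is measured in $\|\cdot\|$ or in $\|\cdot\|_2$. I would therefore make the coincidence of the two projections explicit (invoking Eckart--Young in the low-rank case) rather than attempting a norm-free argument, since the factor-$2$ bound genuinely relies on this structural feature.
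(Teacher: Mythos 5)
Your two-step argument is, in substance, the intended proof: the paper itself gives no proof of this lemma, importing it verbatim from Juditsky et al.\ (2020, Lemma A.1), and the argument there runs exactly along your lines --- the difference $\theta_s - \theta^\star$ is $2s$-sparse, which gives the first inequality, and the second follows from the triangle inequality in $\|\cdot\|_2$ once one knows $\|\theta_s - \theta\|_2 \le \|\theta^\star - \theta\|_2$, i.e.\ that the sparsification operator is Euclidean-optimal among $s$-sparse points (hard thresholding in the vanilla and group cases, truncated SVD via Eckart--Young--Mirsky in the low-rank case). Your handling of the first inequality is also right, and your remark that $\|v\| \le \sqrt{S(v)}\,\|v\|_2$ is a property of the norms themselves --- so that it applies to the difference even though it need not lie in $\Theta$ --- is precisely the mild strengthening of Assumption~\ref{asm:sparse} that the lemma silently uses.

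One caveat, which is arguably a defect of the statement rather than of your argument, but which your write-up glosses over: the displayed definition constrains the argmin to $\mu \in \Theta$, and your claim that this \emph{constrained} argmin is realized by hard thresholding is false in general. In the algorithms $\Theta$ is a ball $B_{\|\cdot\|}(\theta_0, R)$ centered at the current iterate, and the thresholded point can leave it: take $\theta_0 = (1,0)$, $R = 2$, $\theta = (1, 1.9)$, $s=1$; thresholding gives $(0,1.9)$, at $\ell_1$-distance $2.9 > R$ from $\theta_0$. Worse, under the literal $\Theta$-constrained reading the factor-$2$ bound itself can fail for a general convex $\Theta$: with $s=1$, $\Theta = \mathrm{conv}\{(1,0), (0,3), (0.4,1.4)\}$, $\theta = (0.4,1.4)$ and $\theta^\star = (1,0)$, the only $1$-sparse points of $\Theta$ are $(1,0)$ and $(0,3)$, both at $\ell_1$-distance $2$ from $\theta$, and the admissible selection $\theta_s = (0,3)$ gives $\|\theta_s - \theta^\star\|_2 = \sqrt{10} > 2\sqrt{2.32} = 2\|\theta - \theta^\star\|_2$. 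So no proof can close this gap as literally stated; what your argument actually proves --- and what the cited source and the algorithms use --- is the version where $\sparse_s$ is unconstrained thresholding, Euclidean-optimal over \emph{all} $s$-sparse points. You should make that reading explicit (the restriction $\mu \in \Theta$ must be dropped, or $\Theta$ assumed to contain the thresholded point) rather than asserting that the two projections coincide under the constraint.
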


We would like to show by induction that $\|\theta^{(k)} - \theta^\star\| \leq R_{k}$ for $k\geq 0$. In the base case $k=0$, we have $\|\theta^{(0)} - \theta^\star\| \leq R = R_{0}$. For a phase $k+1 \geq 1$ of the approximate Mirror Descent algorithm, assuming the property holds for $k$, by applying Lemma~\ref{lem:A1} and Proposition~\ref{prop:correctedMD} we find :
\begin{equation}\label{eq:prfThmMDobj}
    \cL(\widetilde{\theta}^{(k+1)}) - \cL(\theta^\star) \leq \frac{\nu R_k^2}{\beta T_{k+1}} + 4\bar{\epsilon}R_k \leq 5\bar{\epsilon}R_k,
\end{equation}
where the last inequality uses that that $T_{k+1} = \Big\lceil \frac{\nu R_k }{\beta \bar{\epsilon}}\Big\rceil$. Using the quadratic growth hypothesis (Assumption~\ref{asm:quadgrowth}) leads to :
\begin{equation}\label{eq:prfThmMD}
    \big\|\theta^{(k+1)} - \theta^\star\big\|^2 \leq 2\bar{s}\big\| \theta^{(k+1)} - \theta^\star\big\|_2^2 \leq 8\bar{s} \big\|\widetilde{\theta}^{(k+1)} - \theta^\star\big\|_2^2 \leq \frac{40 \bar{s}\bar{\epsilon}R_k}{\kappa}.
\end{equation}
We have just obtained the bound $\|\theta^{(k+1)} - \theta^\star\big\| =: \widehat{R}_{k+1} \leq h(R_k) := \sqrt{\frac{40 \bar{s}\bar{\epsilon}R_k}{\kappa}}$. It is easy to check that $h(r)$ has a unique fixed point $R^\star := \frac{40 \bar{s}\bar{\epsilon}}{\kappa}$ and that for $r \geq R^\star$ we have $h'(r) \leq 1/2$. Assuming that the former bound holds for $r = R_k$ (otherwise there is nothing to prove) we find :
\begin{align*}
    \widehat{R}_{k+1} - R^\star \leq h(R_k) - h(R^\star) \leq \frac{1}{2}(R_{k} - R^\star) \\
    \implies \widehat{R}_{k+1} \leq \frac{1}{2}(R_{k} + R^\star) = R_{k+1},
\end{align*}
this finishes the induction argument. By unrolling the recursive definition of $R_k$, we obtain that, for all $k \geq 1$ :
\begin{equation*}
    R_k \leq 2^{-k}R_0 + R^\star = 2^{-k}R_0 + \frac{40 \bar{s}\bar{\epsilon}}{\kappa}.
\end{equation*}
The main bound of the Theorem then follows by plugging the above inequality with $k = K-1$ into~\eqref{eq:prfThmMD} and using the fact that $R_0 \geq R^\star$ and the standard inequality $\sqrt{a + b} \leq \sqrt{a} + \sqrt{b}$ which holds for all $a, b \geq 0$. The bound on the objective is obtained similarly.

Let us compute $T$, the total number of iterations necessary for this bound to hold. Given the minimum number of iterations $T_k$ necessary for stage $k$ we have :
\begin{align*}
    T &=\sum_{k=1}^K T_k \leq \sum_{k=1}^K \Big( \frac{\nu R_{k-1}}{\beta \bar{\epsilon}} +1\Big) \leq K + \frac{\nu}{\beta \bar{\epsilon}}\sum_{k=0}^{K-1} (2^{-k}R_0 + R^\star) \\ 
    &\leq \frac{2 R_0 \nu}{\beta \bar{\epsilon}} + K\Big(1 + \frac{40 \nu \bar{s}}{\kappa \beta}\Big).
\end{align*}
This completes the proof.

\subsection{Proofs for Section~\ref{sec:da}}

\subsubsection{Proof of Proposition~\ref{prop:DA}}\label{proof:DA}

For a sequence of iterates $(\theta_t)_{t=0\dots T}$ we introduce the notations :
\begin{equation*}
    \ell_t(\theta) = \sum_{i=0}^t a_i(\cL(\theta_i) + \langle \widehat{g}_i, \theta - \theta_i \rangle) \quad\text{ and }\quad \psi_t^* = \min_{\theta \in \Theta} \ell_t(\theta) + \gamma_t \omega(\theta).
\end{equation*}
We will show the following inequality by induction :
\begin{equation*}
    A_t f(\theta_t) \leq \psi_t^* + \widehat{B}_t,
\end{equation*}
where we define $\widehat{B}_t = \sum_{i=0}^t \frac{a_i^2}{2\gamma_{i-1}} \|g_i\|_*^2 + 2 a_i R\|\epsilon_i\|_*$ with the convention $\gamma_{-1} = \gamma_0$. Assume it holds for $t \geq 0$, since $\gamma_{t+1} \geq \gamma_t$ we have :
\begin{align}
    \psi_{t+1}^* &= \min_{\theta \in \Theta} \ell_t(\theta) + a_{t+1}(\cL(\theta_{t+1}) + \langle \widehat{g}_{t+1} , \theta - \theta_{t+1} \rangle ) + \gamma_{t+1} \omega(\theta) \nonumber \\
    &\geq \min_{\theta \in \Theta} \ell_t(\theta) + a_{t+1}(\cL(\theta_{t+1}) + \langle \widehat{g}_{t+1} , \theta - \theta_{t+1} \rangle ) + \gamma_{t} \omega(\theta). \label{ineq:da1}
\end{align}
Note that, by definition, $\theta_t^+$ realizes the minimum $\psi_t^* = \min_{\theta \in \Theta} \ell_t(\theta) + \gamma_{t} \omega(\theta) = \ell_t(\theta_t^+) + \gamma_{t} \omega(\theta_t^+)$, so that for all $\theta \in \Theta$ we have 
\begin{equation}
    \langle \nabla \ell_t(\theta_t^+) + \gamma_t \nabla \omega(\theta_t^+) , \theta - \theta_t^+ \rangle \geq 0.\label{ineq:da2}
\end{equation} Also, using the convexity of $\ell_t$ and the strong convexity of $\omega(\cdot)$ we have :
\begin{align}
    \ell_t(\theta) + \gamma_t \omega(\theta) \geq &\ell_t(\theta_t^+) + \langle \nabla \ell_t(\theta_t^+), \theta - \theta_t^+ \rangle + \nonumber\\
    &\gamma_t\big(\omega(\theta_t^+) + \langle \nabla \omega(\theta_t^+), \theta - \theta_t^+ \rangle + \frac{1}{2}\|\theta - \theta_t^+\|^2\big).\label{ineq:da3}
\end{align}
By combining Inequalities~\eqref{ineq:da1},~\eqref{ineq:da2} and~\eqref{ineq:da3}, we find that :
\begin{align*}
    \psi_{t+1}^* &\geq \min_{\theta \in \Theta} \psi_t^* + \frac{\gamma_t}{2}\|\theta - \theta_t^+\|^2 + a_{t+1}(\cL(\theta_{t+1}) + \langle \widehat{g}_{t+1} , \theta - \theta_{t+1} \rangle ) .
\end{align*}
Now, using the induction hypothesis $A_t f(\theta_t) \leq \psi_t^* + \widehat{B}_t$, we compute that :
\begin{align*}
    \psi_{t+1}^* &\geq \min_{\theta \in \Theta} A_t f(\theta_t) - \widehat{B}_t + \frac{\gamma_t}{2}\|\theta - \theta_t^+\|^2 + a_{t+1}(\cL(\theta_{t+1}) + \langle \widehat{g}_{t+1} , \theta - \theta_{t+1} \rangle ) \\
    &\geq \min_{\theta \in \Theta} A_t (f(\theta_{t+1}) + \langle g_{t+1}, \theta_t - \theta_{t+1} \rangle) - \widehat{B}_t + \frac{\gamma_t}{2}\|\theta - \theta_t^+\|^2 + a_{t+1}(\cL(\theta_{t+1}) + \\  & \langle g_{t+1} , \theta - \theta_{t+1} \rangle ) - 2Ra_{t+1}\epsilon_{t+1} \\
    &\geq \min_{\theta \in \Theta} A_{t+1} f(\theta_{t+1}) - \widehat{B}_t + \frac{\gamma_t}{2}\|\theta - \theta_t^+\|^2 + a_{t+1}\langle g_{t+1} , \theta - \theta_{t}^+ \rangle  - 2Ra_{t+1}\epsilon_{t+1} \\
    &\geq \min_{\theta \in \Theta} A_{t+1} f(\theta_{t+1}) - \widehat{B}_t - \frac{a_{t+1}^2}{2\gamma_t}\|\widehat{g}_t\|_*^2 - 2Ra_{t+1}\|\epsilon_{t+1}\|_* \\
    &= \min_{\theta \in \Theta} A_{t+1} f(\theta_{t+1}) - \widehat{B}_{t+1},
\end{align*}
where the penultimate inequality uses that $A_{t+1}\theta_{t+1} = A_t \theta_t + a_{t+1} \theta_t^+$. It only remains to check the base case :
\begin{align*}
    \psi_0^* &= \min_{\theta \in \Theta} a_0(\cL(\theta_0) + \langle \widehat{g}_0, \theta - \theta_0 \rangle ) + \frac{\gamma_0}{2}\omega(\theta) \\
    &\geq \min_{\theta \in \Theta} a_0(\cL(\theta_0) + \langle g_0, \theta - \theta_0 \rangle ) + \frac{\gamma_0}{2}\omega(\theta) + \min_{\theta \in \Theta} a_0\langle \epsilon_0, \theta - \theta_0 \rangle\\
    &\geq A_0 \cL(\theta_0) - \frac{a_0^2}{2\gamma_{-1}}\|g_0\|_*^2 -2 a_0 R \| \epsilon_0\|_* = A_0 \cL(\theta_0) - \widehat{B}_0,
\end{align*}
which completes the induction. Now we can compute :
\begin{align*}
    A_t \langle s_t, \theta^\star \rangle + \gamma_t \omega(\theta^\star) &\geq A_t \langle s_t, \theta_t^+ \rangle + \gamma_t \omega(\theta_t^+) +\frac{\gamma_t}{2}\|\theta_t^+ - \theta^\star\|^2 \\
    &\geq \psi_t^* - \sum_{i=0}^t a_i (\cL(\theta_i) - \langle \widehat{g}_i, \theta_i \rangle ) +\frac{\gamma_t}{2}\|\theta_t^+ - \theta^\star\|^2 \\
    &\geq A_t(\cL (\theta_t) -\cL(\theta^\star) + \langle s_t, \theta^\star \rangle) - \widehat{B}_t + \\
    &\sum_{i=0}^t a_i (\cL(\theta^\star) - \cL(\theta_i) - \langle g_i, \theta^\star - \theta_i \rangle ) - \sum_{i=0}^t a_i\langle \epsilon_i,\theta^\star - \theta_i \rangle + \frac{\gamma_t}{2}\|\theta_t^+ - \theta^\star\|^2 ,
\end{align*}
which, by rearranging and using the convexity of $\cL$, leads to :
\begin{align*}
    A_t(\cL(\theta_t) - \cL(\theta^\star)) + \frac{\gamma_t}{2}\|\theta_t^+ - \theta^\star\|^2 &\leq \gamma_t \omega(\theta^\star) + \widehat{B}_t + \sum_{i=0}^t a_i\langle \epsilon_i,\theta^\star - \theta_i \rangle \\
    &\leq \gamma_t \omega(\theta^\star) + \sum_{i=0}^t\frac{a_i^2}{2\gamma_{i-1}}\|g_i\|_*^2 + 4A_t R \bar{\epsilon}.
\end{align*}
For the particular case $a_t = 1$ and $\gamma_t = \sqrt{t+1}$ we have $A_t = t+1$. Moreover, by summing the inequality $\frac{1}{\sqrt{i+1}} \leq \int_{i}^{i+1} \frac{du}{\sqrt{u}}$ we get that $\sum_{i=1}^t\frac{1}{\sqrt{i}} \leq 2\sqrt{t} - 1$. Using this estimate and the Lipschitz property of the objective quickly yields the last part of the Theorem.

\subsubsection{Proof of Theorem~\ref{thm:DAoptimalrate}}\label{proof:thmDA}

We will show by induction that the inequality $\|\theta^{(k)} - \theta^\star\| \leq R_k$ holds for all $k\geq 0$. The case $k=0$ holds by assumption. Note that based on Proposition~\ref{prop:DA} with the choice $a_i = R$ and $\gamma_i = \sqrt{i+1}$ and using the quadratic growth bound~\eqref{eq:dgf_quad_growth} for $\omega$ we get :
\begin{equation}\label{eq:DAappli}
    \cL(\theta_{T'}) - \cL(\theta^\star) \leq \frac{(\nu  + M^2)R}{\sqrt{{T'}}} + 4 R \bar{\epsilon} \leq 5\bar{\epsilon}R,
\end{equation}
where the last step follows from the choice of $T'$. At the same time, at the end of stage $k+1$, we have the alternative :
\begin{itemize}
    \item Either $\|\widetilde{\theta}^{(k+1)} - \theta^\star\|_2 \leq \lambda$ :  then using Lemma~\ref{lem:A1} we find :
    \begin{align}
        \big\|\theta^{(k+1)} - \theta^\star\big\|^2 &= \big\|\sparse_{\bar{s}}(\widetilde{\theta}^{(k+1)}) - \theta^\star\big\|^2 \leq 8\bar{s}\big\|\widetilde{\theta}^{(k+1)} - \theta^\star\big\|_2^2  \nonumber \\
        &\leq \frac{16\lambda \bar{s}\big\|\widetilde{\theta}^{(k+1)} - \theta^\star\big\|_2^2}{\lambda + \big\|\widetilde{\theta}^{(k+1)} - \theta^\star\big\|_2} \leq \frac{16\lambda \bar{s}}{\kappa} \big(\cL(\widetilde{\theta}^{(k+1)}) - \cL(\theta^\star)\big)  \nonumber \\
        &\leq \frac{80\lambda \bar{s}\bar{\epsilon}R_k}{\kappa} = R^\star R_k, \label{eq:DAquad}
    \end{align}
    where the last inequality is an application of~\eqref{eq:DAappli} since $\widetilde{\theta}^{(k+1)} = \theta^{(k+1)}_{T'}$.
    
    \item Or we have $\|\widetilde{\theta}^{(k+1)} - \theta^\star\|_2 > \lambda$ : then only a linear regime holds and using Lemma~\ref{lem:A1} we get :
    \begin{align}
        \big\|\theta^{(k+1)} - \theta^\star\big\| &\leq \sqrt{2\bar{s}}\big\|\theta^{(k+1)} - \theta^\star\big\|_2 \leq 2\sqrt{2\bar{s}}\big\|\widetilde{\theta}^{(k+1)} - \theta^\star\|_2  \nonumber\\
        &\leq \frac{4\sqrt{2 \bar{s}}\big\|\widetilde{\theta}^{(k+1)} - \theta^\star\big\|_2^2/\lambda}{1 + \big\|\widetilde{\theta}^{(k+1)} - \theta^\star\|_2/\lambda} = \frac{4\sqrt{2 \bar{s}}\big\|\widetilde{\theta}^{(k+1)} - \theta^\star\big\|_2^2}{\lambda + \big\|\widetilde{\theta}^{(k+1)} - \theta^\star\big\|_2} \nonumber\\
        &\leq \frac{4\sqrt{2 \bar{s}}}{\kappa} \big(\cL(\widetilde{\theta}^{(k+1)}) - \cL(\theta^\star)\big) \leq \frac{20\bar{\epsilon}R_k\sqrt{2 \bar{s}}}{\kappa} = \tau R_k,\label{eq:DAlin}
    \end{align}
    where we used the inequality $1\leq 2x/(1+x)$ valid for all $x \geq 1$ on the quantity $\|\widetilde{\theta}^{(k+1)} - \theta^\star\|_2/\lambda$. 
        
\end{itemize}
Similarly to the proof of Theorem~\ref{thm:MDoptimalrate}, inequality~\eqref{eq:DAquad} implies that $\big\|\theta^{(k+1)} - \theta^\star\big\| \leq \frac{1}{2}(R_k + R^\star)$ so that we obtained :
\begin{equation*}
    \big\|\theta^{(k+1)} - \theta^\star\big\| \leq \sqrt{2\bar{s}}\big\|\theta^{(k+1)} - \theta^\star\|_2 \leq 2\sqrt{2\bar{s}}\big\|\widetilde{\theta}^{(k+1)} - \theta^\star\|_2 \leq \max\big(\tau R_k, \frac{1}{2}(R_k + R^\star)\big) = R_{k+1},
\end{equation*}
which finishes the induction to show~\eqref{eq:thmDAparamIneq}. Inequality~\eqref{eq:thmDAobjIneq} then follows using~\eqref{eq:DAappli}. Note that, since we assume $\tau < 1$ and $R_0 \geq R^\star$, the sequence $(R_k)_{k\geq 0}$ is decreasing. We now distinguish two phases :
\begin{itemize}
    \item The linear phase : if $\tau R_0 > \frac{1}{2}(R_0 + R^\star)$ (which implies $\tau > 1/2$ since $R_0 \geq R^\star$) then while $\tau R_k > \frac{1}{2}(R_k + R^\star)$ we have $R_{k+1} = \tau R_k = \tau^{k+1}R_0$ and the number of stages necessary to reverse the previous inequality is :
    \begin{equation*}
        \log\Big(\frac{(2\tau - 1)R_0}{R^\star}\Big)\Big/\log(1/\tau) \leq \log\Big(\frac{R_0}{R^\star}\Big)\Big/\log(1/\tau).
    \end{equation*}
    \item The quadratic phase : let $K_1 \geq 0$ be the first stage index such that we have $\tau R_k \leq \frac{1}{2}(R_k + R^\star)$ and so $R_{k+1} = \frac{1}{2}(R_k + R^\star)$ and by iteration $R_{l + K_1} \leq 2^{-l}R_{K_1} + R^\star.$ In all cases $R_{K_1} \leq R_0$ so the number of necessary stages is :
    \begin{equation*}
        \frac{\log(R_{K_1}/R^\star)}{\log(2)} \leq \frac{\log(R_0/R^\star)}{\log(2)}.
    \end{equation*}
\end{itemize}
We have shown that the overall number of necessary stages is at most $\log(R_0/R^\star)\Big(\frac{1}{\log(2)} + \frac{1}{\log(1/\tau)}\Big)$. The Theorem's final claim then follows since the number of per-stage iterations is constant equal to $T' = \Big\lceil \Big(\frac{\nu + M^2}{\bar{\epsilon}}\Big)^2 \Big\rceil$.

\subsubsection{Dual averaging for vanilla sparse estimation}\label{sec:DAforVanilla}
\begin{corollary}
In the context of Theorem~\ref{thm:DAoptimalrate} and Lemma~\ref{lem:coordTM}, let the AMDA algorithm be run starting from $\theta_0\in\Theta = B_{\|\cdot\|}(\theta_0, R)$ using the coordinatewise trimmed mean estimator with sample splitting i.e. at each iteration a different batch of size $\widetilde{n} = n/T$ is used for gradient estimation with confidence $\widetilde{\delta} = \delta/T$ where $T$ is the total number of iterations. Let $K$ be the number of stages and $\widehat{\theta}$ the obtained estimator. Denote $\sigma_{\max}^2 = \sup_{\theta \in \Theta} \max_{j\in\setint{d}}\Var (\ell'(\theta^\top X, Y) X^j)$, with probability at least $1 - \delta$, the latter satisfies :
    \begin{equation*}
        \big\|\widehat{\theta} - \theta^\star\big\|_2 \leq \tau^{K\wedge K_1}2^{(K_1\!-\!K)\wedge 0}\frac{ R}{\sqrt{2\bar{s}}} + \frac{280\lambda\sqrt{2\bar{s}}\sigma_{\max}}{\kappa}\sqrt{4\eta + 6 \frac{\log(4/\widetilde{\delta}) + \log(d)}{\widetilde{n}}}.
    \end{equation*}
    with $K_1$ an integer such that $K_1 \le \log\Big(\frac{\kappa R}{80\lambda \bar{s}\bar{\epsilon}}\Big)\Big/\log(1/\tau)$ with $\tau = \frac{10\sqrt{8\bar{s}} \bar{\epsilon}}{\kappa} < 1$ by assumption and
    \begin{equation*}
        \bar{\epsilon} = 7 \sigma_{\max} \sqrt{4\eta + 6\frac{\log(4/\delta) + \log(d)}{n}}.
    \end{equation*}
\end{corollary}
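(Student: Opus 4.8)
The plan is to derive this corollary by feeding the per-iteration gradient deviation bound of Lemma~\ref{lem:coordTM} into the convergence guarantee of Theorem~\ref{thm:DAoptimalrate}; the only genuine work is to unroll the radius recursion $R_{k+1} = \max\big(\tau R_k, \frac{1}{2}(R_k + R^\star)\big)$ of AMDA explicitly across its two phases and to match constants. Everything else is direct substitution into results already proved.

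First I would control the gradient error uniformly over the whole run. Because sample splitting is used, each of the $T$ iterations uses a disjoint batch of size $\widetilde{n} = n/T$, so Lemma~\ref{lem:coordTM} applies at each step with confidence $\widetilde{\delta} = \delta/T$, inheriting its hypotheses $\eta < 1/8$ and $\widetilde{\delta} > e^{-\widetilde{n}/2}/4$. On an event of probability at least $1 - \widetilde{\delta}$ the step-$t$ error obeys $\|\epsilon_t\|_{*} = \|\epsilon_t\|_\infty \leq 7\sigma_{\max}\sqrt{4\eta + 6(\log(4/\widetilde{\delta}) + \log d)/\widetilde{n}} =: \bar{\epsilon}$. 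A union bound over the $T$ iterations then yields, with probability at least $1 - \delta$, the uniform control $\max_t \|\epsilon_t\|_{*} \leq \bar{\epsilon}$, which is exactly the quantity required by Theorem~\ref{thm:DAoptimalrate}; the standing hypothesis $\tau = 10\sqrt{8\bar{s}}\,\bar{\epsilon}/\kappa < 1$ makes that theorem applicable with this $\bar{\epsilon}$.

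Next, on this good event, the middle inequality of~\eqref{eq:thmDAparamIneq} gives $\sqrt{2\bar{s}}\,\|\theta^{(K)} - \theta^\star\|_2 \leq R_K$, and since the AMDA output is $\widehat{\theta} = \theta^{(K)}$ this reads $\|\widehat{\theta} - \theta^\star\|_2 \leq R_K/\sqrt{2\bar{s}}$. The main obstacle is to bound $R_K$ for an arbitrary stage count $K$. Here I would reuse the two-phase analysis from the proof of Theorem~\ref{thm:DAoptimalrate}: letting $K_1$ be the first index at which $\tau R_k \leq \frac{1}{2}(R_k + R^\star)$, one has $R_k = \tau^k R_0$ throughout the linear phase $k \leq K_1$, and $R_{l+K_1} \leq 2^{-l}R_{K_1} + R^\star$ throughout the quadratic phase. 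Combining the two regimes and using that $R_k \geq R^\star$ for all $k$, these collapse into the single bound $R_K \leq \tau^{K \wedge K_1}\,2^{(K_1 - K)\wedge 0}\,R + R^\star$, the compact coefficient $\tau^{K \wedge K_1}2^{(K_1 - K)\wedge 0}$ interpolating correctly between the $K \leq K_1$ and $K > K_1$ cases; the bound $K_1 \leq \log(R/R^\star)/\log(1/\tau) = \log\big(\kappa R/(80\lambda\bar{s}\bar{\epsilon})\big)/\log(1/\tau)$ is read off from the linear-phase length, using $R^\star = 80\lambda\bar{s}\bar{\epsilon}/\kappa$.

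Finally I would substitute and simplify. Dividing the displayed bound on $R_K$ by $\sqrt{2\bar{s}}$ produces the optimization term $\tau^{K \wedge K_1}2^{(K_1 - K)\wedge 0}R/\sqrt{2\bar{s}}$ and the statistical term $R^\star/\sqrt{2\bar{s}} = 40\lambda\sqrt{2\bar{s}}\,\bar{\epsilon}/\kappa$, where I use $\bar{s}/\sqrt{2\bar{s}} = \frac{1}{2}\sqrt{2\bar{s}}$; inserting $\bar{\epsilon} = 7\sigma_{\max}\sqrt{4\eta + 6(\log(4/\widetilde{\delta}) + \log d)/\widetilde{n}}$ turns the factor $40$ into $40 \times 7 = 280$ and gives the announced inequality. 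The delicate points are thus entirely confined to the recursion unrolling and the verification that the single coefficient is valid for every $K$; the probabilistic and arithmetic parts are routine.
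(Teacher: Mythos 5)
Your proposal is correct and follows essentially the same route as the paper's proof: both bound $\bar{\epsilon}=\max_t\|\epsilon_t\|_*$ via Lemma~\ref{lem:coordTM} and a union bound over the $T$ split batches, reuse the two-phase radius recursion from the proof of Theorem~\ref{thm:DAoptimalrate} to get $R_K \leq \tau^{K\wedge K_1}2^{(K_1-K)\wedge 0}R + R^\star$, and divide by $\sqrt{2\bar{s}}$ to match the constants ($40\times 7 = 280$). The only cosmetic difference is that you write $\bar{\epsilon}$ in terms of $\widetilde{\delta},\widetilde{n}$, which is the version consistent with sample splitting, whereas the corollary's displayed definition uses $\delta,n$; this is an inconsistency in the statement rather than in your argument.
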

\begin{proof}
By the proof of Theorem~\ref{thm:DAoptimalrate}, we have $R=R_0$ and $K_1$ is defined as the first stage index $k$ such that we have $\tau R_k \leq \frac{1}{2}(R_k + R^\star)$ implying that $R_{k+1} = \frac{1}{2}(R_k + R^\star)$ and hence $R_{l + K_1} \leq 2^{-l}R_{K_1} + R^\star$ for $l\geq 0.$ We also had $K_1 \le \log\Big(\frac{R}{R^\star}\Big)\Big/\log(1/\tau)$ with $R^\star = \frac{80\lambda \bar{s}\bar{\epsilon}}{\kappa}.$ Using Theorem~\ref{thm:DAoptimalrate}, it follows that:
\begin{equation*}
    \sqrt{2\bar{s}}\big\|\theta^{(k)} - \theta^{\star}\big\|_2 \leq R_k \leq 2^{(K_1 - k)\wedge 0} \tau^{k\wedge K_1} R + R^{\star},
\end{equation*}
whence the result is easily obtained by plugging the value of $R^\star$ and using Lemma~\ref{lem:coordTM} with a union bound argument over all iterations $T$ in order to bound $\bar{\epsilon} = \max_i \|\epsilon_i\|_*$ as defined in Proposition~\ref{prop:DA}.
\end{proof}

\subsection{Closed form computation of the prox operator}\label{apd:closed-form-prox}

In both Sections~\ref{sec:md} and~\ref{sec:da} the optimization methods are defined using a prox operator which involves solving an optimization problem of the following form:
\begin{equation*}
    \argmin_{\|\theta\| \leq R} \langle u, \theta \rangle + \omega(\theta)
\end{equation*}
for some $R > 0$ and $u \in \Theta^*$.

\subsubsection{Vanilla/Group-sparse case}

We consider the group sparse case where $\Theta \subset \R^{d \times K}$, the groups are the rows of $\theta \in \Theta$ and we use the norm $\|\cdot\| = \|\cdot\|_{1,2}$ and the usual scalar product $\langle u, \theta \rangle = u^\top \theta$. We use the prox function $\omega$ defined as $\omega(\theta) = C \|\theta\|_{p,2}^2 = C \big(\sum_{i=1}^d \|\theta_{i,:}\|_2^p\big)^{2/p}$ with $p = 1 + 1/\log(d)$ and $\theta_{i,:}$ the $i$-th row of $\theta$. Note that, for $K=1$ we retrieve the usual linear learning setting. The setting $K > 1$ can be used, for example, for multiclass classification.

In order to obtain a closed form solution, we start by writing the Lagrangian :
\begin{equation}\label{eq:proxproblem}
    \cL (\theta) = \langle u, \theta \rangle + \omega(\theta) + \lambda (\|\theta\| - R),
\end{equation}
where we introduce the multiplier $\lambda \geq 0$. We initially assume the latter known and try to find a critical point for $\cL$ that is $\theta \in \R^{d\times K}$ such that :
\begin{equation*}
    \partial\cL (\theta) = u + \nabla \omega(\theta) + \partial \|\cdot\|_{p,2}(\theta) \ni 0,
\end{equation*}
where we denoted $\partial \|\cdot\|_{1,2}(\theta)$ the subdifferential of the norm $\|\cdot\|_{1,2}$ since the latter is not differentiable whenever $\theta_{i,:} = 0$ for some $i \in \setint{d}$.

Defining the function $h_{\alpha}(\theta) : \R^{d\times K} \to \R^{d\times K}$ such that $h_{\alpha}(\theta)_{i,j} = \frac{\theta_{i,j}}{\|\theta_{i,:}\|_2^{\alpha}}$ for $\theta$ such that $\theta_{i,:} \neq 0$ for all $i$, we can write for such $\theta$ :
\begin{equation*}
    \nabla \omega(\theta) = 2C\|\theta\|_{p,2}^{2-p} h_{2-p}(\theta) \quad \text{ and }\quad \partial \|\cdot\|_{1,2}(\theta) = h_1(\theta).
\end{equation*}
When $\theta_{i,:} = 0$ for some $i$, a subgradient of $\|\cdot\|_{1,2}$ can be obtained by using this definition and plugging any subunit vector for index $i$. Assuming that $\theta_{i,:} \neq 0$ for all $i\in\setint{d}$, a critical point of the Lagrangian must satisfy for all $i,j$ :
\begin{equation*}
    u_{i,j} + \theta_{i,j} \Big( 2C \Big( \frac{\|\theta\|_{p,2}}{\|\theta_{i,:}\|_2} \Big)^{2-p} + \frac{\lambda}{\|\theta_{i,:}\|_2} \Big) = 0.
\end{equation*}
From here, a quick computation yields for all $i$ that :
\begin{equation*}
    \|\theta_{i,:}\|_2 = \frac{\|u_{i,:}\|_2}{  \Big( 2C \Big( \frac{\|\theta\|_{p,2}}{\|\theta_{i,:}\|_2} \Big)^{2-p} + \frac{\lambda}{\|\theta_{i,:}\|_2} \Big)} \quad \text{and hence} \quad 2C \|\theta\|_{p,2}^{2-p} \|\theta_{i,:}\|_2^{p-1}  = \|u_{i,:}\|_2 - \lambda. 
\end{equation*}
Notice that this equality cannot hold when $\|u_{i,:}\|_2 < \lambda$, in this case, we deduce that $\theta_{i,:} = 0$ which satisfies the critical point condition. This leads to the relation :
\begin{equation*}
    \theta_{i,j} = \frac{- \beta_i u_{i,j}}{ 2C \Big( \frac{\|\theta\|_{p,2}}{\|\theta_{i,:}\|_2} \Big)^{2-p} + \frac{\lambda}{\|\theta_{i,:}\|_2} }\quad \text{ with $\beta_i = \ind{\|u_{i,:}\|_2 > \lambda}$} .
\end{equation*}
From here, it is easy to figure out that $\|\theta\|_{p,2} = \Big( \sum_{i=1}^d \beta_i \Big( \frac{\|u_{i,:}\|_2 - \lambda}{2C} \Big)^{p/(p-1)} \Big)^{(p-1)/p}$. All computations are now possible knowing $\lambda$.

To find the latter's value, we plug the formula we have  for $\theta$ into the constraint $\|\theta\|_{1,2} \leq R$. After a few manipulations, we find the constraint is satisfied for $\lambda$ such that :
\begin{equation*}
    \frac{1}{2C} \Big( \sum_{i=1}^d \beta_i \big( \|u_{i,:}\|_2 - \lambda \big)^{1/(p-1)} \Big) \Big( \sum_{i=1}^d \beta_i \big( \|u_{i,:}\|_2 - \lambda \big)^{p/(p-1)} \Big)^{(p-2)/p} \leq R,
\end{equation*}
(recall that the $\beta_i$s also depend on $\lambda$). It only remains to choose the smallest $\lambda \geq 0$ such that the above inequality holds. 
\subsubsection{Low-rank matrix case}

In the low-rank matrix case the parameter space is $\Theta \subset \R^{p\times q}$ and the norm $\|\cdot\|$ represents the nuclear norm $\|\theta\| = \|\sigma(\theta)\|_1$ with $\sigma(\theta).$ The d.-g.f./proximal function is the scaled squared $p$-Schatten norm $\omega(\theta) = C\|\sigma(\theta)\|_p^2$ with $p = 1 + \frac{1}{12\log(q)} = 1+r$ and the scalar product is defined as $\langle u, v \rangle = \Tr(u^\top v)$.

We introduce the notation $h_r(\theta) = \|\sigma(\theta)\|_{1+r}$. Letting $UDV^\top$ denote an SVD of $\theta \in \R^{p\times q}$, the gradient of $h_r$ at $\theta$ for $r > 0$ is given by $\nabla h_r(\theta) = U\Big( \frac{D}{h_r(\theta)} \Big)^r V^\top$. The nuclear norm $h_0$ is not differentiable but a subgradient is given by $\partial h_0 (\theta) = UV^\top + W$ for any $W$ such that $\|W\|_{\op} \leq 1$ and $ U U^\top W = 0$ and $W V V^\top = 0$ (see ~\cite{watson1992characterization}).

In order to define the $\prox$ operator in this setting we need to solve problem~\eqref{eq:proxproblem} again which amounts to finding a critical point $\theta = UDV^\top$ such that :
\begin{equation*}
    u + 2C h_r(\theta)^{1-r}U D^r V^\top  + \lambda U V^\top \ni 0.
\end{equation*}
We define $\theta$ by choosing $U$ and $V$ such that $u = U D_u V^\top$ is an SVD of $u$. Thanks to this choice, it only remains to choose $D = \diag(\sigma(\theta))$ properly in order to have :
\begin{equation*}
    \sigma(u) + 2C \|\sigma(\theta)\|_{1+r}^{1-r}\sigma(\theta)^r + \lambda (\ind{\sigma(\theta) \neq 0} + w) = 0,
\end{equation*}
where the power $\sigma(\theta)^r$ is computed coordinatewise, $\ind{\sigma(\theta) \neq 0}$ is the indicator vector of non zero coordinates of $\sigma(\theta)$ and for some vector $w$ such that $|w_j| \leq 1$ for all $j$ supported on the coordinates where $\sigma(\theta) = 0$.

The problem then becomes analogous to finding the proximal operator for Vanilla sparsity and after some computations we find that the solution is given by :
\begin{equation*}
    \sigma(\theta) = -\frac{\softhresh_\lambda (u)^{1/r}}{ 2C \|\softhresh_\lambda (u)\|_{(1+r)/r}^{(1-r)/r}},
\end{equation*}
where the soft threshold operator is defined by $\softhresh_\lambda(u)_j = \sign(u_j)\max(0, |u_j| - \lambda)$ and $\lambda$ is the smallest real number such that :
\begin{equation*}
    \frac{1}{2C} \|\softhresh_\lambda(u)\|_{1/r}^{1/r} \cdot \|\softhresh_\lambda(u)\|_{(r-1)/r}^{(r+1)/r} \leq R.
\end{equation*}

\subsection{Proof of Proposition~\ref{prop:spectralMOM}}
\label{proof:spectralMOM}

Let $B_1, \dots, B_K$ be a partition of $\setint{n}$ into disjoint equal sized blocks. We assume that the number of outliers is $|\cO| \leq (1 - \varepsilon)K/2$ where $0 < \varepsilon < 1$ will be fixed later. Let $\cK = \{k\in \setint{K} \: : \: B_k \cap \cO = \emptyset\}$ be the set of outlier-free blocks. Denote the block means for $j \in \setint{K}$ as 
\begin{equation*}
    \xi^{(j)} = \frac{1}{m\chi }\sum_{i \in B_j} \psi(\chi \widetilde{A}_i),
\end{equation*} 
where $\chi$ is temporarily chosen as $\chi = \sqrt{\frac{2m\log(2(p+q)/\delta')}{v(A)}}$ for some $0 < \delta'<1/2$. By applying Corollary 3.1 from~\cite{minsker2018sub}, we obtain that with probability at least $1- \delta'$ 
\begin{equation}
    \big\|\widehat{\mu}^{(k)} - \mu\big\|_{\op} \leq \sqrt{\frac{2v(A)\log(2(p+q)/\delta'))}{m}}.\label{ineq:minsker}
\end{equation} 
Now let $r_{jl} = \|\widehat{\mu}^{(j)} - \widehat{\mu}^{(l)}\|_{\op}$, denote $r^{(j)}$ the increasingly sorted version of $r_{j:}$ and let $\widehat{j} \in \argmin_j r^{(j)}_{\lceil K/2 \rceil}$ so that $\widehat{\mu} = \widehat{\mu}^{(\widehat{j})}$.

Define the events $E_j = \Big\{\big\|\widehat{\mu}^{(j)} - \mu\big\|_{\op} \leq \sqrt{\frac{2v(A)\log(2(p+q)/\delta'))}{m}}\Big\}$ and assume that we have $\sum_{k=1}^K \ind{E_k} > K/2$ i.e. over half of the block means satisfy Inequality~\eqref{ineq:minsker} simultaneously. Then there exists $j' \in \setint{K}$  such that the block $\widehat{j}$ satisfies
\begin{equation*}
    r^{(\widehat{j})}_{K/2}  = \big\|\widehat{\mu}^{(\widehat{j})} - \widehat{\mu}^{(j')}\big\|_{\op} \leq \big\|\widehat{\mu}^{(\widehat{j})} - \mu\big\|_{\op} + \big\|\widehat{\mu}^{(j')} - \mu\big\|_{\op} \leq 2 \sqrt{\frac{2v(A)\log(2(p+q)/\delta'))}{m}}.
\end{equation*}
Moreover, among the $K/2$ block means closest to $\widehat{\mu}^{(\widehat{j})}$, at least one of them $\widehat{\mu}^{(j'')}$ satisfies $\|\widehat{\mu}^{(j'')} - \mu\|_{\op} \leq \sqrt{\frac{2v(A)\log(2(p+q)/\delta'))}{m}}$ thus we find :
\begin{align*}
    \big\|\widehat{\mu} - \mu\big\|_{\op} = \big\|\widehat{\mu}^{(\widehat{j})} - \mu\big\|_{\op} &\leq \big\|\widehat{\mu}^{(\widehat{j})} - \widehat{\mu}^{(j'')}\big\|_{\op} + \big\|\widehat{\mu}^{(j'')} - \mu\big\|_{\op} \\
    &\leq r^{(\widehat{j})}_{K/2} \!+\! \sqrt{\frac{2v(A)\log(2(p\!+\!q)/\delta'))}{m}} \leq 3 \sqrt{\frac{2v(A)\log(2(p\!+\!q)/\delta'))}{m}}.
\end{align*}
Finally, let us show that $\sum_{k=1}^K \ind{E_k} > K/2$ happens with high probability. Observe that for $j \in \mathcal{K}$ the variables $\ind{\overline{E}_j}$ are stochastically dominated by Bernoulli variables with parameter $\delta'$ so that their sum is stochastically dominated by a Binomial random variable $S := \binomial (|\cK|, \delta')$. We compute :
\begin{align*}
    \Proba\Big(\sum_{k=1}^K \ind{E_k} < K/2\Big) &= \Proba\Big(\sum_{k=1}^K \ind{\overline{E}_k} > K/2\Big) \\ 
    &\leq \Proba\Big(|\cO| + \sum_{k\in \cK} \ind{\overline{E}_k} > K/2\Big) \\ 
    &\leq \Proba(S - \E S > K/2 - |\cO| - \delta' |\cK|) \\ 
    &\leq \Proba(S - \E S > K(\varepsilon - 2 \delta')/2) \\ 
    &\leq \exp (-K(\varepsilon - 2 \delta')^2/2)
\end{align*}
where we used that $|\cO| \leq (1 - \varepsilon)K/2$, $|\cK| \leq K$ and Hoeffding's inequality at the end. Choosing $\varepsilon = 5/6$, $\delta' = 1/4$ and recalling the choice of $K$ and that $m = n/K$ we finish the proof.

\bibliographystyle{plain}%apalike}%
\bibliography{ref}

\begin{thebibliography}{10}

\bibitem{agarwal2012stochastic}
Alekh Agarwal, Sahand Negahban, and Martin~J Wainwright.
\newblock Stochastic optimization and sparse statistical recovery: Optimal
  algorithms for high dimensions.
\newblock {\em Advances in Neural Information Processing Systems}, 25, 2012.

\bibitem{alon1999space}
Noga Alon, Yossi Matias, and Mario Szegedy.
\newblock The space complexity of approximating the frequency moments.
\newblock {\em Journal of Computer and system sciences}, 58(1):137--147, 1999.

\bibitem{audibert2011robust}
Jean-Yves Audibert and Olivier Catoni.
\newblock Robust linear least squares regression.
\newblock {\em The Annals of Statistics}, 39(5):2766--2794, 2011.

\bibitem{bakshi2021robust}
Ainesh Bakshi and Adarsh Prasad.
\newblock Robust linear regression: Optimal rates in polynomial time.
\newblock In {\em Proceedings of the 53rd Annual ACM SIGACT Symposium on Theory
  of Computing}, pages 102--115, 2021.

\bibitem{balakrishnan2017computationally}
Sivaraman Balakrishnan, Simon~S Du, Jerry Li, and Aarti Singh.
\newblock Computationally efficient robust sparse estimation in high
  dimensions.
\newblock In {\em Conference on Learning Theory}, pages 169--212. PMLR, 2017.

\bibitem{bellec2018slope}
Pierre~C Bellec, Guillaume Lecu{\'e}, and Alexandre~B Tsybakov.
\newblock Slope meets lasso: improved oracle bounds and optimality.
\newblock {\em The Annals of Statistics}, 46(6B):3603--3642, 2018.

\bibitem{bickel2009simultaneous}
Peter~J Bickel, Ya’acov Ritov, and Alexandre~B Tsybakov.
\newblock Simultaneous analysis of lasso and dantzig selector.
\newblock {\em The Annals of Statistics}, 37(4):1705--1732, 2009.

\bibitem{blumensath2009iterative}
Thomas Blumensath and Mike~E Davies.
\newblock Iterative hard thresholding for compressed sensing.
\newblock {\em Applied and Computational Harmonic Analysis}, 27(3):265--274,
  2009.

\bibitem{blumensath2010normalized}
Thomas Blumensath and Mike~E Davies.
\newblock Normalized iterative hard thresholding: Guaranteed stability and
  performance.
\newblock {\em IEEE Journal of Selected Topics in Signal Processing},
  4(2):298--309, 2010.

\bibitem{bogdan2015slope}
Ma{\l}gorzata Bogdan, Ewout Van Den~Berg, Chiara Sabatti, Weijie Su, and
  Emmanuel~J Cand{\`e}s.
\newblock Slope—adaptive variable selection via convex optimization.
\newblock {\em The Annals of Applied Statistics}, 9(3):1103, 2015.

\bibitem{bondell2008simultaneous}
Howard~D Bondell and Brian~J Reich.
\newblock Simultaneous regression shrinkage, variable selection, and supervised
  clustering of predictors with oscar.
\newblock {\em Biometrics}, 64(1):115--123, 2008.

\bibitem{buhlmann2011statistics}
Peter B{\"u}hlmann and Sara Van De~Geer.
\newblock {\em Statistics for High-Dimensional Data: Methods, Theory and
  Applications}.
\newblock Springer Science \& Business Media, 2011.

\bibitem{bunea2007sparsity}
Florentina Bunea, Alexandre Tsybakov, and Marten Wegkamp.
\newblock Sparsity oracle inequalities for the lasso.
\newblock {\em Electronic Journal of Statistics}, 1:169--194, 2007.

\bibitem{candes2011tight}
Emmanuel~J Cand{\`e}s and Yaniv Plan.
\newblock Tight oracle inequalities for low-rank matrix recovery from a minimal
  number of noisy random measurements.
\newblock {\em IEEE Transactions on Information Theory}, 57(4):2342--2359,
  2011.

\bibitem{candes2009exact}
Emmanuel~J Cand{\`e}s and Benjamin Recht.
\newblock Exact matrix completion via convex optimization.
\newblock {\em Foundations of Computational mathematics}, 9(6):717--772, 2009.

\bibitem{castillo2015bayesian}
Isma{\"e}l Castillo, Johannes Schmidt-Hieber, and Aad van~der Vaart.
\newblock {Bayesian linear regression with sparse priors}.
\newblock {\em The Annals of Statistics}, 43(5):1986--2018, 2015.

\bibitem{catoni2012challenging}
Olivier Catoni.
\newblock Challenging the empirical mean and empirical variance: a deviation
  study.
\newblock In {\em Annales de l'Institut Henri Poincar{\'e}, Probabilit{\'e}s et
  Statistiques}, volume~48, pages 1148--1185. Institut Henri Poincar{\'e},
  2012.

\bibitem{pmlr-v28-chen13h}
Yudong Chen, Constantine Caramanis, and Shie Mannor.
\newblock Robust sparse regression under adversarial corruption.
\newblock In {\em Proceedings of the 30th International Conference on Machine
  Learning}, pages 774--782. PMLR, 2013.

\bibitem{cherapanamjeri2020optimal}
Yeshwanth Cherapanamjeri, Efe Aras, Nilesh Tripuraneni, Michael~I Jordan,
  Nicolas Flammarion, and Peter~L Bartlett.
\newblock Optimal robust linear regression in nearly linear time.
\newblock {\em arXiv preprint arXiv:2007.08137}, 2020.

\bibitem{pmlr-v99-cherapanamjeri19b}
Yeshwanth Cherapanamjeri, Nicolas Flammarion, and Peter~L. Bartlett.
\newblock Fast mean estimation with sub-gaussian rates.
\newblock In {\em Conference on Learning Theory}, pages 786--806. PMLR, 2019.

\bibitem{cormen2009introduction}
Thomas~H. Cormen, Charles~E. Leiserson, Ronald~L. Rivest, and Clifford Stein.
\newblock {\em Introduction to algorithms}.
\newblock MIT press, 2009.

\bibitem{dalalyan2019outlier}
Arnak Dalalyan and Philip Thompson.
\newblock {Outlier-robust estimation of a sparse linear model using $\ell_1
  $-penalized Huber's $ M $-estimator}.
\newblock {\em Advances in Neural Information Processing Systems}, 32, 2019.

\bibitem{d2004direct}
Alexandre d'Aspremont, Laurent Ghaoui, Michael Jordan, and Gert Lanckriet.
\newblock A direct formulation for sparse pca using semidefinite programming.
\newblock {\em Advances in Neural Information Processing Systems}, 17, 2004.

\bibitem{Depersin2019RobustSE}
Jules Depersin and Guillaume Lecu{\'e}.
\newblock Robust subgaussian estimation of a mean vector in nearly linear time.
\newblock {\em arXiv preprint arXiv:1906.03058}, 2019.

\bibitem{diakonikolas2019robust}
Ilias Diakonikolas, Gautam Kamath, Daniel Kane, Jerry Li, Ankur Moitra, and
  Alistair Stewart.
\newblock Robust estimators in high-dimensions without the computational
  intractability.
\newblock {\em SIAM Journal on Computing}, 48(2):742--864, 2019.

\bibitem{diakonikolas2020outlier}
Ilias Diakonikolas, Daniel~M Kane, and Ankit Pensia.
\newblock Outlier robust mean estimation with subgaussian rates via stability.
\newblock {\em Advances in Neural Information Processing Systems},
  33:1830--1840, 2020.

\bibitem{diakonikolas2017statistical}
Ilias Diakonikolas, Daniel~M Kane, and Alistair Stewart.
\newblock Statistical query lower bounds for robust estimation of
  high-dimensional gaussians and gaussian mixtures.
\newblock In {\em 2017 IEEE 58th Annual Symposium on Foundations of Computer
  Science (FOCS)}, pages 73--84. IEEE, 2017.

\bibitem{donoho2000high}
David~L Donoho et~al.
\newblock High-dimensional data analysis: The curses and blessings of
  dimensionality.
\newblock {\em AMS Math Challenges Lecture}, 1(2000):32, 2000.

\bibitem{duchi2010composite}
John~C Duchi, Shai Shalev-Shwartz, Yoram Singer, and Ambuj Tewari.
\newblock Composite objective mirror descent.
\newblock In {\em Conference on Learning Theory}, volume~10, pages 14--26.
  Citeseer, 2010.

\bibitem{fan2021shrinkage}
Jianqing Fan, Weichen Wang, and Ziwei Zhu.
\newblock A shrinkage principle for heavy-tailed data: High-dimensional robust
  low-rank matrix recovery.
\newblock {\em The Annals of statistics}, 49(3):1239, 2021.

\bibitem{filzmoser2021robust}
Peter Filzmoser and Klaus Nordhausen.
\newblock Robust linear regression for high-dimensional data: An overview.
\newblock {\em Wiley Interdisciplinary Reviews: Computational Statistics},
  13(4):e1524, 2021.

\bibitem{gaiffas2022robust}
St{\'e}phane Ga{\"\i}ffas and Ibrahim Merad.
\newblock Robust supervised learning with coordinate gradient descent.
\newblock {\em arXiv preprint arXiv:2201.13372}, 2022.

\bibitem{hampel1971}
Frank~R Hampel.
\newblock {A General Qualitative Definition of Robustness}.
\newblock {\em The Annals of Mathematical Statistics}, 42(6):1887 -- 1896,
  1971.

\bibitem{hastie2015statistical}
Trevor Hastie, Robert Tibshirani, and Martin Wainwright.
\newblock Statistical learning with sparsity.
\newblock {\em Monographs on Statistics and Applied Probability}, 143:143,
  2015.

\bibitem{pmlr-v97-holland19a}
Matthew Holland and Kazushi Ikeda.
\newblock Better generalization with less data using robust gradient descent.
\newblock In {\em International Conference on Machine Learning}, pages
  2761--2770. PMLR, 2019.

\bibitem{Hopkins2018MeanEW}
Samuel~B Hopkins.
\newblock Mean estimation with sub-{G}aussian rates in polynomial time.
\newblock {\em arXiv: Statistics Theory}, 2018.

\bibitem{hsu2016loss}
Daniel Hsu and Sivan Sabato.
\newblock Loss minimization and parameter estimation with heavy tails.
\newblock {\em The Journal of Machine Learning Research}, 17(1):543--582, 2016.

\bibitem{huang2010variable}
Jian Huang, Joel~L Horowitz, and Fengrong Wei.
\newblock Variable selection in nonparametric additive models.
\newblock {\em The Annals of Statistics}, 38(4):2282, 2010.

\bibitem{huber1964robust}
Peter~J Huber.
\newblock Robust estimation of a location parameter.
\newblock {\em The Annals of Mathematical Statistics}, 35(1):73--101, 1964.

\bibitem{huber2004robust}
Peter~J Huber.
\newblock {\em Robust statistics}, volume 523.
\newblock John Wiley \& Sons, 2004.

\bibitem{jain2016structured}
Prateek Jain, Nikhil Rao, and Inderjit~S Dhillon.
\newblock Structured sparse regression via greedy hard thresholding.
\newblock {\em Advances in Neural Information Processing Systems}, 29, 2016.

\bibitem{jain2014iterative}
Prateek Jain, Ambuj Tewari, and Purushottam Kar.
\newblock On iterative hard thresholding methods for high-dimensional
  $m$-estimation.
\newblock {\em Advances in Neural Information Processing Systems}, 27, 2014.

\bibitem{JERRUM1986169}
Mark~R Jerrum, Leslie~G Valiant, and Vijay~V Vazirani.
\newblock Random generation of combinatorial structures from a uniform
  distribution.
\newblock {\em Theoretical Computer Science}, 43:169--188, 1986.

\bibitem{Juditsky2020SparseRB}
Anatoli Juditsky, Andrei Kulunchakov, and Hlib Tsyntseus.
\newblock {Sparse recovery by reduced variance stochastic approximation}.
\newblock {\em Information and Inference: A Journal of the IMA},
  12(2):851--896, 11 2022.

\bibitem{juditsky2019unifying}
Anatoli Juditsky, Joon Kwon, and {\'E}ric Moulines.
\newblock Unifying mirror descent and dual averaging.
\newblock {\em arXiv preprint arXiv:1910.13742}, 2019.

\bibitem{juditsky2011first}
Anatoli Juditsky, Arkadi Nemirovski, et~al.
\newblock First order methods for nonsmooth convex large-scale optimization, i:
  general purpose methods.
\newblock {\em Optimization for Machine Learning}, 30(9):121--148, 2011.

\bibitem{juditsky2014deterministic}
Anatoli Juditsky and Yuri Nesterov.
\newblock Deterministic and stochastic primal-dual subgradient algorithms for
  uniformly convex minimization.
\newblock {\em Stochastic Systems}, 4(1):44--80, 2014.

\bibitem{koltchinskii2011nuclear}
Vladimir Koltchinskii, Karim Lounici, and Alexandre~B Tsybakov.
\newblock Nuclear-norm penalization and optimal rates for noisy low-rank matrix
  completion.
\newblock {\em The Annals of Statistics}, 39(5):2302--2329, 2011.

\bibitem{lecue2020robust}
Guillaume Lecu{\'e} and Matthieu Lerasle.
\newblock Robust machine learning by median-of-means: theory and practice.
\newblock {\em The Annals of Statistics}, 48(2):906--931, 2020.

\bibitem{lecue2020robust1}
Guillaume Lecu{\'e}, Matthieu Lerasle, and Timloth{\'e}e Mathieu.
\newblock Robust classification via mom minimization.
\newblock {\em Machine Learning}, 109(8):1635--1665, 2020.

\bibitem{lecue2018regularization}
Guillaume Lecu{\'e} and Shahar Mendelson.
\newblock Regularization and the small-ball method i: sparse recovery.
\newblock {\em The Annals of Statistics}, 46(2):611--641, 2018.

\bibitem{lei2020fast}
Zhixian Lei, Kyle Luh, Prayaag Venkat, and Fred Zhang.
\newblock A fast spectral algorithm for mean estimation with sub-gaussian
  rates.
\newblock In {\em Conference on Learning Theory}, pages 2598--2612. PMLR, 2020.

\bibitem{liu2019high}
Liu Liu, Tianyang Li, and Constantine Caramanis.
\newblock {High Dimensional Robust $M$-Estimation: Arbitrary Corruption and
  Heavy Tails}.
\newblock {\em arXiv preprint arXiv:1901.08237}, 2019.

\bibitem{liu2020high}
Liu Liu, Yanyao Shen, Tianyang Li, and Constantine Caramanis.
\newblock High dimensional robust sparse regression.
\newblock In {\em International Conference on Artificial Intelligence and
  Statistics}, pages 411--421. PMLR, 2020.

\bibitem{lounici2008sup}
Karim Lounici.
\newblock Sup-norm convergence rate and sign concentration property of lasso
  and dantzig estimators.
\newblock {\em Electronic Journal of statistics}, 2:90--102, 2008.

\bibitem{lounici2009taking}
Karim Lounici, Massimiliano Pontil, Alexandre~B Tsybakov, and Sara Van De~Geer.
\newblock Taking advantage of sparsity in multi-task learning.
\newblock {\em arXiv preprint arXiv:0903.1468}, 2009.

\bibitem{lugosi2019mean}
G{\'a}bor Lugosi and Shahar Mendelson.
\newblock Mean estimation and regression under heavy-tailed distributions: A
  survey.
\newblock {\em Foundations of Computational Mathematics}, 19(5):1145--1190,
  2019.

\bibitem{lugosi2019near}
G{\'a}bor Lugosi and Shahar Mendelson.
\newblock Near-optimal mean estimators with respect to general norms.
\newblock {\em Probability Theory and Related Fields}, 175(3):957--973, 2019.

\bibitem{lugosi2019regularization}
G{\'a}bor Lugosi and Shahar Mendelson.
\newblock Regularization, sparse recovery, and median-of-means tournaments.
\newblock {\em Bernoulli}, 25(3):2075--2106, 2019.

\bibitem{lugosi2019sub}
G{\'a}bor Lugosi and Shahar Mendelson.
\newblock Sub-gaussian estimators of the mean of a random vector.
\newblock {\em Annals of Statistics}, 47(2):783--794, 2019.

\bibitem{lugosi2021robust}
Gabor Lugosi and Shahar Mendelson.
\newblock Robust multivariate mean estimation: the optimality of trimmed mean.
\newblock {\em The Annals of Statistics}, 49(1):393--410, 2021.

\bibitem{minsker2018sub}
Stanislav Minsker.
\newblock Sub-gaussian estimators of the mean of a random matrix with
  heavy-tailed entries.
\newblock {\em The Annals of Statistics}, 46(6A):2871--2903, 2018.

\bibitem{minsker2015geometric}
Stanislav Minsker et~al.
\newblock Geometric median and robust estimation in banach spaces.
\newblock {\em Bernoulli}, 21(4):2308--2335, 2015.

\bibitem{minsker2022robust}
Stanislav Minsker, Mohamed Ndaoud, and Lang Wang.
\newblock Robust and tuning-free sparse linear regression via square-root
  slope.
\newblock {\em arXiv preprint arXiv:2210.16808}, 2022.

\bibitem{necoara2019linear}
Ion Necoara, Yu~Nesterov, and Francois Glineur.
\newblock Linear convergence of first order methods for non-strongly convex
  optimization.
\newblock {\em Mathematical Programming}, 175:69--107, 2019.

\bibitem{negahban2011estimation}
Sahand Negahban and Martin~J Wainwright.
\newblock Estimation of (near) low-rank matrices with noise and
  high-dimensional scaling.
\newblock {\em The Annals of Statistics}, 39(2):1069--1097, 2011.

\bibitem{negahban2012restricted}
Sahand Negahban and Martin~J Wainwright.
\newblock Restricted strong convexity and weighted matrix completion: Optimal
  bounds with noise.
\newblock {\em The Journal of Machine Learning Research}, 13(1):1665--1697,
  2012.

\bibitem{negahban2012unified}
Sahand~N Negahban, Pradeep Ravikumar, Martin~J Wainwright, and Bin Yu.
\newblock A unified framework for high-dimensional analysis of $ m $-estimators
  with decomposable regularizers.
\newblock {\em Statistical Science}, 27(4):538--557, 2012.

\bibitem{nemirovskij1983problem}
Arkadij~Semenovi{\v{c}} Nemirovskij and David~Borisovich Yudin.
\newblock {\em Problem Complexity and Method Efficiency in Optimization}.
\newblock Wiley-Interscience, 1983.

\bibitem{nesterov2015quasi}
Yu~Nesterov and Vladimir Shikhman.
\newblock Quasi-monotone subgradient methods for nonsmooth convex minimization.
\newblock {\em Journal of Optimization Theory and Applications},
  165(3):917--940, 2015.

\bibitem{nesterov2009primal}
Yurii Nesterov.
\newblock Primal-dual subgradient methods for convex problems.
\newblock {\em Mathematical Programming}, 120(1):221--259, 2009.

\bibitem{nesterov2013first}
Yurii Nesterov and Arkadi Nemirovski.
\newblock On first-order algorithms for l1/nuclear norm minimization.
\newblock {\em Acta Numerica}, 22:509--575, 2013.

\bibitem{scikit-learn}
F.~Pedregosa, G.~Varoquaux, A.~Gramfort, V.~Michel, B.~Thirion, O.~Grisel,
  M.~Blondel, P.~Prettenhofer, R.~Weiss, V.~Dubourg, J.~Vanderplas, A.~Passos,
  D.~Cournapeau, M.~Brucher, M.~Perrot, and E.~Duchesnay.
\newblock Scikit-learn: Machine learning in {P}ython.
\newblock {\em Journal of Machine Learning Research}, 12:2825--2830, 2011.

\bibitem{pensia2020robust}
Ankit Pensia, Varun Jog, and Po-Ling Loh.
\newblock Robust regression with covariate filtering: Heavy tails and
  adversarial contamination.
\newblock {\em arXiv preprint arXiv:2009.12976}, 2020.

\bibitem{HeavyTails}
Adarsh Prasad, Arun~Sai Suggala, Sivaraman Balakrishnan, and Pradeep Ravikumar.
\newblock Robust estimation via robust gradient estimation.
\newblock {\em Journal of the Royal Statistical Society: Series B (Statistical
  Methodology)}, 82(3):601--627, 2020.

\bibitem{raskutti2010restricted}
Garvesh Raskutti, Martin~J Wainwright, and Bin Yu.
\newblock Restricted eigenvalue properties for correlated gaussian designs.
\newblock {\em The Journal of Machine Learning Research}, 11:2241--2259, 2010.

\bibitem{rohde2011estimation}
Angelika Rohde and Alexandre~B Tsybakov.
\newblock Estimation of high-dimensional low-rank matrices.
\newblock {\em The Annals of Statistics}, 39(2):887--930, 2011.

\bibitem{sasai2022robust}
Takeyuki Sasai.
\newblock Robust and sparse estimation of linear regression coefficients with
  heavy-tailed noises and covariates.
\newblock {\em arXiv preprint arXiv:2206.07594}, 2022.

\bibitem{sasai2022outlier}
Takeyuki Sasai and Hironori Fujisawa.
\newblock Outlier robust and sparse estimation of linear regression
  coefficients.
\newblock {\em arXiv preprint arXiv:2208.11592}, 2022.

\bibitem{sedghi2014multi}
Hanie Sedghi, Anima Anandkumar, and Edmond Jonckheere.
\newblock Multi-step stochastic admm in high dimensions: Applications to sparse
  optimization and matrix decomposition.
\newblock {\em Advances in Neural Information Processing Systems}, 27, 2014.

\bibitem{shalev2011stochastic}
Shai Shalev-Shwartz and Ambuj Tewari.
\newblock {Stochastic Methods for $\ell_1$-Regularized Loss Minimization}.
\newblock {\em The Journal of Machine Learning Research}, 12:1865--1892, 2011.

\bibitem{shen2017tight}
Jie Shen and Ping Li.
\newblock A tight bound of hard thresholding.
\newblock {\em The Journal of Machine Learning Research}, 18(1):7650--7691,
  2017.

\bibitem{su2016slope}
Weijie Su and Emmanuel Cand{\`e}s.
\newblock Slope is adaptive to unknown sparsity and asymptotically minimax.
\newblock {\em The Annals of Statistics}, 44(3):1038--1068, 2016.

\bibitem{tibshirani1996regression}
Robert Tibshirani.
\newblock Regression shrinkage and selection via the lasso.
\newblock {\em Journal of the Royal Statistical Society: Series B
  (Methodological)}, 58(1):267--288, 1996.

\bibitem{tropp2015introduction}
Joel~A. Tropp.
\newblock An introduction to matrix concentration inequalities.
\newblock {\em Foundations and Trends® in Machine Learning}, 8(1-2):1--230,
  2015.

\bibitem{van2008high}
Sara~A Van~de Geer.
\newblock High-dimensional generalized linear models and the lasso.
\newblock {\em The Annals of Statistics}, 36(2):614--645, 2008.

\bibitem{van2009conditions}
Sara~A. van~de Geer and Peter B{\"u}hlmann.
\newblock {On the conditions used to prove oracle results for the Lasso}.
\newblock {\em Electronic Journal of Statistics}, 3(none):1360--1392, 2009.

\bibitem{vershynin2018high}
Roman Vershynin.
\newblock {\em High-Dimensional Probability: An Introduction with Applications
  in Data Science}, volume~47.
\newblock Cambridge university press, 2018.

\bibitem{watson1992characterization}
G~Alistair Watson.
\newblock Characterization of the subdifferential of some matrix norms.
\newblock {\em Linear Algebra and its Applications}, 170:33--45, 1992.

\bibitem{yin2018byzantine}
Dong Yin, Yudong Chen, Ramchandran Kannan, and Peter Bartlett.
\newblock Byzantine-robust distributed learning: Towards optimal statistical
  rates.
\newblock In {\em International Conference on Machine Learning}, pages
  5650--5659. PMLR, 2018.

\bibitem{yuan2006model}
Ming Yuan and Yi~Lin.
\newblock Model selection and estimation in regression with grouped variables.
\newblock {\em Journal of the Royal Statistical Society: Series B (Statistical
  Methodology)}, 68(1):49--67, 2006.

\bibitem{zhang2008sparsity}
Cun-Hui Zhang and Jian Huang.
\newblock The sparsity and bias of the lasso selection in high-dimensional
  linear regression.
\newblock {\em The Annals of Statistics}, 36(4):1567--1594, 2008.

\bibitem{zhang2009some}
Tong Zhang.
\newblock Some sharp performance bounds for least squares regression with l1
  regularization.
\newblock {\em The Annals of Statistics}, 37(5A):2109--2144, 2009.

\bibitem{zhao2006model}
Peng Zhao and Bin Yu.
\newblock On model selection consistency of lasso.
\newblock {\em The Journal of Machine Learning Research}, 7:2541--2563, 2006.

\bibitem{zou2006adaptive}
Hui Zou.
\newblock The adaptive lasso and its oracle properties.
\newblock {\em Journal of the American statistical association},
  101(476):1418--1429, 2006.

\end{thebibliography}

\end{document}